\theoremstyle{plain}   
\newtheorem{theorem}{Theorem}[section]
\newtheorem{lemma}[theorem]{Lemma}
\newtheorem{definition}[theorem]{Definition}
\theoremstyle{remark}  
\newtheorem{remark}[theorem]{Remark}
\newtheorem{example}[theorem]{Example}
\newcommand{\wstar}{{w^{*}}}
\DeclareMathOperator{\SGLD}{SGLD}
\crefname{equation}{}{} 
\NewDocumentCommand{\task}{O{}}{\mathbf{t}_{#1}}
\NewDocumentCommand{\taskdist}{O{}}{\mathcal{T}_{#1}}
\newcommand{\pilesub}[1]{\texttt{#1}}
\DeclareMathOperator{\Hom}{Hom}
\newtcbox{\tokenbox}{%
  fontupper=\ttfamily,
  colback=gray!10,
  boxrule=0pt,             
  arc=2pt,
  boxsep=0pt,
  frame empty,
  left=2pt,
  right=2pt,
  top=2pt,                 
  bottom=2pt,              
  nobeforeafter,
  valign=center,
  baseline,
  tcbox raise base,
  verbatim,                
  before upper={\vphantom{Äg}},
}
\newtcbox{\tokenboxline}{%
  fontupper=\ttfamily,
  colback=gray!10,
  boxrule=0.5pt,           
  arc=2pt,
  boxsep=0pt,
  left=2pt,
  right=2pt,
  top=2pt,                 
  bottom=2pt,              
  nobeforeafter,
  valign=center,
  baseline,
  tcbox raise base,
  verbatim,
  before upper={\vphantom{Äg}},
}
\title{Modes of Sequence Models and Learning Coefficients}
\author{%
Zhongtian Chen\\
    University of Melbourne\\
\And
  Daniel Murfet\\
    Timaeus\\
}
\begin{document}

\maketitle

\begin{abstract} We develop a geometric account of sequence modelling that links patterns in the data to measurable properties of the loss landscape in transformer networks. First, we cast conditional sequence distributions into a Hilbert‐space framework and apply tensor decompositions to identify their principal \emph{modes}. Truncating the small‐amplitude modes yields an \emph{effective} data distribution that preserves dominant structure while discarding statistical detail. Second, we show theoretically that Local Learning Coefficient (LLC) estimates are insensitive to modes below a data‐dependent threshold. Consequently, the LLC calculated in practice characterises the geometry of the effective rather than the true distribution. This insight clarifies why reliable LLC estimates can be obtained even when a network parameter is not a strict minimiser of the population loss, and it highlights how the inverse temperature in SGLD acts as a resolution dial on the landscape structure. \end{abstract}

\tableofcontents

\section{Introduction}

Statistical models of sequence distributions form the foundation of our mathematical theories of communication \citep{shannon1948mathematical}, artificial general intelligence \citep{hutter2005universal}, and natural language. The empirical success of transformer neural networks has brought sequence models once again to the center of both theoretical and practical aspects of artificial intelligence \citep{vaswani2017attention}. Recent work has suggested that the \emph{geometry} of the loss landscape in these networks may be deeply related to the way that the models compute \citep{chen2023tms1,wang2024differentiationspecializationattentionheads,hoogland2024developmental}, and therefore understanding this geometry and how it relates to structure in the data distribution is a promising path towards interpretability and applications in AI safety \citep{lehalleur2025eataialignment}.

Within the framework of singular learning theory (SLT) \citep{watanabeAlgebraicGeometryStatistical2009} the main way we currently access this geometry empirically is through approximate Bayesian posterior sampling using stochastic gradient Langevin dynamics (SGLD) \citep{wellingBayesianLearningStochastic2011}. This forms the basis for local learning coefficient (LLC) estimation \citep{quantifdegen} and related techniques like susceptibilities \citep{suscep}. However there are many open conceptual and technical questions about this SGLD-based approach to studying the geometry of the loss landscape. 

In this paper, we lay foundations for understanding the role of \emph{modes of the data distribution} in shaping our measurements of the geometry of the loss landscape of sequence models like transformers. We do this by developing an \emph{effective theory} of sequence modelling. Our approach uses function spaces and tensor decompositions to identify the fundamental patterns -- which we call \emph{modes} -- in sequence distributions. This framework allows us to perform a principled coarse-graining of the true data distribution by truncating smaller modes while preserving the dominant structures, resulting in what we term the \emph{effective true distribution}.

Our key insight is that the geometry of learning, as captured by LLC estimation, is inherently limited in its sensitivity to the full spectrum of modes. Due to constraints on the sensitivity of the model's gradients to the long tail of modes, LLC estimation primarily measures the geometry of the effective true distribution rather than the complete distribution. This has several implications for how we interpret measurements of landscape geometry in deep sequence models.

The contributions we make in this paper:
\begin{itemize}
\item \textbf{Modal decomposition of sequence distributions}: Building on existing work in the mathematical literature on natural language processing, we use tensor decompositions to formulate a Hilbert space of sequence models with a natural basis. We show how patterns in the data distribution correspond to modes, and how tensor truncation provides a principled approach to coarse-graining.

\item \textbf{LLC estimation measures effective distributions}: We prove that under reasonable assumptions, LLC estimation is insensitive to small modes in the distribution. Specifically, the LLC estimates we perform in practice (e.g., on transformer neural networks) effectively measure the geometry of a coarse-grained distribution rather than the complete distribution.
\end{itemize}

The paper is organized as follows: In Section \ref{section:background}, we review necessary background on tensor algebra and singular value decomposition. Section \ref{section:sequence_models} introduces our formalism for sequence models and their corresponding data distributions. Section~\ref{section:function_space} develops the Hilbert space framework and modal decomposition. Section~\ref{section:examples} makes these abstract definitions concrete with some examples of modes. Then in Section \ref{section:mode_insensitivity} and Section \ref{section:neg} we connect these concepts to Singular Learning Theory and the Local Learning Coefficient and prove our main results.

\section{Background}\label{section:background}

\subsection{Tensor Algebra}\label{section:tensor_algebra}

Throughout all vector spaces are over $\mathbb{R}$, and all tensor products are of $\mathbb{R}$-vector spaces. Given vector spaces $V$ and $W$, let $\Hom_{\mathbb{R}}(V, W)$ denote the space of all linear transformations from $V$ to $W$. There is a canonical isomorphism of vector spaces
\begin{gather}
\Phi: V^* \otimes W \overset{\cong}{\longrightarrow} \Hom_{\mathbb{R}}(V,W)\,,\\
\Phi(\eta \otimes w)(v) = \eta(v) w\,.
\end{gather}
The \emph{free vector space} on a finite set $S$ is the set
\begin{equation}
\mathbb{R}^S = \{ f: S \longrightarrow \mathbb{R} \}
\end{equation}
with the usual vector space structure of addition of functions and multiplication of a function by a scalar. We define an injective map $S \rightarrow \mathbb{R}^S$ sending $s \in S$ to the delta function $\delta_s(t) = \delta_{s,t}$ and henceforth use this injection to identify elements of $S$ with vectors in $\mathbb{R}^S$.

There is a canonical isomorphism of vector spaces
\begin{gather}
\mathbb{R}^S \otimes \mathbb{R}^T \overset{\cong}{\longrightarrow} \mathbb{R}^{S \times T}\\
s \otimes t \longmapsto (s,t)
\end{gather}
and in particular for $k \ge 1$ we have $(\mathbb{R}^S)^{\otimes k} \cong \mathbb{R}^{S^k}$.

\subsection{Singular Value Decomposition}\label{section:background_svd}

For the reader's convenience we give a minimal treatment of SVD that makes explicit the fact that SVD is an operation on linear transformations between inner product spaces. Let $V, W$ be finite-dimensional inner product spaces over $\mathbb{R}$ with pairings denoted $\langle -, - \rangle_V$ and $\langle -, - \rangle_W$ respectively. Given a linear transformation $A: V \longrightarrow W$ the adjoint $A^\dagger: W \longrightarrow V$ is uniquely defined by
\[
\langle Av, w \rangle_W = \langle v, A^\dagger w \rangle_V \qquad \forall v \in V, w \in W\,.
\]
Then $A^\dagger A, A A^\dagger$ are self-adjoint operators on $V, W$ respectively and thus we may apply the spectral theorem to obtain an orthonormal basis of eigenvectors for these spaces. Let $\{ v_\alpha \}_\alpha$ be an orthonormal basis of $V$ consisting of eigenvectors of $A^\dagger A$ with eigenvalues $\lambda_\alpha \ge 0$. We can divide the indices $\alpha$ into those with positive and zero eigenvalues
\[
\Lambda^{+} = \{ \alpha \mid \lambda_\alpha \neq 0 \}\,, \qquad \Lambda^0 = \{ \alpha \mid \lambda_\alpha = 0 \}\,.
\]
We write $\Lambda = \Lambda^+ \cup \Lambda^0$ for the complete set of indices, so $|\Lambda| = \dim V$. Then we define
\begin{equation}
u_\alpha = \lambda_\alpha^{-1/2} A v_\alpha\,, \qquad s_\alpha = \lambda_\alpha^{1/2}\,, \qquad \alpha \in \Lambda^+\,.
\end{equation}

\begin{lemma} The set $\{ u_\alpha \}_{\alpha \in \Lambda^+}$ is an orthonormal basis in $W$ for the image of $A$. We have
\begin{equation}
A v_\alpha = s_\alpha u_\alpha\,, \quad A^\dagger u_\alpha = s_\alpha v_\alpha\,.
\end{equation}
We call $s_\alpha$ the \emph{singular values}, $v_\alpha$ the \emph{right singular vectors} and $u_\alpha$ the \emph{left singular vectors}.
\end{lemma}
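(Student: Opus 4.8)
The plan is to verify the three assertions of the lemma --- orthonormality of $\{u_\alpha\}_{\alpha\in\Lambda^+}$, the claim that this set spans $\operatorname{im}(A)$, and the two intertwining relations --- by direct computation using only the definition $u_\alpha = \lambda_\alpha^{-1/2} A v_\alpha$ together with the eigenvalue equations $A^\dagger A v_\alpha = \lambda_\alpha v_\alpha$ and the orthonormality of $\{v_\alpha\}_{\alpha\in\Lambda}$.

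First I would establish orthonormality. For $\alpha,\beta\in\Lambda^+$, moving $A$ across the pairing via the adjoint property and then applying the eigenvalue equation gives
\[
\langle u_\alpha, u_\beta\rangle_W = \lambda_\alpha^{-1/2}\lambda_\beta^{-1/2}\langle Av_\alpha, Av_\beta\rangle_W = \lambda_\alpha^{-1/2}\lambda_\beta^{-1/2}\langle v_\alpha, A^\dagger A v_\beta\rangle_V = \lambda_\alpha^{-1/2}\lambda_\beta^{1/2}\langle v_\alpha, v_\beta\rangle_V = \delta_{\alpha\beta}.
\]
The relation $Av_\alpha = s_\alpha u_\alpha$ for $\alpha\in\Lambda^+$ is then immediate, since $s_\alpha u_\alpha = \lambda_\alpha^{1/2}\lambda_\alpha^{-1/2}Av_\alpha = Av_\alpha$, and likewise $A^\dagger u_\alpha = \lambda_\alpha^{-1/2} A^\dagger A v_\alpha = \lambda_\alpha^{-1/2}\lambda_\alpha v_\alpha = s_\alpha v_\alpha$. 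For the spanning statement, the key observation is that $\|A v_\alpha\|_W^2 = \langle v_\alpha, A^\dagger A v_\alpha\rangle_V = \lambda_\alpha$, so $Av_\alpha = 0$ exactly when $\alpha\in\Lambda^0$; since $\{v_\alpha\}_{\alpha\in\Lambda}$ is a basis of $V$, the image of $A$ is spanned by $\{A v_\alpha\}_{\alpha\in\Lambda} = \{s_\alpha u_\alpha\}_{\alpha\in\Lambda^+}$, and as each $s_\alpha\neq 0$ and the $u_\alpha$ are already orthonormal, this set is an orthonormal basis of $\operatorname{im}(A)$.

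I do not expect any real obstacle here: each step is a one-line manipulation of inner products. The only point that warrants a moment's care is that $\{u_\alpha\}_{\alpha\in\Lambda^+}$ genuinely \emph{exhausts} the image rather than merely lying inside it, which is precisely what completeness of the eigenbasis $\{v_\alpha\}_{\alpha\in\Lambda}$ of $V$ provides --- this is why the setup is careful to index $\Lambda^0$ alongside $\Lambda^+$. One should also note in passing that $u_\alpha$ is well defined because $\lambda_\alpha>0$ for $\alpha\in\Lambda^+$ by definition, and that the eigenvalues are nonnegative in the first place because $\langle v_\alpha, A^\dagger A v_\alpha\rangle_V = \|A v_\alpha\|_W^2\geq 0$.
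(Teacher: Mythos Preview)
Your proposal is correct and follows essentially the same route as the paper: both arguments compute $\langle u_\alpha,u_\beta\rangle_W$ by passing one copy of $A$ across the pairing via the adjoint and invoking $A^\dagger A v_\beta=\lambda_\beta v_\beta$, and both read off the intertwining relations directly from the definitions. If anything your version is slightly more complete, since you argue explicitly that $\{u_\alpha\}_{\alpha\in\Lambda^+}$ spans $\operatorname{im}(A)$ (via $\|Av_\alpha\|_W^2=\lambda_\alpha$ and completeness of the $v_\alpha$), whereas the paper leaves this implicit.
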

\begin{proof}
If $\lambda_\alpha \neq 0$ define $u'_\alpha = A v_\alpha$. Then
\begin{gather*}
A^\dagger u'_\alpha = A^\dagger A v_\alpha = \lambda_\alpha v_\alpha
\end{gather*}
and hence
\[
AA^\dagger u'_\alpha = \lambda_\alpha A v_\alpha = \lambda_\alpha u'_\alpha
\]
Moreover by adjointness
\[
\langle u'_\alpha, u'_\beta \rangle_W = \langle u'_\alpha, A v_\beta \rangle_W = \langle A^\dagger u'_\alpha, v_\beta \rangle_V = \lambda_\alpha \langle v_\alpha, v_\beta \rangle_V = \lambda_\alpha \delta_{\alpha, \beta}\,.
\]
If we set $u_\alpha = \lambda_\alpha^{-1/2} u'_\alpha = \lambda_\alpha^{-1/2} A v_\alpha$ then the claims follow.
\end{proof}

\subsection{Learning Coefficient}\label{section:learning_coefficient}

In this section we formulate the problem of learning the conditional true distribution $q(y|x)$ of sequences in the context of singular learning theory (SLT) \cite{watanabeAlgebraicGeometryStatistical2009}. We assume $k,l > 0$ and $x \in \Sigma^k$, $y \in \Sigma^l$. Given a set of $n$ independent and identically distributed samples $D_n = \{(x_i, y_i)\}_{i=1}^n$ drawn from the true data-generating distribution $q(x,y)$, we define the sample negative log likelihood $L_n(w)$ associated with a model $p(y|x,w)$ and its theoretical counterpart $L$ as
\begin{align*}
L_n(w) &= -\frac{1}{n} \sum_{i=1}^n \log p(y_i|x_i, w)\,,\\
L(w) &= -\mathbb{E}_{q(x,y)} \big[\log p(y|x, w)\big]\,.
\end{align*}
Under some technical hypotheses the \emph{local learning coefficient} (LLC) \citep{watanabeAlgebraicGeometryStatistical2009} \citep{quantifdegen} can be defined at a local minimum $\wstar$ of $L(w)$ is the \emph{volume scaling exponent} of the loss near $\wstar$ in the following sense. Let $B(\wstar)$ be a closed ball centered on $\wstar$ such that for all $w\in B(\wstar)$, $L(w) \ge L(\wstar)$. For sufficiently small $\epsilon$, consider the set $B(\wstar, \epsilon) = \{w \in B(\wstar) \mid L(w) - L(\wstar) < \epsilon\}$, the volume of which we define as $V(\epsilon):=\int_{B(\wstar,\epsilon)} \varphi(w)\,dw$ for some positive prior $\varphi(w)$. It can be shown that there exist a positive rational number $\lambda(\wstar)$ and a positive integer $m(\wstar)$ such that $V(\epsilon) \propto \epsilon^{\lambda(\wstar)} (-\log \epsilon)^{m(\wstar) -1}$ asymptotically as $\epsilon \to 0$. The LLC is defined to be the rational number $\lambda(\wstar)$.

For regular statistical models, $\lambda(w^*) = d/2$, where $d$ is the dimension of the parameter space. However, in general $\lambda(w^*)$ can be smaller than $d/2$. The LLC plays a crucial role in the asymptotic expansion of the local free energy
\begin{equation}
F_n(B_\gamma(w^*)) = -\log \int_{B(w^*,\gamma)} \exp\{-nL_n(w)\}\varphi(w) dw
\end{equation}
The asymptotic expansion of the local free energy is given by Watanabe's free energy formula \cite[\S 6.3]{watanabe2018}
\begin{equation}
F_n(B_\gamma(w^*)) = nL_n(w^*) + \lambda(w^*) \log n - (m(w^*) - 1) \log \log n + O_P(1)
\end{equation}
This expansion provides insights into the behavior of the posterior distribution and is the basis for one approach to understanding neural network development \citep{chen2023tms1,hoogland2024developmental,wang2024differentiationspecializationattentionheads}.

\subsection{LLC Estimation} \label{section:llc_estimation}

It is technically challenging to derive theoretical values for the LLC even for deep linear neural networks \citep{AOYAGI2024106132}. To this end, \citet{quantifdegen} introduced an estimator of the LLC and we recall its definition below. Consider the posterior distribution in $w$ localized to $\wstar$ by a Gaussian prior centered there with scale parameter $\gamma >0$:
\begin{equation}
p(w|w^*,\beta, \gamma) \propto \exp \left \{-n \beta L_n(w)-\frac{\gamma}{2}||w-w^*||_2^2 \right \}\,.
\label{eq:local_tempered_posterior}
\end{equation}
The larger $\gamma$ is, the more tightly concentrated this distribution is around $\wstar$. Let $\mathbb{E}_{w|\wstar,\beta, \gamma}$ denote expectation over \eqref{eq:local_tempered_posterior}. The \emph{LLC estimator} is defined by
\begin{equation}
    \hat \lambda_n(w^*) = n\beta \big[\mathbb{E}_{w|\wstar,\beta, \gamma} L_n(w)  - L_n(\wstar) \big].
    \label{eq:hatlambda}
\end{equation}
We note that in the original work of \citet{quantifdegen} we use the inverse temperature $\beta^* = 1/\log n$ derived in \citet{watanabeWidelyApplicableBayesian2013}. However empirically it has been found that other inverse temperatures are more convenient because the SGLD sampler is better behaved \citep{hoogland2024developmental,wang2024differentiationspecializationattentionheads}. That this is a valid estimator follows from \citep[Theorem 4]{watanabeWidelyApplicableBayesian2013} which holds for $\beta = \beta_0/\log n$ with $\beta_0 \neq 1$. The effect of this choice of inverse temperature on the estimates is part of the subject of the present work (see Section \ref{section:discussion}).

We approximate the expectation in this estimator with SGLD \cite{wellingBayesianLearningStochastic2011}. The SGLD update for sampling from the local posterior distribution $p(w|w^*, \beta, \gamma)$ is
\begin{align}
\Delta w_t &= \frac{\epsilon_t}{2}\Bigg[\frac{\beta n}{m} \sum_{i=1}^m \nabla_w \log p(y_{l_i}|x_{l_i}, w_t) + \gamma(w^* - w_t)\Bigg] + \eta_t \nonumber \\
&= \frac{\epsilon_t}{2}\Big[-\beta n \nabla_w L_m(w_t) + \gamma(w^* - w_t)\Big] + \eta_t \,, \label{eq: SGLD update}
\end{align}

where $\eta_t \sim \mathcal{N}(0, \epsilon_t)$ and $\{(x_{l_i}, y_{l_i})\}_{i=1}^m$ is a randomly sampled minibatch. Here the \emph{batch-size} $m$ can be significantly smaller than $n$. Let $\{w_1, \dots, w_T\}$ be samples of the tempered posterior at inverse temperature $\beta$ using the SGLD update \eqref{eq: SGLD update}. Then the \emph{SGLD-based LLC estimator} is 
\begin{equation}
    \hat{\lambda}_n^{\SGLD}(\wstar) = n\beta \bigg[ \frac{1}{T} \sum_{t=1}^T L_n(w_t) - L_n(w^*) \bigg]\,. \label{eq: SGLD based LLC estimator}
\end{equation}
Typically we will drop the superscript and write $\hat\lambda_n(\wstar)$ for estimates derived using SGLD samples.

\section{Sequence Models}\label{section:sequence_models}

Let $\Sigma$ be a nonempty finite set which we refer to as the set of \emph{symbols} or \emph{tokens}. Elements of $\Sigma$ are denoted by Greek letters $\sigma, \tau, \ldots$. For $k \in \mathbb{Z}_{\geq 1}$, let $\Sigma^k$ denote the set of sequences of symbols in $\Sigma$, which we refer to as \emph{strings}, with length $k$. General strings are denoted by $x,y,z,\ldots$. We denote by $K$ the maximum length of a string.

\subsection{Data Distribution}

We consider jointly distributed random variables $X_1,\ldots,X_K$ taking values in $\Sigma \subseteq \mathbb{R}^\Sigma$. Here elements in $\Sigma$ are identified as vectors in $\mathbb{R}^\Sigma$ (see Section \ref{section:tensor_algebra}). For each $k$ with $1 \leq k \leq K$, the strings of length $k$ are distributed according to a joint distribution $q_k(X) = q_k(X_1, \ldots, X_k)$ where we use $X_i$ to stand for the $i$th position in a string over any length.  We assume $q_1(X = x) > 0$ for all $x \in \Sigma$. The cross moments of the random vectors $X_1,\ldots,X_k$ are joint probabilities over tokens
\begin{align*}
\mathbb{E}_{q_k}[ X_1 \otimes \cdots \otimes X_k ] &= \sum_{x \in \Sigma^k} q_k( X_{1} = x_1, \ldots, X_{k} = x_k ) x_1 \otimes \cdots \otimes x_k\,.
\end{align*}
For any $X \in \mathbb{R}^\Sigma$, let $X_{\sigma}$ denote the number at the $\sigma$-th coordinate of $X$. Note that 
\begin{align*}
    \mathbb{E}_{q_k}[ (X_1)_{\sigma_1} \cdots (X_k)_{\sigma_k}] &= \sum_{x \in \Sigma^k}q_k\big((X_1)_{\sigma_1} = x_1, \dots, (X_k)_{\sigma_k} = x_k\big) \delta_{\sigma_1,x_1} \cdots \delta_{\sigma_k,x_k}\\
    &= q_k( X_{1} = x_1, \ldots, X_{k} = x_k )
\end{align*}
so the cross moments can also be written as 
\[
    \mathbb{E}_{q_k}[ X_1 \otimes \cdots \otimes X_k ] = \sum_{x \in \Sigma^k} \mathbb{E}_{q_k}[ (X_1)_{x_1} \cdots (X_k)_{x_k}] x_1 \otimes \cdots \otimes x_k\,.
\]
\begin{definition}\label{definition:fund_tensor} We call $A_k = \mathbb{E}_q[ X_1 \otimes \cdots \otimes X_k ] \in (\mathbb{R}^\Sigma)^{\otimes k}$ the $k$th \emph{fundamental tensor}.
\end{definition} 

We will use the linear map
\begin{align}
    \varepsilon: \mathbb{R}^\Sigma \to \mathbb{R}, \quad &\varepsilon(\sigma) = 1\,.
\end{align}
Observe that applying $\varepsilon \otimes \mathbbm{1}^{\otimes k - 1}$ to $A_k$ has the effect of marginalizing over $X_1$, as
\begin{align*}
\big( \varepsilon \otimes \mathbbm{1}^{\otimes k - 1} \big)(A_k) &= \sum_{x \in \Sigma^k} \mathbb{E}_{q_k}[ (X_1)_{x_1} \cdots (X_k)_{x_k}] \big( \varepsilon \otimes \mathbbm{1}^{\otimes k - 1} \big) \big( x_1 \otimes \cdots \otimes x_k \big)\\
&= \sum_{x \in \Sigma^k} \mathbb{E}_{q_k}[ (X_1)_{x_1} \cdots (X_k)_{x_k}] x_2 \otimes \cdots \otimes x_k\\
&= \sum_{x_2, \ldots, x_k \in \Sigma} \mathbb{E}_{q_k}[ (X_2)_{x_2} \cdots (X_k)_{x_k}] x_2 \otimes \cdots \otimes x_k\,.
\end{align*}
More generally, applying $\varepsilon$ to $l$ positions in a fundamental tensor $A_k$ has the effect of producing a tensor in $(\mathbb{R}^\Sigma)^{\otimes k - l}$ which encodes the appropriate marginal distribution of $q_k$. We refer to the basis of tensors $\sigma_1 \otimes \cdots \otimes \sigma_k$ for $(\mathbb{R}^{\Sigma})^{\otimes k}$ as the \emph{word basis}. Since the fundamental tensors encode probability distributions,  their coefficients in the word basis are non-negative, and $\varepsilon^{\otimes k} A_k = 1$.

In general the collection of distributions $\{q_k\}_{k=1}^K$ need not have any particular relation to one another, but if we are to imagine these distributions as being the ``true'' distribution of natural language then we may impose a natural condition which says that $q_k$ is equal to the marginal distribution of \emph{sequences of length $k$ within sequences of length $K$}. 

The following definition is a reformulation of \citep[Definition 1.1]{mumford1994pattern}:

\begin{definition}\label{defn:language} We say that the sequence $\{q_k\}_{k=1}^K$ is a \emph{language} if the following diagram commutes for any $i + j = K - k$
\[
\xymatrix@C+2pc{
\mathbb{R} \ar[r]^{A_K}\ar[dr]_-{A_k} & (\mathbb{R}^\Sigma)^{\otimes K} \ar[d]^{\varepsilon^{\otimes i} \otimes \mathbbm{1}^{\otimes k} \otimes \varepsilon^{\otimes j}}\\
& (\mathbb{R}^\Sigma)^{\otimes k}
}
\]
\end{definition}

It follows in turn that marginalizations of $q_k$ yield $q_l$ for $l < k$. In the rest of the paper we assume we are given a language in this sense, and as a consequence we henceforth drop the subscripts and write $q$ for the distribution over sequences of tokens. We also adopt an abbreviated notation for the probability of a sequence of tokens
\begin{equation}\label{eq:abbrev_q}
q(\sigma_1 \cdots \sigma_k) := q(X_1 = \sigma_1, \ldots, X_k = \sigma_k)\,.
\end{equation}
In this notation
\begin{equation}
A_k = \sum_{\sigma \in \Sigma^k} q(\sigma_1 \cdots \sigma_k) \, \sigma_1 \otimes \cdots \otimes \sigma_k\,.
\end{equation}

Similarly we write
\begin{equation}\label{eq:abbrev_cond_q}
q\big( \tau_1 \cdots \tau_l \mid \sigma_1 \cdots \sigma_k \big) := q\big( X_{k+1} = \tau_1, \ldots, X_{k+l} = \tau_l \mid X_1 = \sigma_1, \ldots, X_k = \sigma_k \big)
\end{equation}

\subsection{Tokenisation}

Tokenisation is a fundamental preprocessing step in modern language model training, serving as the interface between raw text and the numerical representations required by neural networks. A tokeniser segments input text into discrete units called tokens, which form the vocabulary that the model operates on \citep{kudo2018sentencepiece, sennrich2016neural,phuong2022formal}. The choice of tokenisation strategy significantly impacts model performance, training efficiency, and downstream applicability \citep{mielke2021between}.

Several tokenisation approaches have evolved alongside advances in language modelling:

\paragraph{Word-level tokenisation.} Early approaches split text on whitespace and punctuation, treating each word as a distinct token \citep{mikolov2013efficient}. While intuitive, this approach suffers from vocabulary explosion in morphologically rich languages and an inability to handle out-of-vocabulary words.

\paragraph{Character-level tokenisation.} Character-based approaches \citep{kim2016character} use individual characters as tokens, ensuring a small vocabulary size and eliminating out-of-vocabulary issues. However, they create very long sequences and lose direct access to word-level semantics, requiring the model to learn these relationships from scratch.

\paragraph{Subword tokenisation.} Modern language models predominantly use subword tokenisation algorithms \citep{sennrich2016neural, kudo2018sentencepiece}, which strike a balance between word and character approaches. These methods adaptively split words into subword units based on frequency statistics in the training corpus.

The choice of tokenisation strategy creates implicit biases in how models learn linguistic patterns \citep{rust2021good}. These biases manifest in several ways. For example subword algorithms like BPE create shorter tokens for frequent words or substrings and longer tokens for rare ones. The distributional properties we study, including modes, are shaped by how the continuous stream of text is discretised through tokenisation.

\begin{example} In the tokeniser for \texttt{EleutherAI/pythia-70m} \cite{biderman2023pythia} which we use for our experiments a sample taken from the Pile subset \pilesub{wikipedia\_en} \cite{gao2020pile,devingulliver2025monology} is tokenised as follows:
\begin{center}
\tokenbox{~In} \tokenbox{~sh} \tokenbox{i} \tokenbox{it} \tokenbox{ake} \tokenbox{~m} \tokenbox{ush} \tokenbox{ro} \tokenbox{oms} \tokenbox{~g} \tokenbox{rown} \tokenbox{~out} \tokenbox{do} \tokenbox{ors} \tokenbox{~on} \tokenbox{wood} \tokenbox{~in} \tokenbox{~} \tokenbox{a} \tokenbox{~city} \tokenbox{~in} \tokenbox{~} \tokenbox{the} \tokenbox{~pre} \tokenbox{fect} \tokenbox{ure} \tokenbox{~I} \tokenbox{b} \tokenbox{ar} \tokenbox{ak} \tokenbox{i}\,.
\end{center}
In this case the set $\Sigma$ of tokens has around 50k elements. This tokenizer was developed in \cite{black2022gpt} and is a BPE tokenizer that is trained specifically on the Pile.
\end{example}

\subsection{Transformers}\label{section:models_transformer}

Transformer neural networks \citep{vaswani2017attention} have become the dominant architecture for modelling sequential data, particularly in natural language processing. For a formal definition of a transformer-based sequence model see \cite{phuong2022formal}.

For autoregressive modelling, transformers typically employ causal attention masks to ensure that the prediction for position $t$ only depends on tokens at positions $< t$. Transformers are trained to \emph{simultaneously} model the conditional distributions $q(X_{k+1} | X_1, \ldots, X_{k})$ for all sequence lengths $1 \le k < K$ and thus implicitly to model the joint distribution
\[
q(X_1 \cdots X_K) = \prod_{k=1}^{K-1} q(X_{k+1} | X_{\le k}) q(X_1)\,.
\]
We assume $q(X_1)$ is given and does not need to be modelled and $X_{\le k} = X_1 \cdots X_k$. Here $K$ is called the \emph{maximum context length}. The training loss for such language models is usually defined for a sample $\{ S^i_K \}_{i=1}^n$ of \emph{contexts} $S^i_K \in \Sigma^K$ which are drawn i.i.d from a large corpus, by
\[
\ell_n(w) = - \frac{1}{n} \sum_{i=1}^n \sum_{k=1}^{K-1} \log p( x^i_{k+1} | x^i_{\le k}, w)
\]
where we write $S^i_K = x^i_1 x^i_2 \cdots x^i_K$. Here for any sequence $x \in \Sigma^k$ and $y \in \Sigma$
\[
p(y|x,w) := \operatorname{softmax}\big( g_w( x ) \big)[y]
\]
where $p[y]$ denotes the $y$ entry in the softmax vector \citep{phuong2022formal} and $g_w$ is the transformer parametrised by weights $w$. Thus the transformer is trained to model the distribution over sequences of length $K$ \emph{given} the distribution over unigrams $q_1(x)$ where we define
\[
p(X_1 \cdots X_K|w) := \prod_{k=1}^{K-1} p(X_{k+1} | X_{\le k}, w) q_1(X_1)\,.
\]
Note that then by definition
\begin{equation}\label{eq:training_loss_transformer_normal}
\ell_n(w) = - \frac{1}{n} \sum_{i=1}^n \log \prod_{k=1}^{K-1} p(x^i_{k+1}|x^i_{\le k}, w) = -\frac{1}{n} \sum_{i=1}^n \log p(S^i_K|w) + \log q_1(x^i_1)\,.
\end{equation}
Of course it is not assumed that $q_1$ is known, but since it does not depend on $w$ it does not appear in gradients and so we may ignore it.

Although it is usually stated that transformers do \emph{next-token} prediction, from a mathematical perspective they just model the distribution of contexts. In particular, given integers $k, l > 0$ with $k + l = K$ it is just as accurate to describe the training loss of a transformer as
\begin{equation}\label{eq:training_loss_transformer}
\ell_n(w) = -\frac{1}{n} \sum_{i=1}^n\Big[ \log p(x^i_{> k}|x^i_{\le k}, w) + \log p(x^i_{\le k}|w) \Big] + \log q_1(x^i_1)
\end{equation}
where $x^i_{>k} = x^i_{k+1} \cdots x^i_K \in \Sigma^l$. That is, we can think of optimising this loss as training the transformer to predict sequences of length $l$ given sequences of length $k$ for any pair with $k + l = K$. 

This observation is not an idle one: since the forward pass of the transformer on a token in position $i$ produces a latent vector which is \emph{used for prediction} in all sequence positions $> i$ it is not surprising that these latent vectors contain information relevant for such prediction \cite{pal-etal-2023-future,wu2024do}. Indeed \citet{shai2024transformers} have argued this is optimal based on computational mechanics. This explains why in this paper we consider models $p(y|x,w)$ which predict $y \in \Sigma^l$ given $x \in \Sigma^k$ (rather than fixing $l = 1$) since from a mathematical perspective this is arguably the correct setting for studying transformers as language models.

\section{Function Space}\label{section:function_space}

\subsection{Hilbert space}\label{section:Hilbert_space}

\begin{definition} For $l > 0$ we make $\mathbb{R}^{\Sigma^l}$ into Hilbert space where for $y,y' \in \mathbb{R}^{\Sigma^l}$
\begin{equation}\label{eq:token_dot_product}
\langle y, y' \rangle_{\mathbb{R}^{\Sigma^l}} = \sum_{\sigma \in \Sigma^l} y(\sigma) y'(\sigma)\,.
\end{equation}
\end{definition}

Here we use that $y,y'$ are functions $\Sigma^l \rightarrow \mathbb{R}$ which we may evaluate on symbols. Recall that we assume $q(x) > 0$ for all $1 \le k \le K$ and $x \in \Sigma^k$.

\begin{definition}\label{definition:Vinner}
For $k > 0$ let $\mathscr{V}_k$ denote the vector space $\mathbb{R}^{\Sigma^k}$ with the inner product
\begin{equation}
    \langle u, v \rangle_{\mathscr{V}_k} = \sum_{x \in \Sigma^{k}} u(x)v(x)q(x)^{-1}.
\end{equation}
When $k = 1$ we sometimes write $\mathscr{V}$ for $\mathscr{V}_1$.
\end{definition}

\begin{remark}\label{remark:normalised_symbol} Given $x \in \Sigma^k$ we have
\[
\Vert x \Vert_{\mathscr{V}_k}^2 = q(x)^{-1}
\]
hence the unit vector in the direction of $x$ is $q(x)^{1/2} x$. Hence the vectors $\{ q(x)^{1/2} x \}_{x \in \Sigma^k}$ form an orthonormal basis for $\mathscr{V}_k$.
\end{remark}

Given a measure space $(X, \mu)$ and Hilbert space $\mathscr{K}$ there is a Hilbert space of square-integrable $\mathscr{K}$-valued functions $L^2(X,\mu ; \mathscr{K})$ \citep[II.1]{reed1980methods}. In this paper we make $\Sigma^k$ into a measure space using $q_k$ and consider the Hilbert space 
\[
\mathscr{H}_{k,l} = L^2(\Sigma^k, q_k;\mathbb{R}^{\Sigma^l})
\]
with pairing for functions $f,g: \Sigma^k \longrightarrow \mathbb{R}^{\Sigma^l}$ in $\mathscr{H}_{k,l}$ defined by
\begin{gather}
\langle f, g \rangle_{\mathscr{H}_{k,l}} = \sum_{x \in \Sigma^k} \big\langle f(x), g(x) \big\rangle_{\mathbb{R}^{\Sigma^l}} \, q(x)\\
\Vert f \Vert_{\mathscr{H}_{k,l}} = \Big( \sum_{x \in \Sigma^k} \Vert f(x) \Vert_{\mathbb{R}^{\Sigma^l}}^2 \, q(x) \Big)^{1/2}\,.
\end{gather}
In the case $k = 0$, we take $\mathscr{H}_0 = \mathbb{R}^{\Sigma^l}$ as a Hilbert space with the pairing in \eqref{eq:token_dot_product}. 

Recall that for a finite set $S$ we write
\[
\Delta S = \{ g \in \mathbb{R}^S \, | \, \sum_{s \in S} g(s) = 1 \text{ and } g(s) \in [0,1] \text{ for all } s \in S \}\,.
\]

\begin{definition}\label{defn:pkl} Let $\mathscr{P}_{k,l} \subseteq \mathscr{H}_{k,l}$ denote the subset of functions $f: \Sigma^k \longrightarrow \mathbb{R}^{\Sigma^l}$ satisfying
\[
f(x) \in \Delta\big( \Sigma^l \big) \,, \quad \forall x \in \Sigma^k\,.
\]
This is the subset of conditional probability distributions.
\end{definition}

We note that $\mathscr{P}_{k,l}$ is a product of probability simplices, and thus a smooth manifold with corners embedded into the affine space $\mathscr{H}_{k,l}$ viewed as a real manifold. In fact $\mathscr{P}_{k,l}$ is a statistical manifold in the sense of \cite{amari2016information}.

\subsection{Converting Between Tensors and Linear Transformations}

Since all functions on a finite set are integrable, the underlying set of $\mathscr{H}_{k,l}$ is just the set of all functions $\Sigma^k \longrightarrow \mathbb{R}^{\Sigma^l}$ which is in bijection with the vector space of linear transformations $(\mathbb{R}^\Sigma)^{\otimes k} \longrightarrow (\mathbb{R}^\Sigma)^{\otimes l}$. This bijection identifies a function $f: \Sigma^k \rightarrow \mathbb{R}^{\Sigma^l}$ with the linear transformation
\begin{gather}
(\mathbb{R}^\Sigma)^{\otimes k} \longrightarrow (\mathbb{R}^\Sigma)^{\otimes l}\,,\\
\sigma_1 \otimes \cdots \otimes \sigma_k \longmapsto f(\sigma_1 \cdots \sigma_k)\,.
\end{gather}
Note that we used the fact that $(\mathbb{R}^{\Sigma})^{\otimes l}$ and $\mathbb{R}^{\Sigma^l}$ are identified with each other via 
\begin{gather*}
    (\mathbb{R}^{\Sigma})^{\otimes l} \overset{\cong}{\longrightarrow} \mathbb{R}^{\Sigma^l} \, , \\
    \sigma_1 \otimes \cdots \otimes \sigma_l \longmapsto (\sigma_1, \dots, \sigma_l) \,.
\end{gather*}
There is an isomorphism of vector spaces
\begin{gather}
\mathscr{H}_k \overset{\cong}{\longrightarrow} \big[ \mathbb{R}^{\Sigma^k} \big]^* \otimes \mathbb{R}^{\Sigma^l}\,, \label{eq:curry}\\
f \longmapsto \sum_{x \in \Sigma^k} x^* \otimes f(x)\,. \nonumber
\end{gather}
Here if $x = \sigma_1 \cdots \sigma_k$ we write $x^*$ for the functional that sends a tensor in $(\mathbb{R}^\Sigma)^{\otimes k}$ to its coefficient in $\sigma_1 \otimes \cdots \otimes \sigma_k$, using the expansion in word basis. While the definition of dual vector $x^*$ makes use of the word basis, the isomorphism \eqref{eq:curry} nonetheless doesn't depend on a choice of basis (because $x$ appears again in $f(x)$).

There is no canonical isomorphism between a vector space and its dual. A nondegenerate bilinear form provides such an identification. Using the bilinear forms $\langle -, - \rangle_{\mathbb{R}^{\Sigma^l}}$ and $\langle -, - \rangle_{\mathscr{V}_k}$ we have the isomorphisms of vector spaces
\begin{align}
\mathbb{R}^{\Sigma^k} &\overset{\cong}{\longrightarrow} (\mathbb{R}^{\Sigma^k})^*\,, \qquad u \longmapsto u^* := \langle u, - \rangle_{\mathbb{R}^{\Sigma^k}}\,,\label{eq:curry5}\\
\mathbb{R}^{\Sigma^k} &\overset{\cong}{\longrightarrow} (\mathbb{R}^{\Sigma^k})^*\,, \qquad u \longmapsto \hat{u}^* := \langle u, - \rangle_{\mathscr{V}_k}\,. \label{eq:curry2}
\end{align}
Note that for $u,v \in \Sigma^k$ we have
\begin{align*}
\hat{u}^*(v) = \langle u, v \rangle_{\mathscr{V}_k} = \delta_{u,v} q(u)^{-1}
\end{align*}
hence $\hat{u}^* = q(u)^{-1} u^*$ as linear functionals.

\begin{lemma}\label{lemma:pairing_final} For $u, v \in \mathbb{R}^{\Sigma^k}$
\[
\langle u, v \rangle_{\mathscr{V}_k} = \sum_{x \in \Sigma^k} q(x) \hat{u}^*(x) \hat{v}^*(x)\,.
\]
\end{lemma}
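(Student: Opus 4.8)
The plan is to reduce the claimed identity to the elementary fact, already recorded in the excerpt just before the lemma, that $\hat{u}^{*} = q(u)^{-1} u^{*}$ when $u \in \Sigma^{k}$ is a word-basis vector, combined with the bilinearity of everything in sight. Concretely, I would evaluate both sides on the word basis $\{\sigma_1 \cdots \sigma_k\}_{\sigma \in \Sigma^k}$ of $\mathbb{R}^{\Sigma^k}$ and compare.

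The first step is to compute $\hat{u}^{*}(x)$ for a word-basis vector $x \in \Sigma^{k}$ and an arbitrary $u \in \mathbb{R}^{\Sigma^{k}}$. By the definition \eqref{eq:curry2} of $\hat{u}^{*}$ and of the inner product on $\mathscr{V}_{k}$ (Definition~\ref{definition:Vinner}),
\[
\hat{u}^{*}(x) = \langle u, x \rangle_{\mathscr{V}_{k}} = \sum_{y \in \Sigma^{k}} u(y)\, x(y)\, q(y)^{-1} = u(x)\, q(x)^{-1}\,,
\]
where the last equality uses that $x$, viewed as an element of $\mathbb{R}^{\Sigma^{k}}$, is the delta function $\delta_{x}$, so $x(y) = \delta_{x,y}$ and the sum collapses to the single term $y = x$. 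The identical computation with $v$ in place of $u$ gives $\hat{v}^{*}(x) = v(x)\, q(x)^{-1}$.

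The second step is to substitute these evaluations into the right-hand side and simplify:
\[
\sum_{x \in \Sigma^{k}} q(x)\, \hat{u}^{*}(x)\, \hat{v}^{*}(x) = \sum_{x \in \Sigma^{k}} q(x)\, \big(u(x) q(x)^{-1}\big)\big(v(x) q(x)^{-1}\big) = \sum_{x \in \Sigma^{k}} u(x)\, v(x)\, q(x)^{-1}\,,
\]
and the last expression is exactly $\langle u, v \rangle_{\mathscr{V}_{k}}$ by Definition~\ref{definition:Vinner}. This establishes the lemma.

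There is no substantive obstacle here; the identity is essentially a bookkeeping statement. The only points that warrant care are (i) keeping distinct the unweighted pairing $\langle -, - \rangle_{\mathbb{R}^{\Sigma^{k}}}$ implicit in writing $x(y) = \delta_{x,y}$ from the reweighted pairing $\langle -, - \rangle_{\mathscr{V}_{k}}$ defining $\hat{u}^{*}$, and (ii) invoking the standing positivity assumption $q(x) > 0$ for all $x$, which is what makes every factor $q(x)^{-1}$ well-defined. If one prefers a basis-free phrasing, the same argument can be run by writing $u = \sum_{x \in \Sigma^k} u(x)\, x$ and $v = \sum_{x \in \Sigma^k} v(x)\, x$ and expanding bilinearly, but the direct evaluation above is the cleanest route.
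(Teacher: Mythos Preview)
Your proof is correct. The approach differs slightly from the paper's: you evaluate $\hat{u}^{*}(x)=u(x)q(x)^{-1}$ explicitly and substitute into the right-hand side, collapsing it back to the defining formula for $\langle u,v\rangle_{\mathscr{V}_k}$. The paper instead starts from the left-hand side and expands via the orthonormal basis $\{q(x)^{1/2}x\}_{x\in\Sigma^k}$ of $\mathscr{V}_k$ (Remark~\ref{remark:normalised_symbol}), using the Parseval-type identity $\langle u,v\rangle_{\mathscr{V}_k}=\sum_x \langle u,q(x)^{1/2}x\rangle_{\mathscr{V}_k}\langle q(x)^{1/2}x,v\rangle_{\mathscr{V}_k}$ and then recognizing each factor as $q(x)^{1/2}\hat{u}^{*}(x)$. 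Your route is marginally more elementary (it never invokes the orthonormal-basis expansion), while the paper's version makes the underlying Hilbert-space structure more visible; the two are equivalent in content and length.
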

\begin{proof}
The vectors $\{ q(x)^{1/2} x \}_{x \in \Sigma^k}$ form an orthonormal basis for $\mathscr{V}_k$ by Remark \ref{remark:normalised_symbol}. Hence
\begin{align*}
\langle u, v \rangle_{\mathscr{V}_k} &= \sum_{x \in \Sigma^k} \big\langle u, q(x)^{1/2}x \big\rangle_{\mathscr{V}_k} \big\langle q(x)^{1/2}x, v \big\rangle_{\mathscr{V}_k}\\
&= \sum_{x \in \Sigma^k} q(x) \langle u, x \rangle_{\mathscr{V}_k} \langle x, v \rangle_{\mathscr{V}_k}\\
&= \sum_{x \in \Sigma^k} q(x) \hat{u}^*(x) \hat{v}^*(x)
\end{align*}
as claimed.
\end{proof}

\begin{lemma}\label{lemma:inverse_curry2} The inverse of \eqref{eq:curry2} sends a functional $\eta$ to $\sum_{x \in \Sigma^k} q(x) \eta(x) x$.
\end{lemma}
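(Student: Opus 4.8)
The plan is to exhibit the candidate map $\Psi\colon (\mathbb{R}^{\Sigma^k})^* \to \mathbb{R}^{\Sigma^k}$, $\Psi(\eta) = \sum_{x \in \Sigma^k} q(x)\,\eta(x)\, x$ (where $\eta(x)$ means $\eta$ evaluated on the basis vector $x \in \Sigma^k \subseteq \mathbb{R}^{\Sigma^k}$), and verify that it is a two-sided inverse of the isomorphism \eqref{eq:curry2}. Since \eqref{eq:curry2} is already known to be an isomorphism and both spaces are finite-dimensional of the same dimension $|\Sigma|^k$, it suffices to check that one composite is the identity; I would check $\eta \mapsto \Psi(\eta) \mapsto \widehat{\Psi(\eta)}^{\,*}$ and show this equals $\eta$.

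The computation is short. Writing $u = \Psi(\eta) = \sum_{x} q(x)\eta(x)\,x$, for any basis vector $v \in \Sigma^k$ we have
\[
\widehat{u}^{\,*}(v) = \langle u, v\rangle_{\mathscr{V}_k} = \sum_{x \in \Sigma^k} q(x)\,\eta(x)\,\langle x, v\rangle_{\mathscr{V}_k} = \sum_{x \in \Sigma^k} q(x)\,\eta(x)\,\delta_{x,v}\,q(v)^{-1} = \eta(v)\,,
\]
using the identity $\langle x, v\rangle_{\mathscr{V}_k} = \delta_{x,v}\,q(x)^{-1}$ recorded just before Lemma \ref{lemma:pairing_final} (which in turn is immediate from Definition \ref{definition:Vinner}). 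Since $\widehat{u}^{\,*}$ and $\eta$ are linear functionals on $\mathbb{R}^{\Sigma^k}$ that agree on the basis $\Sigma^k$, they coincide, so $\widehat{\Psi(\eta)}^{\,*} = \eta$ for all $\eta$. This shows $\Psi$ is a right inverse of \eqref{eq:curry2}, and a right inverse of an isomorphism between equidimensional finite-dimensional spaces is the (unique, two-sided) inverse.

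There is essentially no hard step here: the only point requiring a little care is bookkeeping about which basis and which pairing is in play — in particular distinguishing $\langle -,-\rangle_{\mathscr{V}_k}$ from the standard pairing $\langle -,-\rangle_{\mathbb{R}^{\Sigma^k}}$, and being explicit that $\eta(x)$ refers to evaluation on $x$ regarded as a vector. If one prefers a basis-free phrasing, one can alternatively note that $\Psi$ is exactly the inverse of the musical isomorphism induced by the nondegenerate form $\langle-,-\rangle_{\mathscr{V}_k}$, whose matrix in the word basis is the diagonal matrix $\operatorname{diag}(q(x)^{-1})$, so that the inverse isomorphism has matrix $\operatorname{diag}(q(x))$ — which is precisely the formula claimed.
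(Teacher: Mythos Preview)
Your proof is correct and is essentially the same as the paper's: both verify that applying \eqref{eq:curry2} to $\sum_{x}q(x)\eta(x)\,x$ recovers $\eta$ by evaluating on basis elements of $\Sigma^k$, the only cosmetic difference being that the paper invokes the identity $\hat{x}^* = q(x)^{-1}x^*$ while you expand $\langle x,v\rangle_{\mathscr{V}_k}$ directly.
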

\begin{proof}
It suffices to show $\eta = \sum_x q(x) \eta(x) \hat{x}^*$. We check it on basis elements $u \in \Sigma^k$
\begin{align*}
    \sum_x q(x) \eta(x) \hat{x}^*(u) = \sum_x q(x) \eta(x) q(x)^{-1} x^*(u) = \eta(u)
\end{align*}
as required.
\end{proof}

\begin{lemma}\label{lemma:h_k_iso_detail} The composite isomorphism defined using $\langle - , - \rangle_{\mathscr{V}_k}$
\begin{gather}
\mathscr{H}_k \overset{\eqref{eq:curry}}{\cong} \big[ \mathbb{R}^{\Sigma^k} \big]^* \otimes \mathbb{R}^{\Sigma^l} \overset{\eqref{eq:curry2}}{\cong} \mathbb{R}^{\Sigma^k} \otimes \mathbb{R}^{\Sigma^l} \cong (\mathbb{R}^\Sigma)^{\otimes (k+l)} \label{eq:curry4}
\end{gather}
is defined on $f \in \mathscr{H}_{k,l}$ by
\[
f \longmapsto \sum_{x \in \Sigma^k} q(x)\, x \otimes f(x)\,.
\]
\end{lemma}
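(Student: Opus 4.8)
The plan is to chase an arbitrary function $f \in \mathscr{H}_{k,l}$ through the three isomorphisms composing \eqref{eq:curry4} one at a time and read off its image at each stage. The first map \eqref{eq:curry} is given by definition, sending $f$ to $\sum_{x \in \Sigma^k} x^* \otimes f(x) \in [\mathbb{R}^{\Sigma^k}]^* \otimes \mathbb{R}^{\Sigma^l}$, where $x^*$ denotes the word-basis coefficient functional.

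Next I would apply the inverse of \eqref{eq:curry2} to the first tensor factor. By Lemma~\ref{lemma:inverse_curry2} this inverse carries a functional $\eta \in [\mathbb{R}^{\Sigma^k}]^*$ to $\sum_{y \in \Sigma^k} q(y)\,\eta(y)\, y$. Specializing to $\eta = x^*$ and using $x^*(y) = \delta_{x,y}$ for basis vectors $y \in \Sigma^k$ gives $x^* \longmapsto \sum_{y \in \Sigma^k} q(y)\,\delta_{x,y}\, y = q(x)\, x$, so by linearity $\sum_x x^* \otimes f(x)$ is sent to $\sum_{x \in \Sigma^k} q(x)\, x \otimes f(x) \in \mathbb{R}^{\Sigma^k} \otimes \mathbb{R}^{\Sigma^l}$. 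Finally, the canonical isomorphism $\mathbb{R}^{\Sigma^k} \otimes \mathbb{R}^{\Sigma^l} \cong (\mathbb{R}^\Sigma)^{\otimes (k+l)}$ of Section~\ref{section:tensor_algebra} acts as the identity on decomposable tensors assembled from elements of $\Sigma^k$ and $\Sigma^l$, so the displayed expression is unchanged and equals the claimed formula $\sum_{x \in \Sigma^k} q(x)\, x \otimes f(x)$.

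There is no genuine obstacle here; the computation is a bookkeeping exercise in currying composed with a metric identification. The one point that warrants care — and the reason the $q(x)$ weights appear at all — is that the middle step identifies $[\mathbb{R}^{\Sigma^k}]^*$ with $\mathbb{R}^{\Sigma^k}$ using the reweighted pairing $\langle-,-\rangle_{\mathscr{V}_k}$ rather than the standard one. Concretely, since $\hat{x}^* = q(x)^{-1} x^*$ as functionals, passing the word-basis functional $x^*$ backwards through \eqref{eq:curry2} multiplies it by $q(x)$; had one used $\langle-,-\rangle_{\mathbb{R}^{\Sigma^k}}$ in place of $\langle-,-\rangle_{\mathscr{V}_k}$ these factors would not be present. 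I would make sure to flag this explicitly so the appearance of the data distribution $q$ in the formula is transparent.
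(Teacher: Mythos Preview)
Your proposal is correct and follows essentially the same route as the paper's proof: chase $f$ through \eqref{eq:curry} to obtain $\sum_x x^*\otimes f(x)$, then invoke Lemma~\ref{lemma:inverse_curry2} to send $x^*$ to $q(x)\,x$ under the inverse of \eqref{eq:curry2}. The paper's argument is terser and omits the explicit $\delta_{x,y}$ computation and the closing remark about $\hat{x}^* = q(x)^{-1}x^*$, but the logic is identical.
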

\begin{proof}
Under \eqref{eq:curry} a function $f$ is sent to $\sum_{u \in \Sigma^k} u^* \otimes f(u)$, and under \eqref{eq:curry2}, $u^*$ is sent to $q(u) u$ by Lemma \ref{lemma:inverse_curry2}, which proves the claim.
\end{proof}

\begin{lemma}\label{lemma:cond_tensor_adjoint}
The fundamental tensor $A_{k+l} \in (\mathbb{R}^\Sigma)^{\otimes (k+l)}$ corresponds under \eqref{eq:curry4} to $\mathcal{C}_{k,l} \in \mathscr{H}_{k,l}$ defined by
\begin{equation}\label{eq:mathcalC}
\mathcal{C}_{k,l}(x) = \sum_{y \in \Sigma^l} q(y|x) y\,, \qquad \forall x \in \Sigma^k\,.
\end{equation}
\end{lemma}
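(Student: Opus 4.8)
The plan is to reduce the claim to the explicit description of the isomorphism \eqref{eq:curry4} furnished by Lemma \ref{lemma:h_k_iso_detail}, and then carry out a short computation in the word basis. Recall that Lemma \ref{lemma:h_k_iso_detail} identifies \eqref{eq:curry4} with the map sending $f \in \mathscr{H}_{k,l}$ to $\sum_{x \in \Sigma^k} q(x)\, x \otimes f(x)$, where we use the canonical identification $(\mathbb{R}^\Sigma)^{\otimes k} \otimes (\mathbb{R}^\Sigma)^{\otimes l} \cong (\mathbb{R}^\Sigma)^{\otimes (k+l)}$ from Section \ref{section:tensor_algebra} (and implicitly assume $k + l \le K$ so that $A_{k+l}$ is defined). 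Thus it suffices to verify that plugging $f = \mathcal{C}_{k,l}$ into this formula returns $A_{k+l}$.

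First I would substitute the definition \eqref{eq:mathcalC} of $\mathcal{C}_{k,l}$ and distribute the tensor product to obtain $\sum_{x \in \Sigma^k} q(x)\, x \otimes \mathcal{C}_{k,l}(x) = \sum_{x \in \Sigma^k} \sum_{y \in \Sigma^l} q(x)\, q(y \mid x)\, x \otimes y$. Next I would use that, under the identification $\mathbb{R}^{\Sigma^k} \otimes \mathbb{R}^{\Sigma^l} \cong \mathbb{R}^{\Sigma^{k+l}}$, the element $x \otimes y$ with $x = \sigma_1 \cdots \sigma_k$ and $y = \tau_1 \cdots \tau_l$ corresponds to the concatenated string $\sigma_1 \cdots \sigma_k \tau_1 \cdots \tau_l \in \Sigma^{k+l}$ as a word-basis vector of $(\mathbb{R}^\Sigma)^{\otimes (k+l)}$. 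Reindexing the double sum over the pairs $(x,y)$ by the concatenation $z = xy$, which is a bijection onto $\Sigma^{k+l}$, turns the expression into $\sum_{z \in \Sigma^{k+l}} q(x)\, q(y \mid x)\, z$ with $(x,y)$ the unique split of $z$ at position $k$.

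The one substantive point, which I would state carefully rather than wave through, is that $q(x)\, q(y \mid x)$ is exactly the coefficient of $z = xy$ in $A_{k+l}$, namely $q(z)$. By the definition of conditional probability in \eqref{eq:abbrev_cond_q} together with the hypothesis $q(x) > 0$, we have $q_{k+l}(xy) = q_k(x)\, q(y \mid x)$; and the marginal $q_k$ appearing here really is the marginal of $q_{k+l}$ because $\{q_j\}_j$ is a language in the sense of Definition \ref{defn:language}, which is what licenses writing all these probabilities with the single symbol $q$ and matches the coefficient with that of $A_{k+l}$. Substituting this in yields $\sum_{z \in \Sigma^{k+l}} q(z)\, z = A_{k+l}$ by Definition \ref{definition:fund_tensor}, completing the proof. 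I expect the only place needing genuine attention is this bookkeeping — concatenation of strings versus tensor products of basis vectors, and the invocation of the language property to glue the joint and conditional probabilities — while everything else is formal manipulation of the isomorphisms already established.
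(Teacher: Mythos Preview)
Your proposal is correct and follows essentially the same approach as the paper: invoke Lemma~\ref{lemma:h_k_iso_detail} to compute the image of $\mathcal{C}_{k,l}$ as $\sum_{x} q(x)\, x \otimes \mathcal{C}_{k,l}(x)$, expand, and use $q(xy) = q(x)\,q(y\mid x)$ to recognise $A_{k+l}$. The paper's proof is terser and does not explicitly invoke the language property or spell out the string-concatenation bookkeeping, but the argument is identical.
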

\begin{proof}
By Lemma \ref{lemma:h_k_iso_detail} the function $\mathcal{C}_{k,l}$ corresponds to the tensor
\begin{align*}
\sum_{x \in \Sigma^k} q(x)\, x \otimes \mathcal{C}_{k,l}(x) &= \sum_{x \in \Sigma^k} q(x) \, x \otimes \sum_{y \in \Sigma^l} q(y|x) y\\
&= \sum_{x \in \Sigma^k, y \in \Sigma^l} q(xy) x \otimes y\\
&= A_{k+l}
\end{align*}
using $q(xy) = q(y|x) q(x)$.
\end{proof}

\subsection{Tensor Decomposition}\label{section:tensor_decomp}

Let $k, l > 0$. To apply dimensional reduction we start with the fundamental tensor $A_{k+l}$ and take the associated linear transformation (Lemma \ref{lemma:cond_tensor_adjoint}):
\begin{gather*}
\mathcal{C}_{k,l}: \mathbb{R}^{\Sigma^k} \longrightarrow \mathbb{R}^{\Sigma^l}\,,\\
x \longmapsto \sum_{y \in \Sigma^l} q(y|x) y\,.
\end{gather*}
It is natural to apply singular value decomposition (SVD), however, this construction is defined for linear transformations between inner product spaces (Section \ref{section:background_svd}) and so we must make a choice of inner product for the domain and codomain. The right and left singular vectors will have different interpretations. The left singular vectors are viewed as elements of the \emph{output} space of the function space $\mathscr{H}_{k,l} = L^2(\Sigma^k,q_k;\mathbb{R}^{\Sigma^l})$ while the right singular vectors are vectors in $\mathbb{R}^{\Sigma^k}$. 

We take the domain to be the inner product space $\mathscr{V}_k$ of Definition \ref{definition:Vinner} and the codomain to be the inner product space $\mathbb{R}^{\Sigma^l}$ with the dot product \eqref{eq:token_dot_product}. That is, we consider $\mathcal{C}_{k,l}$ and its adjoint
\[
\mathcal{C}_{k,l}: \mathscr{V}_k \longrightarrow \mathbb{R}^{\Sigma^l}\,, \quad \mathcal{C}_{k,l}^\dagger: \mathbb{R}^{\Sigma^l} \longrightarrow \mathscr{V}_k
\]
defined uniquely such that for all pairs $x \in \Sigma^k, \sigma \in \Sigma^l$
\begin{equation}\label{eq:adjoint_cal}
\langle \mathcal{C}_{k,l} x, \sigma \rangle_{\mathbb{R}^{\Sigma^l}} = \langle x, \mathcal{C}_{k,l}^\dagger \sigma \rangle_{\mathscr{V}_k}\,.
\end{equation}
The singular value decomposition of $\mathcal{C}_{k,l}$ with respect to the given inner products on the domain and codomain gives (using the notation of Section \ref{section:background_svd}):
\begin{itemize}
    \item an index set $\Lambda_{k,l}$ presented as a disjoint union of subsets $\Lambda^+_{k,l}, \Lambda^0_{k,l}$,
    \item an orthonormal basis of vectors $\{ v_\alpha \}_{\alpha \in \Lambda_{k,l}}$ of $\mathscr{V}_k$, the \emph{right singular vectors} (orthonormal with respect to the pairing of of $\mathscr{V}_k$),
    \item an orthonormal basis of vectors $\{ u_\alpha \}_{\alpha \in \Lambda^+_{k,l}}$ for the image of $\mathcal{C}_{k,l}$ in $\mathbb{R}^{\Sigma^l}$ called the \emph{left singular vectors} (orthonormal with respect to the dot product)
    \item \emph{singular values} $s_\alpha \ge 0$ with $s_\alpha > 0$ for $\alpha \in \Lambda^+_{k,l}$ and $s_\alpha = 0$ for $\alpha \in \Lambda^0_{k,l}$.
\end{itemize}
These have the property that
\begin{align*}
\mathcal{C}_{k,l} \, v_\alpha &= 0\,, && \alpha \in \Lambda_{k,l}^0\,\\
\mathcal{C}_{k,l} \,v_\alpha &= s_\alpha u_\alpha\,, && \alpha \in \Lambda_{k,l}^+\\
\mathcal{C}_{k,l}^\dagger \, u_\alpha &= s_\alpha v_\alpha && \alpha \in \Lambda_{k,l}^+\,.
\end{align*}
By construction
\begin{equation}\label{eq:mode_decomp_A2}
\mathcal{C}_{k,l} = \sum_{\alpha \in \Lambda_{k,l}} s_\alpha u_\alpha \circ \hat{v}_\alpha^* = \sum_{\alpha \in \Lambda^+_{k,l}} s_\alpha u_\alpha \circ \hat{v}_\alpha^*
\end{equation}
where we interpret $u_\alpha: \mathbb{R} \rightarrow \mathbb{R}^{\Sigma^l}$ as the linear transformation sending $1$ to $u_\alpha$, and $\hat{v}_\alpha^*(v_\beta) = \delta_{\alpha, \beta}$. We refer to $\Lambda_{k,l}$ as the set of \emph{modes} of $\mathcal{C}_{k,l}$.

\begin{definition}\label{defn:propensity}
Let $\alpha \in \Lambda^+_{k,l}$. For $x \in \Sigma^k, y \in \Sigma^l$ we define
\begin{equation}\label{eq:propensity_defn}
q(y|x, \alpha) := s^{-1}_\alpha \hat{v}_\alpha^*(x)u_\alpha^*(y) \in \mathbb{R}
\end{equation}
which we call the \emph{propensity} of $y$ to follow $x$ according to the mode $\alpha$. We adopt the convention that the propensity is zero if $\alpha \in \Lambda^0_{k,l}$. The \emph{propensity} of a mode $\alpha \in \Lambda_{k,l}$ is defined to be $q(\alpha) = s_\alpha^2$.
\end{definition}

The preceding discussion is summarised by the \emph{mode decomposition} of the conditional probability:

\begin{lemma}\label{lemma:mode_decomp_cond_prob}
For any $x \in \Sigma^k$ and $y \in \Sigma^l$,
\[
q(y|x) = \sum_{\alpha \in \Lambda_{k,l}} q(y \mid x, \alpha) q(\alpha)\,.
\]
\end{lemma}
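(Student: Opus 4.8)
The plan is to unwind the mode decomposition \eqref{eq:mode_decomp_A2} of the operator $\mathcal{C}_{k,l}$ by evaluating both sides on a basis vector $x \in \Sigma^k \subseteq \mathbb{R}^{\Sigma^k}$ and then extracting the $y$-coordinate. First I would recall from Lemma \ref{lemma:cond_tensor_adjoint} (equation \eqref{eq:mathcalC}) that $\mathcal{C}_{k,l}(x) = \sum_{y' \in \Sigma^l} q(y'|x)\, y'$, so that the $y$-coordinate of the vector $\mathcal{C}_{k,l}(x) \in \mathbb{R}^{\Sigma^l}$ is exactly $q(y|x)$; in the notation of \eqref{eq:curry5} this reads $q(y|x) = y^*\big( \mathcal{C}_{k,l}(x) \big) = \langle \mathcal{C}_{k,l}(x), y \rangle_{\mathbb{R}^{\Sigma^l}}$.

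Next I would apply the decomposition $\mathcal{C}_{k,l} = \sum_{\alpha \in \Lambda^+_{k,l}} s_\alpha\, u_\alpha \circ \hat{v}_\alpha^*$ from \eqref{eq:mode_decomp_A2} to the vector $x$. Since $\hat v_\alpha^*$ is a scalar functional and $u_\alpha$ is interpreted as the map $1 \mapsto u_\alpha$, we get $\mathcal{C}_{k,l}(x) = \sum_{\alpha \in \Lambda^+_{k,l}} s_\alpha\, \hat{v}_\alpha^*(x)\, u_\alpha$. Taking the $y$-coordinate and using linearity of $y^*$,
\[
q(y|x) = \sum_{\alpha \in \Lambda^+_{k,l}} s_\alpha\, \hat{v}_\alpha^*(x)\, u_\alpha^*(y)\,.
\]

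Finally I would substitute the definitions from Definition \ref{defn:propensity}: for $\alpha \in \Lambda^+_{k,l}$ we have $q(y|x,\alpha) = s_\alpha^{-1} \hat v_\alpha^*(x) u_\alpha^*(y)$ and $q(\alpha) = s_\alpha^2$, so $q(y|x,\alpha)\, q(\alpha) = s_\alpha\, \hat v_\alpha^*(x)\, u_\alpha^*(y)$, which matches each summand above. For $\alpha \in \Lambda^0_{k,l}$ the corresponding term has $s_\alpha = 0$ and the propensity is zero by convention, so extending the sum from $\Lambda^+_{k,l}$ to all of $\Lambda_{k,l}$ changes nothing, giving $q(y|x) = \sum_{\alpha \in \Lambda_{k,l}} q(y|x,\alpha)\, q(\alpha)$ as claimed.

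This argument is essentially a bookkeeping exercise, so there is no serious obstacle; the only point requiring care is consistency of the various dualizations — in particular that evaluating the coordinate functional $y^*$ (defined via the word basis) agrees with the dot-product pairing $\langle -, y\rangle_{\mathbb{R}^{\Sigma^l}}$, and that $\hat v_\alpha^*$ is the functional dual to $v_\alpha$ with respect to $\langle -,-\rangle_{\mathscr{V}_k}$ as used in \eqref{eq:mode_decomp_A2}. Once those identifications are invoked cleanly, the chain of equalities above closes immediately.
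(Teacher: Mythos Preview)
Your proposal is correct and follows essentially the same approach as the paper: the paper's proof is the one-line computation $q(y|x) = y^*(\mathcal{C}_{k,l}(x)) = \sum_{\alpha} s_\alpha \hat v_\alpha^*(x) u_\alpha^*(y)$ obtained from \eqref{eq:mode_decomp_A2}, after which the identification with $\sum_\alpha q(y|x,\alpha)q(\alpha)$ is immediate from Definition~\ref{defn:propensity}. Your write-up simply unpacks each of these steps explicitly, including the harmless extension of the sum from $\Lambda^+_{k,l}$ to $\Lambda_{k,l}$.
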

\begin{proof}
It follows from \eqref{eq:mode_decomp_A2} that
\begin{equation}\label{eq:decomp_lambda_bigram}
q(y|x) = y^*( \mathcal{C}_{k,l}(x) ) = \sum_{\alpha \in \Lambda_{k,l}} s_\alpha \hat{v}_\alpha^*(x)u_\alpha^*(y)
\end{equation}
from which the claim follows.
\end{proof}

The choice of the $s^{-1}_\alpha$ factor in \eqref{eq:propensity_defn} is justified by Remark \ref{remark:absolute_bigram_propensities}.

Note that while these propensities sum to a probability, they do not have a clear interpretation as likelihoods outside of this sum over $\alpha$. This will not cause confusion, because in this paper there are no probabilities that appear conditioned on mode indices, only propensities.

\begin{remark}\label{remark:compute_dagger} If we define $D: \mathbb{R}^{\Sigma^k} \rightarrow \mathbb{R}^{\Sigma^k}$ by $D(x) = q(x)^{-1/2} x$ then
\begin{equation}\label{eq:V_vs_rsigma_pair}
\langle x,x' \rangle_{\mathscr{V}_k} = \langle Dx, Dx' \rangle_{\mathbb{R}^{\Sigma^k}}\,.
\end{equation}
Let $\mathcal{C}_{k,l}^*$ denote the adjoint with respect to the dot product on both the domain and codomain, so its matrix is the transpose of the matrix of $\mathcal{C}_{k,l}$ in the token basis. Then using \eqref{eq:adjoint_cal}, \eqref{eq:V_vs_rsigma_pair} we have
\[
\langle \mathcal{C}_{k,l} D^{-1} x, x' \rangle_{\mathbb{R}^{\Sigma^k}} = \langle D^{-1} x, \mathcal{C}_{k,l}^\dagger x' \rangle_{\mathscr{V}_k} = \langle x, D\mathcal{C}_{k,l}^\dagger x' \rangle_{\mathbb{R}^{\Sigma^k}}\,.
\]
Hence $D \mathcal{C}_{k,l}^\dagger = (\mathcal{C}_{k,l} D^{-1})^* = D^{-1} \mathcal{C}_{k,l}^*$, so $\mathcal{C}_{k,l}^\dagger = D^{-2} \mathcal{C}_{k,l}^*$. The singular value decomposition of $\mathcal{C}_{k,l}$, viewed as a linear transformation between inner product spaces $\mathscr{V}_k$ and $\mathbb{R}^{\Sigma^l}$, is thus the same as the singular value decomposition of $\mathcal{C}_{k,l} D^{-1}$ viewed as a linear map from $\mathbb{R}^{\Sigma^k}$ and $\mathbb{R}^{\Sigma^l}$.
\end{remark}

\begin{remark}
The matrix of $\mathcal{C}_{k,l}$ is a (column) stochastic matrix since $\sum_{y \in \Sigma^l} q(y|x) = 1$. When $k = l$ the Perron-Frobenius theorem applies to this matrix (noting that we have assumed all probabilities positive) and we can think of this as a Markov chain \cite{seabrook2023tutorial}.
\end{remark}

\subsection{Mode Basis}\label{section:mode_decomp_bigrams}

We make use of the notation established in Section \ref{section:tensor_decomp}. Let $k,l > 0$ be given and let $\Lambda = \Lambda_{k,l}$, $\Lambda^+ = \Lambda_{k,l}^+$ as above. Note that the $u_\alpha$ for $\alpha \in \Lambda^+$ may span a proper subspace of $\mathbb{R}^{\Sigma^l}$. Choose arbitrarily an extension of this to an orthonormal basis $\{ u_\alpha \}_{\alpha \in \Lambda^{++}}$ for all of $\mathbb{R}^{\Sigma^l}$ where $\Lambda^{++} \supseteq \Lambda^+$.

\begin{definition} Given $\alpha \in \Lambda$, $\beta \in \Lambda^{++}$ we define $e_{\alpha\beta} \in \mathscr{H}_{k,l}$ to be
\begin{equation}
e_{\alpha\beta} = u_\beta \circ \hat{v}_\alpha^*
\end{equation}
that is, $e_{\alpha\beta}(x) = \hat{v}_\alpha^*(x) u_\beta$ and so as functions $\Sigma^l \rightarrow \mathbb{R}$
\begin{equation}
e_{\alpha\beta}(x)(y) = \hat{v}_\alpha^*(x) u_\beta^*(y)\,.
\end{equation}
\end{definition}

Hence by \eqref{eq:mode_decomp_A2}
\begin{equation}\label{eq:decomp_ckl}
\mathcal{C}_{k,l} = \sum_{\alpha \in \Lambda^+_{k,l}} s_\alpha e_{\alpha \alpha}\,.
\end{equation}

\begin{lemma}\label{lemma:orthonormal_basis_H1} The vectors $\{ e_{\alpha\beta} \}_{\alpha \in \Lambda, \beta \in \Lambda^{++}}$ form an orthonormal basis of $\mathscr{H}_{k,l}$.    
\end{lemma}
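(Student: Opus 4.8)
The plan is to verify the inner products $\langle e_{\alpha\beta}, e_{\gamma\delta} \rangle_{\mathscr{H}_{k,l}}$ directly from the definitions and then count dimensions to conclude these vectors form a basis. First I would unwind the pairing on $\mathscr{H}_{k,l}$: since $e_{\alpha\beta}(x) = \hat{v}_\alpha^*(x) u_\beta$ and $e_{\gamma\delta}(x) = \hat{v}_\gamma^*(x) u_\delta$, both being scalar multiples of fixed vectors $u_\beta, u_\delta \in \mathbb{R}^{\Sigma^l}$, the definition of $\langle -,- \rangle_{\mathscr{H}_{k,l}}$ gives
\[
\langle e_{\alpha\beta}, e_{\gamma\delta} \rangle_{\mathscr{H}_{k,l}} = \sum_{x \in \Sigma^k} \hat{v}_\alpha^*(x)\, \hat{v}_\gamma^*(x)\, \langle u_\beta, u_\delta \rangle_{\mathbb{R}^{\Sigma^l}}\, q(x)\,.
\]
The factor $\langle u_\beta, u_\delta \rangle_{\mathbb{R}^{\Sigma^l}} = \delta_{\beta,\delta}$ because $\{u_\beta\}_{\beta \in \Lambda^{++}}$ is orthonormal in $\mathbb{R}^{\Sigma^l}$ with respect to the dot product, by the choice of extension. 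For the remaining sum over $x$, I would invoke Lemma~\ref{lemma:pairing_final} with $u = v_\alpha$, $v = v_\gamma$, which says exactly that $\sum_{x} q(x) \hat{v}_\alpha^*(x) \hat{v}_\gamma^*(x) = \langle v_\alpha, v_\gamma \rangle_{\mathscr{V}_k}$, and this equals $\delta_{\alpha,\gamma}$ since $\{v_\alpha\}_{\alpha \in \Lambda}$ is an orthonormal basis of $\mathscr{V}_k$. Hence $\langle e_{\alpha\beta}, e_{\gamma\delta} \rangle_{\mathscr{H}_{k,l}} = \delta_{\alpha,\gamma}\delta_{\beta,\delta}$, so the family is orthonormal and in particular linearly independent.

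To finish, I would count: the index set $\{(\alpha,\beta) : \alpha \in \Lambda, \beta \in \Lambda^{++}\}$ has cardinality $|\Lambda| \cdot |\Lambda^{++}| = \dim \mathscr{V}_k \cdot \dim \mathbb{R}^{\Sigma^l} = |\Sigma|^k \cdot |\Sigma|^l$, using that $|\Lambda| = \dim V$ from the SVD setup in Section~\ref{section:background_svd} and $|\Lambda^{++}| = \dim \mathbb{R}^{\Sigma^l} = |\Sigma|^l$ by construction of the extension. On the other hand $\dim \mathscr{H}_{k,l} = \dim L^2(\Sigma^k, q_k; \mathbb{R}^{\Sigma^l}) = |\Sigma^k| \cdot |\Sigma^l| = |\Sigma|^{k+l}$. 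Since an orthonormal set of size equal to the dimension of a finite-dimensional inner product space is automatically a basis, the claim follows.

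I do not anticipate a serious obstacle here; the only point requiring a little care is making sure the two different inner products are used in the right places — the dot product $\langle -,- \rangle_{\mathbb{R}^{\Sigma^l}}$ on the codomain side (where orthonormality of the $u_\beta$ lives) versus the $\mathscr{V}_k$ pairing on the domain side (where orthonormality of the $v_\alpha$ lives, accessed through $\hat{v}_\alpha^*$ via Lemma~\ref{lemma:pairing_final}) — and keeping straight that $e_{\alpha\beta}$ ranges $\alpha$ over all of $\Lambda$ (not just $\Lambda^+$) and $\beta$ over $\Lambda^{++}$, which is precisely what makes the dimension count come out exactly right rather than undershooting.
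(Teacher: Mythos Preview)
Your proposal is correct and follows essentially the same approach as the paper's proof: expand the $\mathscr{H}_{k,l}$ pairing, factor out $\langle u_\beta, u_\delta \rangle_{\mathbb{R}^{\Sigma^l}} = \delta_{\beta,\delta}$, apply Lemma~\ref{lemma:pairing_final} to identify the remaining sum as $\langle v_\alpha, v_\gamma \rangle_{\mathscr{V}_k} = \delta_{\alpha,\gamma}$, and then count dimensions. The paper's argument is line-for-line the same, including the invocation of Lemma~\ref{lemma:pairing_final} and the spanning conclusion via $\dim \mathscr{H}_{k,l} = |\Lambda| \cdot |\Lambda^{++}|$.
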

\begin{proof}
We compute that
\begin{align*}
\langle e_{\alpha\beta}, e_{\gamma\delta} \rangle_{\mathscr{H}_{k,l}} &= \sum_{x \in \Sigma^k} q(x) \big\langle e_{\alpha\beta}(x), e_{\gamma\delta}(x) \big\rangle_{\mathbb{R}^{\Sigma^l}}\\
&= \sum_{x \in \Sigma^k} q(x) \hat{v}_\alpha^*(x) \hat{v}_\gamma^*(x) \big\langle u_\beta, u_\delta \big\rangle_{\mathbb{R}^{\Sigma^l}} \\
&= \delta_{\beta, \delta} \sum_{x \in \Sigma^k} q(x) \hat{v}_\alpha^*(x) \hat{v}_\gamma^*(x)
\end{align*}
which is by Lemma \ref{lemma:pairing_final} equal to
\begin{align*}
&= \delta_{\beta, \delta} \langle v_\alpha, v_\gamma \rangle_{\mathscr{V}_k} = \delta_{\beta, \delta} \delta_{\alpha, \gamma}\,.
\end{align*}
This shows that the set is orthonormal and thus linearly independent. It is a spanning set since
\[
\dim \mathscr{H}_{k,l} = \dim(\mathbb{R}^{\Sigma^k}) \dim(\mathbb{R}^{\Sigma^l}) = |\Lambda_{k,l}||\Lambda_{k,l}^{++}|
\]
which completes the proof.
\end{proof}

Given a statistical model $p(y|x,w)$ for $x \in \Sigma^k, y \in \Sigma^l$ and $w \in W$ we denote by $f_w$ the corresponding function in $\mathscr{H}_{k,l}$
\begin{equation}
f_w(x) = \sum_{y \in \Sigma^l} p(y|x, w) y
\end{equation}
and we write $p(x,y|w) = p(y|x,w) q(x)$. Pairing an observable like $e_{\alpha \beta}$ with the function space representation of a probability distribution like $p$ gives an expectation $\mathbb{E}_p\big[ e_{\alpha\beta} \big]$ in the following sense:

\begin{lemma}\label{lemma:calc_comp_fw} For any $\alpha \in \Lambda, \beta \in \Lambda^{++}$ we have
\begin{equation}\label{eq:calc_comp_fw}
\big\langle f_w, e_{\alpha\beta} \big\rangle_{\mathscr{H}_{k,l}} = \sum_{x \in \Sigma^k,y \in \Sigma^l} p(x,y|w) e_{\alpha\beta}(x)(y) \,.
\end{equation}
\end{lemma}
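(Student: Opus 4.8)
The plan is to unwind the definition of the inner product on $\mathscr{H}_{k,l}$ and substitute the explicit formulas for $f_w$ and $e_{\alpha\beta}$. Concretely, I would start from the pairing of Section~\ref{section:Hilbert_space},
\[
\langle f_w, e_{\alpha\beta}\rangle_{\mathscr{H}_{k,l}} = \sum_{x \in \Sigma^k} q(x)\,\big\langle f_w(x), e_{\alpha\beta}(x)\big\rangle_{\mathbb{R}^{\Sigma^l}}\,,
\]
and then insert $f_w(x) = \sum_{y \in \Sigma^l} p(y|x,w)\, y$ together with $e_{\alpha\beta}(x) = \hat{v}_\alpha^*(x)\, u_\beta$.

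First I would pull the scalar $\hat{v}_\alpha^*(x)$ out of the $\mathbb{R}^{\Sigma^l}$ pairing and use bilinearity to get $\langle f_w(x), e_{\alpha\beta}(x)\rangle_{\mathbb{R}^{\Sigma^l}} = \hat{v}_\alpha^*(x)\sum_{y} p(y|x,w)\,\langle y, u_\beta\rangle_{\mathbb{R}^{\Sigma^l}}$. The only step needing a word of justification is the identity $\langle y, u_\beta\rangle_{\mathbb{R}^{\Sigma^l}} = u_\beta^*(y)$: this is immediate because $y \in \Sigma^l$ is a word-basis vector and the pairing on $\mathbb{R}^{\Sigma^l}$ is the dot product \eqref{eq:token_dot_product}, so $\langle y, u_\beta\rangle_{\mathbb{R}^{\Sigma^l}}$ extracts the $y$-coordinate of $u_\beta$, which is exactly $u_\beta^*(y)$ under the identification \eqref{eq:curry5}.

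Then I would recombine using $e_{\alpha\beta}(x)(y) = \hat{v}_\alpha^*(x)\, u_\beta^*(y)$ (by definition of $e_{\alpha\beta}$) and $p(x,y|w) = p(y|x,w)\, q(x)$, yielding
\[
\langle f_w, e_{\alpha\beta}\rangle_{\mathscr{H}_{k,l}} = \sum_{x \in \Sigma^k}\sum_{y \in \Sigma^l} q(x)\, p(y|x,w)\,\hat{v}_\alpha^*(x)\, u_\beta^*(y) = \sum_{x \in \Sigma^k, y \in \Sigma^l} p(x,y|w)\, e_{\alpha\beta}(x)(y)\,,
\]
which is the claimed formula.

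There is no genuine obstacle here: the statement is an unwinding of definitions, and the only care required is in the (trivial) identification $\langle y, u_\beta\rangle_{\mathbb{R}^{\Sigma^l}} = u_\beta^*(y)$ and in tracking which inner product appears at each stage, since the $\mathscr{H}_{k,l}$ pairing carries the weight $q(x)$ whereas the $\mathbb{R}^{\Sigma^l}$ pairing does not. The point worth emphasizing is the conceptual reading: the right-hand side is precisely the expectation $\mathbb{E}_{p(\cdot\,|\,w)}\big[ e_{\alpha\beta} \big]$ of the observable $e_{\alpha\beta}$ under the joint distribution $p(x,y|w)$, which is why this lemma serves as the bridge between the Hilbert-space formalism and the probabilistic computations that follow.
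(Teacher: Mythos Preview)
Your proposal is correct and follows essentially the same route as the paper's proof: expand the $\mathscr{H}_{k,l}$ pairing, substitute $e_{\alpha\beta}(x)=\hat v_\alpha^*(x)u_\beta$ and $f_w(x)=\sum_y p(y|x,w)\,y$, pull out the scalar $\hat v_\alpha^*(x)$, use $\langle y,u_\beta\rangle_{\mathbb{R}^{\Sigma^l}}=u_\beta^*(y)$, and recombine using $p(x,y|w)=p(y|x,w)q(x)$. Your explicit justification of $\langle y,u_\beta\rangle_{\mathbb{R}^{\Sigma^l}}=u_\beta^*(y)$ via \eqref{eq:curry5} is a small addition the paper leaves implicit.
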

\begin{proof}
We calculate
\begin{align*}
\big\langle f_w, e_{\alpha \beta} \big\rangle &= \sum_{x \in \Sigma^k} q(x) \big\langle f_w(x), e_{\alpha\beta}(x) \big\rangle_{\mathbb{R}^{\Sigma^l}} \\
&= \sum_{x \in \Sigma^k} q(x) \big\langle f_w(x), \hat{v}_\alpha^*(x) u_\beta \big\rangle_{\mathbb{R}^{\Sigma^l}} \\
&= \sum_{x \in \Sigma^k} q(x) \big\langle f_w(x), u_\beta \big\rangle_{\mathbb{R}^{\Sigma^l}} \hat{v}_\alpha^*(x) \\
&= \sum_{x \in \Sigma^k} \sum_{y \in \Sigma^l} p(y|x, w) \hat{v}_\alpha^*(x) u_\beta^*(y) q(x) \\
&= \sum_{x \in \Sigma^k} \sum_{y \in \Sigma^l} p(x, y|w) e_{\alpha\beta}(x)(y)\,.
\end{align*}
as claimed.
\end{proof}

The modes $\alpha$ are \emph{modes of variation} or ``patterns'' in the data distribution and we think of $e_{\alpha\beta}(x)(y)$ as the weighting of $xy$ in the mode, or more informally, as a measure of the degree to which $xy$ ``follows'' the pattern. Note it is immediate from \eqref{eq:decomp_ckl} that
\begin{equation}
\big\langle \mathcal{C}_{k,l}, e_{\alpha\beta}\big\rangle_{\mathscr{H}_{k,l}} = \delta_{\alpha, \beta} s_\alpha
\end{equation}

\subsection{Effective True Distribution}\label{section:effective_true}

In this section we discuss how to truncate the true distribution using the mode basis. After normalisation we obtain what we refer to as an \emph{effective} true distribution. Let $k,l > 0$ be fixed and set $\Lambda = \Lambda_{k,l}$, $\Lambda^+ = \Lambda_{k,l}^+$ and $\Lambda^{++} = \Lambda_{k,l}^{++}$. We have defined above an orthonormal basis $\{ e_{\alpha \beta} \}_{\alpha \in \Lambda, \beta \in \Lambda^{++}}$ for $\mathscr{H}_{k,l}$ (Lemma \ref{lemma:orthonormal_basis_H1}). Suppose that we put a total order on $\Lambda$ (for example putting the largest singular values first). Then for any $\chi \in \Lambda$ we can define
\[
\Lambda^{\le \chi} = \{ \alpha \in \Lambda \mid \alpha \le \chi \}\,, \qquad \Lambda^{+, \le \chi} = \{ \alpha \in \Lambda^+ \mid \alpha \le \chi \}\,.
\]
This defines a subspace
\[
\mathscr{H}^{\le \chi}_{k,l} := \operatorname{span}\Big\{ e_{\alpha \beta} \mid \alpha \in \Lambda^{\le \chi}, \beta \in \Lambda^{+,\le \chi} \Big\}\,.
\]
Note that this subspace is defined by linear equations in the coordinates of $\mathscr{H}_{k,l}$ since the $e_{\alpha \beta}$ are related to these coordinates by a change of basis. We let
\[
P^{\le \chi}: \mathscr{H}_{k,l} \longrightarrow \mathscr{H}_{k,l}^{\le \chi}
\]
denote the orthogonal projection onto this subspace. Recall that $\mathscr{P}_{k,l}$ denotes the subspace of functions taking values in probability distributions on $\Sigma^l$ (Definition \ref{defn:pkl}). We define
\[
\mathscr{P}^{\le \chi}_{k,l} = \mathscr{P}_{k,l} \cap \mathscr{H}^{\le \chi}_{k,l}\,.
\]
By construction this is an $m$-flat submanifold in the sense of \citet{amari2016information} since it is cut out by linear equations in the statistical manifold $\mathscr{P}_{k,l}$. The true distribution $q$ gives a point in $\mathscr{P}_{k,l}$ (or equivalently $\mathcal{C}_{k,l}$ does). Hence the following is well-defined:

\begin{definition}  \label{defn: truncated distribution}
We define
\[
q^{(\chi)} = \operatorname{argmin}_{p \in \mathscr{P}_{k,l}^{\le \chi}} D_{KL}(q \, \| \, p)
\]
to be the unique distribution in $\mathscr{P}_{k,l}^{\le \chi}$ which minises the KL divergence to the original distribution $q$. We refer to this as the \emph{effective} true distribution associated to the mode cutoff $\chi$.
\end{definition}
\begin{remark}
The uniqueness of the $q^{(\chi)}$ is proved in \citet{Amari2009}.
\end{remark}
\begin{remark} There is a more straightforward way to construct a truncation, but it has some subtleties. The \emph{truncation} of $\mathcal{C}_{k,l}: \mathscr{V}_k \longrightarrow \mathbb{R}^l$ defined by
\begin{equation}
\mathcal{C}^{\le \chi}_{k,l} = P^{\le \chi}\big( \mathcal{C}_{k,l} \big) = \sum_{\alpha \le \chi} s_\alpha u_\alpha \circ \hat{v}_\alpha^*.
\end{equation}
In the notation of propensities (Lemma \ref{lemma:mode_decomp_cond_prob}) we have for $x \in \Sigma^k, \tau \in \Sigma$
\begin{equation}
\mathcal{C}^{\le \chi}_{k,l}(x) = \sum_{y \in \Sigma^l} \sum_{\alpha \le \chi} q(y \mid x, \alpha) q(\alpha) \, y\,.
\end{equation}
Note however that $\mathcal{C}^{\le \chi}_{k+1}(x)(y)$ is not necessarily a probability distribution on $\Sigma^l$. Indeed these quantities can be negative. We can remove these negatives and normalise to produce a conditional probability distribution over $y$ given by
\begin{equation}
    q^{(\chi, b)}(y|x) := \frac{\max \big\{0, \sum_{\alpha \leq \chi} q(y|x, \alpha)q(\alpha) \big\}}{\sum_{y \in \Sigma^l} \max \big\{0, \sum_{\alpha \leq \chi} q(y|x, \alpha)q(\alpha) \big\} }\,.
\end{equation}
This has the advantage of being simpler to describe explicitly than $q^{(\chi)}$ but is theoretically less well-justified. We do not consider the distribution $q^{(\chi, b)}$ further in this paper.
\end{remark}

\begin{remark}
Here we have treated a single pair $k, l$ at a time. But we note that $q$ satisfies the language condition (Definition \ref{defn:language}) which ties the $q_k$ for all $k$ to one another. We can define simultaneous truncations of all $q_k$ by a choice of mode cutoff $\chi^{k,l}$ for each $k,l$ since this leads to an $m$-flat submanifold of the statistical manifold of all languages.
\end{remark}

\section{Examples}\label{section:examples}

The purpose of this section is to briefly argue that the singular values of the SVD of the fundamental tensor, arranged as a linear transformation $\mathcal{C}_{k,l}$, is a reasonable notion of ``pattern'' in sequence models, including tokenised natural language. First we exhibit some simple abstract modes, before giving empirical examples. We note that tensor decompositions are a common technique for representing structure in natural language; see \citep{anandkumar2012method} and the references therein.

\subsection{Theoretical}

\begin{definition} For $k, l \in \mathbb{Z}_{>0}$, an \emph{absolute bigram} is a sequence $xy$ where $x \in \Sigma^k$ and $y \in \Sigma^l$ with the property that
\begin{itemize}
    \item[(a)] $q(xz) = 0$ for all $z \in \Sigma^l$ with $z \neq y$.
    \item[(b)] $q(ty) = 0$ for all $t \in \Sigma^k$ with $t \neq x$.
\end{itemize}
\end{definition}

\begin{lemma}\label{lemma:absolute_bigram} If $xy$ is an absolute bigram, then $q(x)^{1/2} x$ is a right singular vector with corresponding left singular vector $y$ and singular value $s = q(x)^{1/2}$, of $\mathcal{C}_{k, l} \in \mathscr{H}_{k, l} $.
\end{lemma}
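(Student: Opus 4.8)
The plan is to verify directly that the given data satisfies the characterisation of singular vectors from Section~\ref{section:background_svd}, namely that $v := q(x)^{1/2} x$ is a unit eigenvector of $\mathcal{C}_{k,l}^\dagger \mathcal{C}_{k,l}$ on $\mathscr{V}_k$ with eigenvalue $s^2 = q(x)$, that $y$ is a unit vector in $\mathbb{R}^{\Sigma^l}$, and that $\mathcal{C}_{k,l} v = s\, y$. By Remark~\ref{remark:normalised_symbol}, $q(x)^{1/2} x$ already has unit norm in $\mathscr{V}_k$, and $y \in \Sigma^l$ has unit norm under the dot product \eqref{eq:token_dot_product}, so these normalisation facts are immediate.

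First I would compute $\mathcal{C}_{k,l}(x) = \sum_{z \in \Sigma^l} q(z|x)\, z$. Condition (a) of the absolute bigram says $q(xz) = 0$ for $z \neq y$, hence $q(z|x) = 0$ for $z \neq y$, and since $\sum_z q(z|x) = 1$ we get $q(y|x) = 1$; therefore $\mathcal{C}_{k,l}(x) = y$. Consequently $\mathcal{C}_{k,l}(q(x)^{1/2} x) = q(x)^{1/2} y = s\, y$, which is the relation $\mathcal{C}_{k,l} v_\alpha = s_\alpha u_\alpha$ with $u = y$. Next I would verify the adjoint relation, or equivalently compute $\mathcal{C}_{k,l}^\dagger y$ and show it equals $s\, v = q(x) x$. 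Using Remark~\ref{remark:compute_dagger}, $\mathcal{C}_{k,l}^\dagger = D^{-2} \mathcal{C}_{k,l}^*$ where $\mathcal{C}_{k,l}^*$ is the token-basis transpose, so $\mathcal{C}_{k,l}^*(y) = \sum_{t \in \Sigma^k} q(y|t)\, t$. Condition (b) says $q(ty) = 0$ for $t \neq x$, so $q(y|t) = 0$ for all such $t$ (recall $q(t) > 0$), leaving $\mathcal{C}_{k,l}^*(y) = q(y|x)\, x = x$. Then $\mathcal{C}_{k,l}^\dagger y = D^{-2} x = q(x)\, x = q(x)^{1/2}\cdot q(x)^{1/2} x = s\, v$, as required. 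Alternatively one can verify the adjointness identity \eqref{eq:adjoint_cal} directly: for $z \in \Sigma^l$, $\langle \mathcal{C}_{k,l} x, z\rangle_{\mathbb{R}^{\Sigma^l}} = \langle y, z\rangle_{\mathbb{R}^{\Sigma^l}} = \delta_{y,z}$, while $\langle x, q(x) x\rangle_{\mathscr{V}_k} = q(x)\cdot q(x)^{-1} = 1$ when matched against $y$, and for $z\neq y$ one uses condition (b) to see the pairing vanishes.

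Having shown $\mathcal{C}_{k,l} v = s u$ and $\mathcal{C}_{k,l}^\dagger u = s v$ with $v$ a unit vector in $\mathscr{V}_k$, $u = y$ a unit vector in $\mathbb{R}^{\Sigma^l}$, and $s = q(x)^{1/2} > 0$, it follows that $\mathcal{C}_{k,l}^\dagger \mathcal{C}_{k,l} v = s^2 v$, so $v$ is a genuine right singular vector with singular value $s$ and left singular vector $u$ in the sense of the Lemma in Section~\ref{section:background_svd}; this completes the proof. I do not anticipate a serious obstacle here — the argument is a short computation — but the one point requiring care is the bookkeeping around the two different inner products and the identification $\mathcal{C}_{k,l}^\dagger = D^{-2}\mathcal{C}_{k,l}^*$ from Remark~\ref{remark:compute_dagger}, so that the factor of $q(x)$ lands in the right place; doing the adjoint check via \eqref{eq:adjoint_cal} directly sidesteps this and may be the cleaner route to present.
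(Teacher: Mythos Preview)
Your proposal is correct and follows essentially the same route as the paper: both arguments use the absolute bigram conditions to reduce $\mathcal{C}_{k,l}(x)$ to $y$ and $\mathcal{C}_{k,l}^*(y)$ to $x$, then invoke Remark~\ref{remark:compute_dagger} (i.e.\ $\mathcal{C}_{k,l}^\dagger = D^{-2}\mathcal{C}_{k,l}^*$) to obtain $\mathcal{C}_{k,l}^\dagger(y) = q(x)\,x = s\,v$, together with the unit-norm observations from Remark~\ref{remark:normalised_symbol}. Your extra remark about verifying \eqref{eq:adjoint_cal} directly is a harmless alternative presentation, not a different method.
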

\begin{proof}
Let $xy$ be an absolute bigram. Then it follows from the definition of absolute bigrams that $q(y|x) = 1$. Note that $q(z|x) = 0$ if and only if $q(xz) = 0$, and $q(y |t) = 0$ if and only if $q(ty) = 0$. Note that by Remark \ref{remark:normalised_symbol}, $q(x)^{1/2} x$ is a unit vector in $\mathscr{V_k}$ and $y$ is a unit vector in $\mathbb{R}^{\Sigma^l}$ which are respectively the domain and codomain (as inner product spaces) of $\mathcal{C}_{k,l}$. Then
\begin{align*}
\mathcal{C}_{k,l}(q(x)^{1/2}x) &= q(x)^{1/2} \sum_{z} q(z|x) z = q(y|x) q(x)^{1/2} y = q(x)^{1/2} y \\
\mathcal{C}_{k,l}^*(y) &= \sum_{t} q(y|t) t = q(y|x) x = x\,.
\end{align*}
Hence by Remark \ref{remark:compute_dagger}, $\mathcal{C}_{k,l}^\dagger(y) = q(x) x = q(x)^{1/2} ( q(x)^{1/2} x )$ as claimed in (a). 
\end{proof}

\begin{remark}\label{remark:absolute_bigram_propensities}
In the terminology of Definition \ref{defn:propensity}, with $\alpha$ denoting the mode index corresponding to an absolute bigram $x y$, the propensities are $q(\alpha) = q(x) = q(xy)$ and
\[
    q(z|t, \alpha) = q(x)^{-1/2} y^*(z) q(x)^{1/2} \hat{x}^*(t) = \delta_{z, y}  \hat{x}^*(t) = q(x)^{-1}\delta_{z, y}\delta_{t, x} \,.
\]
\end{remark}

\begin{lemma}\label{lemma:modes_model} If $x y$ is an absolute bigram with mode index $\alpha$ then $\langle f_w, e_{\alpha \alpha} \rangle_{\mathscr{H}} = p(xy|w) q(x)^{-1/2}$.
\end{lemma}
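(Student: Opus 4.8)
The plan is to apply Lemma~\ref{lemma:calc_comp_fw} with $\beta = \alpha$ and then simplify the resulting sum using the explicit form of the propensities for an absolute bigram from Remark~\ref{remark:absolute_bigram_propensities}. Specifically, by Lemma~\ref{lemma:calc_comp_fw} we have
\[
\langle f_w, e_{\alpha\alpha}\rangle_{\mathscr{H}_{k,l}} = \sum_{x' \in \Sigma^k, y' \in \Sigma^l} p(x',y'|w)\, e_{\alpha\alpha}(x')(y') = \sum_{x' \in \Sigma^k, y' \in \Sigma^l} p(x',y'|w)\, \hat{v}_\alpha^*(x') u_\alpha^*(y')\,.
\]
So the first step is to identify $\hat{v}_\alpha^*(x') u_\alpha^*(y')$ for the absolute bigram mode.

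For the second step, I would use Lemma~\ref{lemma:absolute_bigram}, which tells us the right singular vector is $v_\alpha = q(x)^{1/2} x$ and the left singular vector is $u_\alpha = y$ (with $s_\alpha = q(x)^{1/2}$). Since $\{v_\beta\}$ is an orthonormal basis of $\mathscr{V}_k$ with respect to $\langle -,-\rangle_{\mathscr{V}_k}$, we have $\hat{v}_\alpha^*(x') = \langle v_\alpha, x'\rangle_{\mathscr{V}_k} = q(x)^{1/2}\langle x, x'\rangle_{\mathscr{V}_k} = q(x)^{1/2} \delta_{x,x'} q(x)^{-1} = q(x)^{-1/2}\delta_{x,x'}$, using Remark~\ref{remark:normalised_symbol}. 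Similarly $u_\alpha^*(y') = \langle y, y'\rangle_{\mathbb{R}^{\Sigma^l}} = \delta_{y,y'}$. (Equivalently, one can read both factors directly off the formula for $q(z|t,\alpha)$ in Remark~\ref{remark:absolute_bigram_propensities}, since $e_{\alpha\alpha}(x')(y') = \hat{v}_\alpha^*(x') u_\alpha^*(y')$ and $q(z|t,\alpha) = s_\alpha^{-1}\hat{v}_\alpha^*(t)u_\alpha^*(z)$ by Definition~\ref{defn:propensity}.)

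The third step is to substitute these Kronecker deltas back into the sum: only the single term $x' = x, y' = y$ survives, giving $\langle f_w, e_{\alpha\alpha}\rangle_{\mathscr{H}_{k,l}} = p(x,y|w)\, q(x)^{-1/2}\cdot 1 = p(xy|w) q(x)^{-1/2}$, which is the claim (recalling $p(x,y|w) = p(y|x,w)q(x)$ and the abbreviation $p(xy|w)$).

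There is no real obstacle here --- the proof is a short computation once the right lemmas are assembled. The only point requiring a little care is keeping track of which inner product is used for which factor ($\mathscr{V}_k$ on the domain side producing the $q(x)^{-1/2}$ normalisation, the plain dot product on the codomain side), and making sure the $\delta_{x,x'}q(x)^{-1}$ from $\langle x,x'\rangle_{\mathscr{V}_k}$ combines correctly with the $q(x)^{1/2}$ from $v_\alpha$ to yield the stated power of $q(x)$. So the "hard part" is essentially bookkeeping of normalisation constants.
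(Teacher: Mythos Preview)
Your proposal is correct and follows essentially the same route as the paper's proof: invoke Lemma~\ref{lemma:calc_comp_fw}, use Lemma~\ref{lemma:absolute_bigram} to identify $u_\alpha = y$ and $v_\alpha = q(x)^{1/2}x$, and then reduce the sum via the resulting Kronecker deltas to $p(xy|w)q(x)^{-1/2}$. The paper's version is slightly terser (it writes $\hat{v}_\alpha^*(x) = \langle q(x)^{1/2}x, x\rangle_{\mathscr{V}_k}$ in one line), but the content is identical.
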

\begin{proof}
By Lemma \ref{lemma:absolute_bigram} we have $u_\alpha = y, v_\alpha = q(x)^{1/2} x$ so by Lemma \ref{lemma:calc_comp_fw} we have
\begin{align*}
\langle f_w, e_{\alpha \alpha} \rangle_{\mathscr{H}} &= p(x y|w) \hat{v}_\alpha^*(x) = p(x y|w) \langle q(x)^{1/2} x, x \rangle_{\mathscr{V}_k} = p(x y|w)q(x)^{-1/2}
\end{align*}
as claimed.
\end{proof}

While absolute bigrams may be rare, there is a generalisation which is more common:

\begin{definition}\label{defn:collective_bigram} A \emph{collective bigram} is a pair $(S, y)$ consisting of a nonempty set $S \subseteq \Sigma^k$ and $y \in \Sigma^l$ satisfying
\begin{itemize}
    \item[(a)] $q(y|s) = 1$ for all $s \in S$.
    \item[(b)] $q(y|t) = 0$ if $t \notin S$.
\end{itemize}
\end{definition}

As above we can prove collective bigrams give modes:

\begin{lemma} If $(S,y)$ is a collective bigram then $v = \frac{q(y)^{1/2}}{|S|^{1/2}} \sum_{s \in S} s$ is a right singular vector with corresponding left singular vector $y$ and singular value $q(y)^{1/2} |S|^{1/2}$ of $\mathcal{C}_{k,l}: \mathscr{V}_k \longrightarrow \mathbb{R}^{\Sigma^l}$.
\end{lemma}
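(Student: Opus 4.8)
The plan is to follow the template of the proof of Lemma~\ref{lemma:absolute_bigram}, with the single string $x$ there replaced by the set $S$. Abbreviate the claimed singular value as $\mu = q(y)^{1/2}|S|^{1/2}$; the goal is to exhibit the SVD relations $\mathcal{C}_{k,l} v = \mu\, y$ and $\mathcal{C}_{k,l}^\dagger y = \mu\, v$ together with $\Vert v\Vert_{\mathscr{V}_k} = 1$ and $\Vert y\Vert_{\mathbb{R}^{\Sigma^l}} = 1$, which is exactly what it means (in the notation of Section~\ref{section:background_svd}) for $v$ and $y$ to be a matched pair of right/left singular vectors with singular value $\mu$.

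First I would record the marginalisation identity used throughout: summing $q(ty) = q(y\mid t)q(t)$ over all $t\in\Sigma^k$ and applying conditions (a) and (b) of Definition~\ref{defn:collective_bigram} gives $q(y) = \sum_{s\in S} q(s)$. I would also note, via Remark~\ref{remark:normalised_symbol}, that for $s\in\Sigma^k$ one has $\Vert s\Vert_{\mathscr{V}_k}^2 = q(s)^{-1}$ and that distinct elements of $\Sigma^k$ are orthogonal in $\mathscr{V}_k$, while $y$ is already a unit vector in $\mathbb{R}^{\Sigma^l}$ under the dot product~\eqref{eq:token_dot_product}. These facts reduce the norm check for $v$ to a short computation with $\{q(s)\}_{s\in S}$ and $q(y)$.

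The forward relation is the easy part: by condition (a), $\mathcal{C}_{k,l}(s) = \sum_{z\in\Sigma^l} q(z\mid s)\, z = y$ for each $s\in S$, so by linearity $\mathcal{C}_{k,l}(v) = \frac{q(y)^{1/2}}{|S|^{1/2}}\sum_{s\in S} y = q(y)^{1/2}|S|^{1/2}\, y = \mu\, y$. For the adjoint I would compute the plain dot-product adjoint first, $\mathcal{C}_{k,l}^\ast(y) = \sum_{t\in\Sigma^k} q(y\mid t)\, t = \sum_{s\in S} s$ using conditions (a) and (b), and then convert via Remark~\ref{remark:compute_dagger}, i.e.\ $\mathcal{C}_{k,l}^\dagger = D^{-2}\mathcal{C}_{k,l}^\ast$ with $D(x) = q(x)^{-1/2}x$, obtaining $\mathcal{C}_{k,l}^\dagger(y) = \sum_{s\in S} q(s)\, s$. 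It then remains to match $\mathcal{C}_{k,l}^\dagger(y)$ with $\mu\, v$ and to confirm $\Vert v\Vert_{\mathscr{V}_k} = 1$.

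I expect the main obstacle to be this final reconciliation, and more broadly the interplay between the unweighted average $\sum_{s\in S} s$ that defines $v$ and the $q(x)^{-1}$ weighting built into the inner product on $\mathscr{V}_k$: when $|S| = 1$ this is invisible and one simply recovers Lemma~\ref{lemma:absolute_bigram}, but for larger $S$ one has to track the probabilities $q(s)$ carefully to pin down the correct normalisation of the right singular vector (equivalently, to identify which linear combination of the $s\in S$ is an eigenvector of $\mathcal{C}_{k,l}^\dagger\mathcal{C}_{k,l}$). Once $\mathcal{C}_{k,l}^\dagger(y)$ has been computed, the remaining checks — unit norm of $v$ and the relation $\mathcal{C}_{k,l}^\dagger y = \mu v$ — reduce to a short calculation using $q(y) = \sum_{s\in S} q(s)$, and the rest of the argument is routine bookkeeping with conditions (a), (b) and Remarks~\ref{remark:normalised_symbol} and~\ref{remark:compute_dagger}.
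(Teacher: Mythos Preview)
Your strategy is exactly the paper's: compute $\mathcal{C}_{k,l}(v)$ directly, compute $\mathcal{C}_{k,l}^\ast(y)$ and convert to $\mathcal{C}_{k,l}^\dagger(y)$ via Remark~\ref{remark:compute_dagger}, then match. Your intermediate computations are also correct. In particular your
\[
\mathcal{C}_{k,l}^\dagger(y)=D^{-2}\mathcal{C}_{k,l}^\ast(y)=\sum_{s\in S}q(s)\,s
\]
is right, and this is where you should stop trusting that ``routine bookkeeping'' will close the argument.

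The obstacle you flag is genuine and does not go away. With $\mu=q(y)^{1/2}|S|^{1/2}$ one has $\mu v=q(y)\sum_{s\in S}s$, whereas $\mathcal{C}_{k,l}^\dagger(y)=\sum_{s\in S}q(s)\,s$; these agree only if $q(s)$ is constant on $S$, which together with $q(y)=\sum_{s\in S}q(s)$ forces $q(s)=q(y)/|S|$. The same issue appears in the norm check: $\Vert v\Vert_{\mathscr{V}_k}^2=\frac{q(y)}{|S|}\sum_{s\in S}q(s)^{-1}$, and by AM--HM this is $\ge |S|$ with equality iff the $q(s)$ are all equal. So neither $\mathcal{C}_{k,l}^\dagger y=\mu v$ nor $\Vert v\Vert_{\mathscr{V}_k}=1$ holds in general. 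The paper's own proof sidesteps this by writing $\mathcal{C}_{k,l}^\dagger(y)=q(y)\,\mathcal{C}_{k,l}^\ast(y)$, which silently treats the diagonal operator $D^{-2}$ (entries $q(x)$) as the scalar $q(y)$; that step is only valid in the $|S|=1$ case of Lemma~\ref{lemma:absolute_bigram}.

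The fix your own computation points to is to take the right singular vector proportional to $\sum_{s\in S}q(s)\,s$ rather than to $\sum_{s\in S}s$. Normalising in $\mathscr{V}_k$ gives $v'=q(y)^{-1/2}\sum_{s\in S}q(s)\,s$, and then $\mathcal{C}_{k,l}(v')=q(y)^{1/2}y$ and $\mathcal{C}_{k,l}^\dagger(y)=q(y)^{1/2}v'$, so the singular value is $q(y)^{1/2}$ (with no factor of $|S|^{1/2}$), which correctly specialises to Lemma~\ref{lemma:absolute_bigram} when $|S|=1$. Either add the hypothesis that $q(s)$ is constant on $S$, or replace $v$ and the singular value as above; your proof sketch then goes through verbatim.
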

\begin{proof}
We have $\mathcal{C}_{k,l}(s) = y$ for $s \in S$, hence $\mathcal{C}_{k,l}(v) = q(y)^{1/2} |S|^{1/2} y$. On the other hand $\mathcal{C}_{k,l}^*(y) = \sum_{s \in S} q(y|s) s = \sum_{s \in S} s$ and hence by Remark \ref{remark:compute_dagger}
\[
\mathcal{C}_{k,l}^\dagger(y) = q(y) \mathcal{C}_{k,l}^*(y) = q(y) \sum_{s \in S} s = q(y) \frac{|S|^{1/2}}{q(y)^{1/2}} v = q(y)^{1/2} |S|^{1/2} v\,.
\]
This proves the claim.
\end{proof}

\subsection{Empirical}

For various $k,l$ we collect matrices of empirical conditional probabilities $p(y|x)$ for token sequences in the Pile \citep{gao2020pile}, see Appendix \ref{section:token_svd}. Note that this is a large sparse matrix, and we do not normalise the columns. Nonetheless the SVD of these matrices is an indication of the kinds of patterns that we can expect to see as right and left singular vectors.

\begin{example}\label{example:k1_l1} For $k = l = 1$ the top singular value is $33.89$ with left singular vector (ordered by absolute value of loading)
\begin{equation}\label{eq:v_k1_l1}
u = 30.84 \tokenbox{,} + 13.20 \tokenbox{.} + 2.77 \tokenbox{of} + 2.39 \tokenbox{and} + \cdots
\end{equation}
and the right singular vector is
\[
v = \kappa \tokenbox{370} + \kappa \tokenbox{Additionally} + \kappa \tokenbox{Similarly} + \kappa \tokenbox{sequently} + \cdots
\]
where $\kappa = 0.0268$. See Figure \ref{fig:k1_l1_example1}.
\end{example}

\begin{example}\label{example:k2_l1} For $k = 2$, $l = 1$ the seventh top singular value is $46.18$ with left singular vector
\begin{equation}
u = 46.15 \tokenbox{to} - 1.58 \tokenbox{and} - 0.53 \tokenbox{of} + \cdots
\end{equation}
and the right singular vector is
\[
v = \kappa \tokenbox{~in}\tokenbox{~response} + \kappa \tokenbox{~in}\tokenbox{~order} + \kappa \tokenbox{were}\tokenbox{~subjected} + \cdots
\]
where $\kappa = 0.0216$. See Figure \ref{fig:k2_l1_example1}.
\end{example}

\begin{figure}[tbp]
    \centering
    \includegraphics[width=\textwidth]{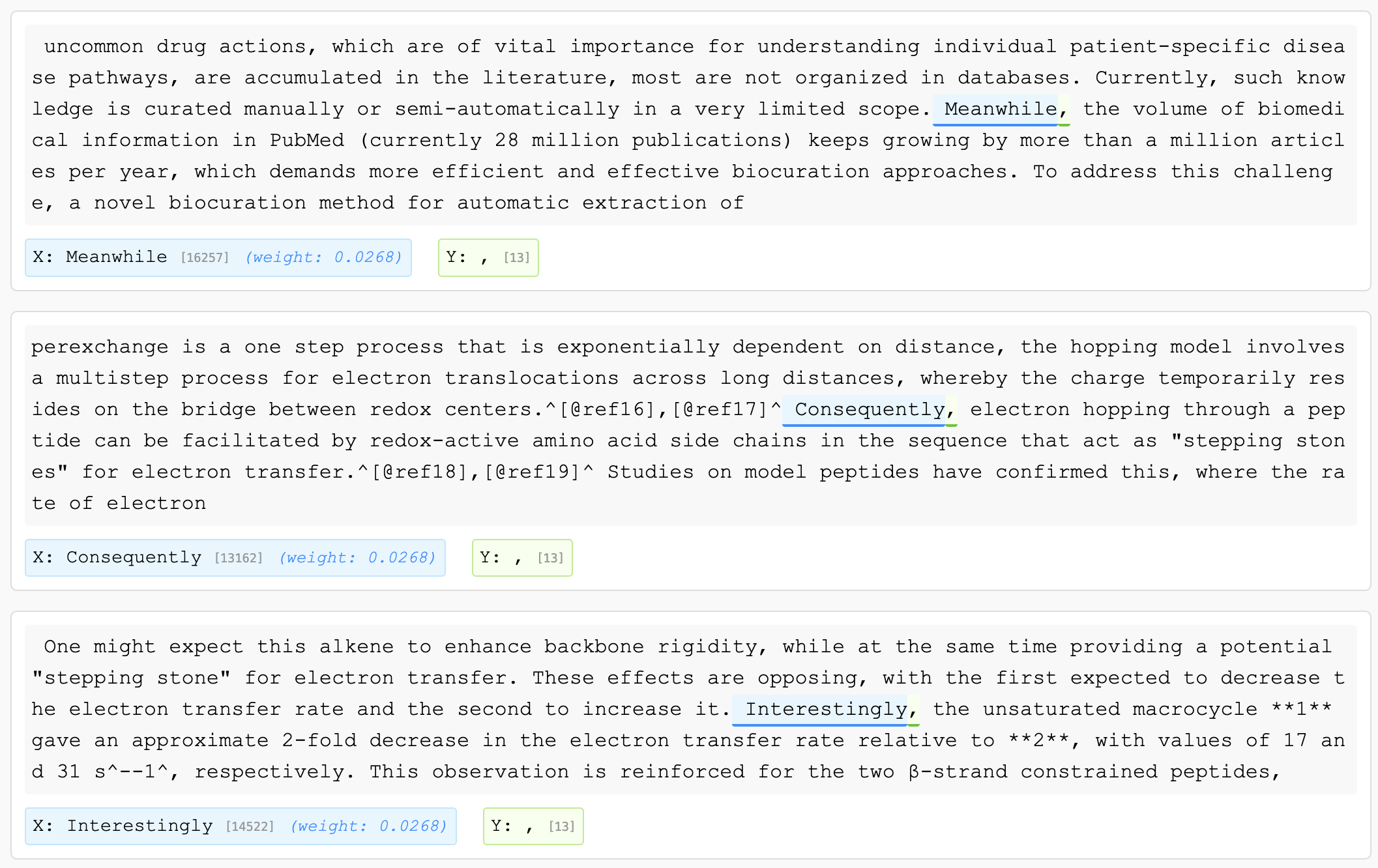}
    \caption{\textbf{Empirical modes.} We show an example of the $x,y$ pair which are heavily loaded in the first empirical mode for $k = 1$, $l = 1$ in our experiments on the Pile. Shown are three text samples. Next to each $X, Y$ token we show the index in the tokeniser.}
    \label{fig:k1_l1_example1}
\end{figure}

\begin{figure}[tbp]
    \centering
    \includegraphics[width=\textwidth]{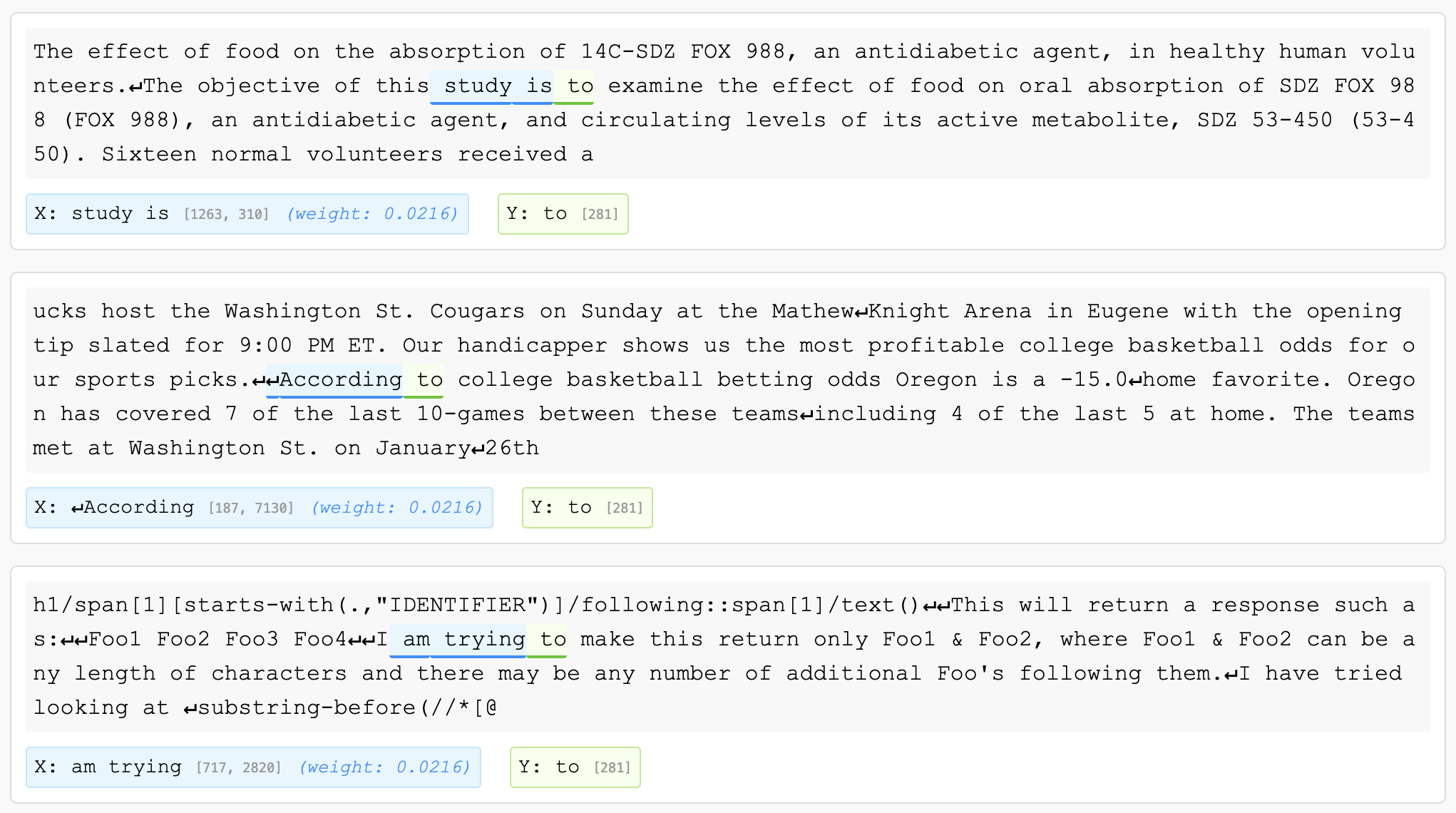}
    \caption{\textbf{Empirical modes.} We show an example of the $x,y$ pair which are heavily loaded in the first empirical mode for $k = 2$, $l = 1$ in our experiments on the Pile. Shown are three text samples. Next to each $X, Y$ token we show the index in the tokeniser.}
    \label{fig:k2_l1_example1}
\end{figure}

\begin{figure}[tbp]
    \centering
    \includegraphics[width=\textwidth]{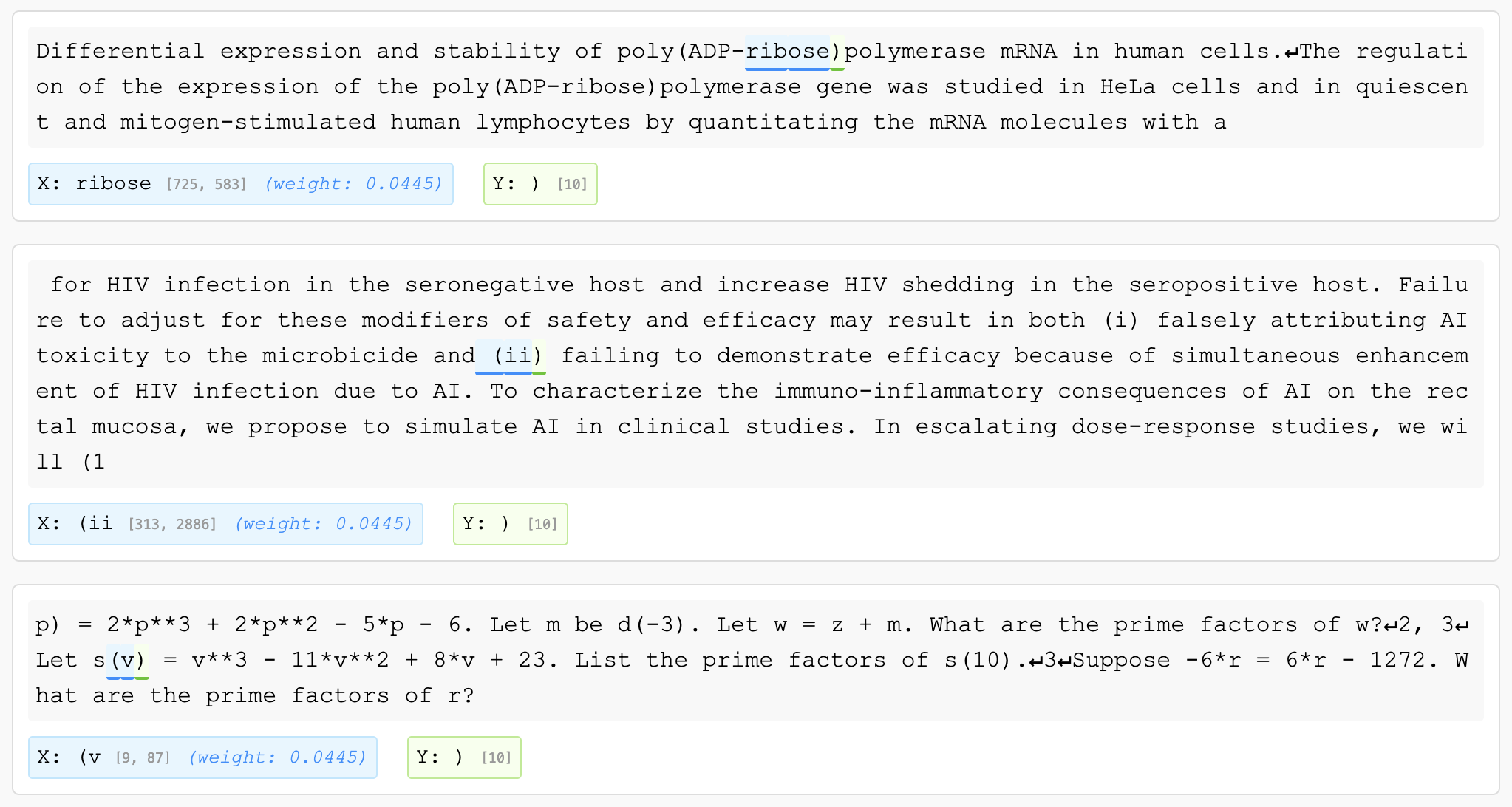}
    \caption{\textbf{Empirical modes.} We show an example of the $x,y$ pair which are heavily loaded in the first empirical mode for $k = 2$, $l = 1$ in our experiments on the Pile. Shown are three text samples. Next to each $X, Y$ token sequence we show the indices in the tokeniser.}
    \label{fig:k2_l1_example2}
\end{figure}

We often observe that the right singular vector has a large number of $x$ token sequences with the same largest ``loading'' (i.e. coefficient in the right singular vector) whereas there is a clear dominating token sequence $y$ (and sometimes as in \eqref{eq:v_k1_l1} a second) and then a long tail of token sequences with coefficients about an order of magnitude smaller. We can think of these as some perturbation of the collective bigrams of Definition \ref{defn:collective_bigram}.

\begin{example}\label{example:k2_l1_example2} For $k = 2$, $l = 1$ component $16$ has singular value $22.45$ with left singular vector
\begin{equation}
u = 22.44 \tokenbox{)} + 0.57 \tokenbox{).} + 0.28 \tokenbox{),} + \cdots
\end{equation}
and the right singular vector is
\[
v = \kappa \tokenbox{rib}\tokenbox{ose} + \kappa \tokenbox{~(}\tokenbox{ii} + \kappa \tokenbox{(}\tokenbox{v} + \cdots
\]
where $\kappa = 0.0445$. See Figure \ref{fig:k2_l1_example2}. Closing brackets and other punctuation and structural features are common in the SVD.
\end{example}

\begin{example}\label{example:k3_l3_example1} For $k = 3$, $l = 3$ component $28$ has singular value $5.58$ with left singular vector
\begin{equation}
u = 5.58 \tokenbox{,}\tokenbox{~and}\tokenbox{~the} + 0.05 \tokenbox{.}\tokenbox{~the}\tokenbox{~database} + \cdots
\end{equation}
and the right singular vector is
\[
v = \kappa \tokenbox{,}\tokenbox{~bright}\tokenbox{field} + \kappa \tokenbox{population}\tokenbox{~is}\tokenbox{~383} + \cdots
\]
where $\kappa = 0.1786$. See Figure \ref{fig:k3_l3_example1}.
\end{example}

\begin{example}\label{example:k3_l3_example2} Many modes involve newlines in various combinations with punctuation tokens. For $k = 3$, $l = 3$ component $37$ has singular value $5.05$ with left singular vector
\begin{equation}
u = 5.05 \tokenbox{:}\tokenbox{\\n}\tokenbox{\\n} + 0.08 \tokenbox{http}\tokenbox{://}\tokenbox{www}
\end{equation}
and the right singular vector is a linear combination of many token sequences. See Figure \ref{fig:k3_l3_example2}.
\end{example}

For more examples see Appendix \ref{appendix:more_modes},

\begin{remark} In many of these examples we see modes for $k,l$ (such as that in Example \ref{example:k3_l3_example2}) which are arguably ``descended'' from modes with $k' < k$ or $l' < l$. It would be interesting to develop this idea of hierarchical structure of modes across $k,l$ but we leave that to future work.
\end{remark}

\section{Mode Insensitivity}\label{section:mode_insensitivity}

We develop two technical hypothesis which express that in a region of parameter space a model is \emph{insensitive} to the information in higher modes. Consider a parametric model with parameters $w \in W$ defining conditional probabilities $p(y|x,w)$ for $x \in \Sigma^k$ and $y \in \Sigma^l$. In this paper we assume $W$ is compact and that $p(y|x,w)$ is continuously differentiable with respect to $W$ for all $x,y$.

We fix $k, l > 0$ and set $\mathscr{H} = \mathscr{H}_{k,l}$. We let $w_1,\ldots,w_d$ denote local coordinates at $w^* \in W$, so that the tangent space $T_{w^*}(W)$ is spanned by partial derivatives $\frac{\partial}{\partial w_i}$. 

\begin{definition} The log probability map $\Phi: W \longrightarrow \mathscr{H}$ is defined by
\[
\Phi(w)(x) = \sum_{y \in \Sigma^l} \log p(y|x,w) \cdot y
\]
\end{definition}

The Jacobian of $\Phi$ at a point $w^*$ is a linear transformation. Since $\mathscr{H}$ is a (finite-dimensional) affine space we identify it with its tangent map at a given point.

\begin{definition}
The Jacobian of the log probability map is
\[
\Psi_{w^*}: T_{w^*}(W) \longrightarrow \mathscr{H}
\]
which sends each tangent vector to a function representing the corresponding direction of change in log-probabilities:
\begin{equation}
\Psi_{w^*}\left(\frac{\partial}{\partial w_i}\right)(x) = \sum_{y \in \Sigma^l} \frac{\partial}{\partial w_i} \log p(y|x,w^*) \cdot y
\end{equation}
where we view $y \in \Sigma^l$ as a basis vector in $\mathbb{R}^{\Sigma^l}$.
\end{definition}

Any linear transformation from $T_{w^*}(W)$ to $\mathscr{H}$ can be viewed as a $\mathscr{H}$-valued cotangent vector and so sometimes we will write the Jacobian of the log probability map simply as
\begin{equation}
\sum_{y \in \Sigma^l} \nabla_w \log p(y|x,w^*) \cdot y
\end{equation}
where we assume $W \subseteq \mathbb{R}^d$ is given the canonical metric on its tangent space.

The image of $\Psi_{w^*}$ characterizes which directions in function space the model can immediately move toward during gradient-based optimisation. If a particular mode in the data distribution lies outside this image, the model will struggle to capture it efficiently during learning.

We think of the conditional distributions of $q,q'$ as vectors in $\mathscr{H}$ as in Section \ref{section:tensor_decomp}, and abuse notation by writing $q$ for $\mathcal{C}_{k,l}$. Given the vector $q - q'$ in $\mathscr{H}$ the pairing with $\Psi_{w^*}$ is a linear functional on $T_{w^*}(W)$ defined by sending a tangent vector $\nu$ to
\begin{align}
\big\langle \Psi_{w^*}(\nu), q - q' \big\rangle_{\mathscr{H}} &= \sum_{x \in \Sigma^k} \big\langle \Psi_{w^*}(\nu)(x), q(-|x) - q'(-|x) \big\rangle_{\mathbb{R}^l} q(x) \nonumber\\
&= \sum_{x \in \Sigma^k} \sum_{y \in \Sigma^l} \big(q(x,y) - q'(x,y)\big) \sum_{i=1}^d \nu_i \frac{\partial}{\partial w_i} \log p(y|x,w^*)\,. \label{eq:gradient_insens_defn}
\end{align}
using the $L^2$-norm on $T^{w^*}(W) = T_{w^*}(W)^*$ the norm of this is the $L^2$-norm of the vector
\[
\big\Vert \big\langle \Psi_{w^*}(-), q - q' \big\rangle_{\mathscr{H}} \big\Vert_2 := \bigg\| \sum_{x \in \Sigma^k} \sum_{y \in \Sigma} \big( q(x, y) - q'(x, y) \big) \nabla_w \log p(y|x, w) \bigg\|_2\,.
\]
Similarly we can pair $\Phi(w)$ with $q - q'$
\begin{equation}\label{eq:log_prob_insens_defn}
\big\langle \Phi(w), q - q' \big\rangle_{\mathscr{H}} = \sum_{x \in \Sigma^k} \sum_{y \in \Sigma^l} \big(q(x,y) - q'(x,y)\big) \log p(y|x,w)\,.
\end{equation}

In the following two definitions the \emph{model} means $p(y|x,w)$ and implicitly we mean \emph{insensitivity to the difference between $q,q'$}.

\begin{definition}\label{defn:gradient_insensitive} The model is \emph{gradient-insensitive} for the constant $A$ at $w^*$ if
\begin{equation}
\big\Vert \big\langle \Psi_{w^*}(-), q - q' \big\rangle_{\mathscr{H}} \big\Vert_2 < A\,.
\end{equation}
We say this condition hold in some set $W' \subseteq W$ if it holds at every point.
\end{definition}

\begin{definition}\label{defn:insensitive} The model is \emph{log-probability-insensitive} for the constant $B$ at $w^*$ if
\begin{equation}
\Big| \big\langle \Phi(w^*), q - q' \big\rangle_{\mathscr{H}} \Big| < B\,.
\end{equation}
We say this condition hold in some set $W' \subseteq W$ if it holds at every point.
\end{definition}

\subsection{Motivation for Gradient-Insensitivity}

The intuitive content of the gradient-insensitivity condition is quite straightforward, and can be summarised by the phrase ``can't learn''. To say that $\langle \Psi_{w^*}(\nu), q - q' \rangle_{\mathscr{H}}$ is small is to say that it is hard to change the model's log probabilities in directions in the space of predictions of $y$ given $x$, which are aligned with the difference between $q, q'$. 

For example suppose that $p(y|x,w)$ is computed by a neural network. This model has a finite capacity to memorize absolute bigrams, so if $q$ contains more absolute bigrams than $p$ has capacity, and $q'$ represents truncating the true distribution at some $\chi$ that is past this capacity, it is reasonable to suppose that $p$ has no gradient for learning the difference $q - q^{(\chi)}$. 

More generally we follow \cite{rogers2004semantic} in supposing that before models have learned coarse distinctions between categories it is hard to learn distinctions between finer categories; this hierarchical structure in the data distribution means that is reasonable to suppose the gradient for learning higher modes is small before lower modes are learned.

\subsection{Motivation for Log-Probability Insensitivity}\label{section:motivation_log_prob_insens}

The motivation for log-probability-insensitivity is more subtle.

Naively to make $\big\langle \Phi(w), q - q' \big\rangle_{\mathscr{H}}$ small in norm we would have to arrange that $\log p(y|x,w)$ is not too large whenever $q(x,y)$ is different from $q'(x,y)$, for example in the case where $q'$ is obtained by mode truncation, whenever the prediction of $y$ from $x$ depends non-trivially on higher modes. It is not immediately clear that this is reasonable, and so we need a more careful analysis.

Note that using the basis of $\mathscr{H}$ and writing $\langle - \rangle$ for $\langle - \rangle_{\mathscr{H}}$ we have
\begin{equation}\label{eq:phiqqprimebound}
\big\langle \Phi(w), q - q' \big\rangle = \sum_{\alpha \in \Lambda, \beta \in \Lambda^{++}} \big\langle \Phi(w), e_{\alpha \beta} \big\rangle \big\langle e_{\alpha \beta}, q - q' \big\rangle\,.
\end{equation}
Suppose that $q' = q^{(\chi)}$ is produced by mode truncation as in Section \ref{section:effective_true}. Then the idea is that the support of $q - q^{(\chi)}$ is mostly in the modes $> \chi$. To be more precise, $q^{(\chi)}$ is the closest point of $\mathscr{P}^{\le \chi}_{k,l}$ to $q$, and this may be different from the projection $P^{\le \chi}(q)$ which is a vector in $\mathscr{H}^{\le \chi}$ but need not be a distribution. However it is reasonable to assume that $q^{(\chi)}$ is not \emph{too} different from $P^{\le \chi}(q)$ if $\chi$ is large, since both operations should only have a small effect on most sequences.

Notice that for $\alpha, \beta \le \chi$
\begin{align}
\big\langle e_{\alpha \beta}, q - q^{(\chi)} \big\rangle &= \big\langle e_{\alpha \beta}, q - P^{\le \chi}(q) \big\rangle + \big\langle e_{\alpha \beta}, P^{\le \chi}(q) - q^{(\chi)} \big\rangle\nonumber\\
&= \big\langle e_{\alpha \beta}, P^{\le \chi}(q) - q^{(\chi)} \big\rangle\,. \label{eq:alpha_beta_pchi}
\end{align}
By Parseval's identity
\[
\Vert P^{\le \chi}(q) - q^{(\chi)} \Vert^2 = \sum_{\alpha \le \chi, \beta \le \chi} \Big| \big\langle e_{\alpha \beta}, P^{\le \chi}(q) - q^{(\chi)} \big\rangle \Big|^2
\]
and so making this small, which amounts to taking $\chi$ sufficiently large, makes all the terms in \eqref{eq:alpha_beta_pchi} simultaneously small. More formally
\begin{align}
\big| \big\langle \Phi(w), q - q' \big\rangle \big| &\le \Big| \sum_{\alpha \le \chi, \beta \le \chi} \big\langle \Phi(w), e_{\alpha \beta} \big\rangle \big\langle e_{\alpha \beta}, q - q' \big\rangle \Big|\nonumber\\
&\qquad + \Big| \sum_{\alpha > \chi \text{ or } \beta > \chi} \big\langle \Phi(w), e_{\alpha \beta} \big\rangle \big\langle e_{\alpha \beta}, q - q' \big\rangle \Big|\,. \label{eq:cauchy_schwartz}
\end{align}
By Cauchy-Schwartz the first term is bounded by
\[
\Vert \Phi(w) \Vert^2 \Vert P^{\le \chi}(q) - q^{(\chi)} \Vert^2\,.
\]
Let's suppose this is $< A/2$. To bound the second summand in \eqref{eq:cauchy_schwartz} note that
\begin{equation}
\sum_{\alpha > \chi \text{ or } \beta > \chi} \big\langle \Phi(w), e_{\alpha \beta} \big\rangle \big\langle e_{\alpha \beta}, q \big\rangle\, = \sum_{\alpha > \chi \text{ or } \beta > \chi} \big\langle \Phi(w), e_{\alpha \beta} \big\rangle \delta_{\alpha \beta} s_\alpha \nonumber = \sum_{\alpha > \chi } \big\langle \Phi(w), e_{\alpha \alpha} \big\rangle s_\alpha\,.\label{eq:insensitive_bound_needed}
\end{equation}
Now
\begin{align*}
\big\langle \Phi(w), e_{\alpha \alpha} \big\rangle = \sum_{x \in \Sigma^k} \sum_{y \in \Sigma^l} e_{\alpha \alpha}(x)(y) \log p(y|x,w) q(x)\,.
\end{align*}
To give a plausibility argument for bounding this term we now make the assumption that all the $\alpha$'s are absolute bigrams. Then $s_\alpha = q(\sigma \tau)^{1/2}$ so
\[
\big\langle \Phi(w), e_{\alpha \alpha} \big\rangle s_\alpha = q(\sigma)^{-1} \log p(\tau | \sigma, w) q(\sigma) q(\sigma \tau)^{1/2} = \log p(\tau|\sigma, w) q(\sigma \tau)^{1/2}\,.
\]
To bound \eqref{eq:cauchy_schwartz} by $A$ we need
\[
\log p(\tau | \sigma, w) < \frac{A}{2 q(\sigma \tau)^{1/2} C}
\]
where $C$ is the number of modes $> \chi$. The total number of modes is $|\Sigma|^l$. Here we make an assumption on our data distribution, namely, that for sufficiently long sequences $x \in \Sigma^z$ we have $q(x) \le 2^{-zH}$ for some entropy rate $H$ with high probability \citep[\S 3.1]{Thomas2006}. Then since $q(\sigma \tau)$ is a sequence of length $k + l$ if the entropy rate is $H$ we have $q(\sigma \tau) \le 2^{-(k+l)H}$. Hence
\begin{align*}
q(\sigma \tau)^{1/2} C &\le 2^{-\frac{(k+l)}{2} H} |\Sigma|^l\\
&= 2^{-\frac{k+l}{2} H} 2^{l \log_2|\Sigma|}\\
&= 2^{l \log_2|\Sigma| - \frac{k+l}{2} H}
\end{align*}
and so
\begin{equation}\label{eq:varepsilon_bound_insens}
\frac{A}{2q(\sigma \tau)^{1/2} C} \ge 2^{\frac{k+l}{2} H - l \log_2|\Sigma| - 1} A
\end{equation}
So to recap, in order to get log-probability-insensitivity for $A$ we need $\Vert P^{\le \chi}(q) - q^{(\chi)} \Vert$ to be small enough relative to $\Vert \Phi(w) \Vert$ and in addition it would suffice (if all the modes were absolute bigrams) to have
\begin{equation}\label{eq:bound_ok_or_not}
\log p(\tau | \sigma, w) < 2^{\frac{k+l}{2} H - l \log_2|\Sigma| - 1} A
\end{equation}
for all these bigrams. Note that this is trivially true if the model perfectly predicts all the bigrams corresponding to modes $> \chi$ since then these log probabilities are zero. However we do not want to rely on this. We would prefer to have \eqref{eq:bound_ok_or_not} be a mild constraint because the right hand side is large, and have most of the pressure of the log-probability-insensitivity be on $\Vert P^{\le \chi}(q) - q^{(\chi)} \Vert$ being small.

Let us substitute some reasonable numbers. If $|\Sigma| = 65k$ then $\log_2|\Sigma| = 16$ and we take $H \in [1,10]$. Suppose $H = 1$ (this is the pessimistic case since it leads to slower decay in probabilities). Then the exponent is $\frac{k}{2} - \frac{31 l}{2} - 1$. Thus provided
\[
k > 2 + 31 l
\]
the bound in \eqref{eq:bound_ok_or_not} depends more on $A$ than it does on factors to do with the data distribution or $k,l$. For next-token prediction $l = 1$ and modern transformer language models are typically trained with context lengths $k$ that are tens or hundreds of thousands of tokens, so this seems reasonable.

\section{Modes and LLC estimation}\label{section:neg}

As in Section \ref{section:effective_true} we fix $k, l > 0$ and set $\Lambda = \Lambda_{k,l}$, $\mathscr{H} = \mathscr{H}_{k,l}$. We consider a parametric model with parameters $w \in W$ defining conditional probabilities $p(y|x,w)$ for $x \in \Sigma^k$ and $y \in \Sigma^l$. We assume $W \subseteq \mathbb{R}^d$ is compact with coordinates $w_1,\ldots,w_d$ and that $p(y|x,w)$ is continuously differentiable with respect to $W$ for all $x,y$. In the following $W' \subseteq W$ denotes a compact set. We assume throughout that $W'$ is large enough that all SGLD chains lie inside this set.

We consider the effective true distribution $q^{(\chi)}$ associated to truncating at a mode $\chi \in \Lambda$. Associated to the two distributions $q, q^{(\chi)}$ we have two empirical negative log likelihoods
\begin{align}
    L_m(w) &= -\frac{1}{m} \sum_{i=1}^m \log p(y_i | x_i, w) \,,\label{eq:L_m_sequence_mod}\\
    L_m^{(\chi)}(w) &= -\frac{1}{m} \sum_{i=1}^m \log p\big(y^{(\chi)}_i | x^{(\chi)}_i, w\big) \,,
\end{align}
where $\{(x_i, y_i)\}_{i=1}^m$ and $\big\{(x_i^{(\chi)},
y_i^{(\chi)})\big\}_{i=1}^m$ are sampled from the true distribution $q(x, y)$ of sequences of length $k + l$ and the truncated distribution $q^{(\chi)}(x, y)$ (\Cref{defn: truncated distribution}), respectively. Their theoretical counterparts are
\begin{align}
    L(w) &= - \sum_{x \in \Sigma^k} \sum_{y \in \Sigma^l} \log p(y|x, w) q(y|x) q(x)\,,\\
    L^{(\chi)}(w) &= - \sum_{x \in \Sigma^k} \sum_{y \in \Sigma^l} \log p(y|x, w) q^{(\chi)}(y|x) q(x)\,.
\end{align}
We first enumerate the hypotheses we require on the model $p(y|x,w)$ for our main theorem. If $W' \subseteq W$ is compact then define
\[  
    F(x, y) = \sup_{w \in W'} \|\nabla_w \log p(y|x, w)\|_2 < \infty
\]
for each $x \in \Sigma^k$ and $y \in \Sigma^l$ which is integrable as $\Sigma^k$ and $\Sigma^l$ are finite sets. Then the family 
\[
    \mathcal{F} = \{ - \nabla_w\log p(y|x, w) \, | \, w \in W'\}
\]
of functions on $x \in \Sigma^k$, $y \in \Sigma^l$ has a integrable envelope function $F(x, y)$: $\|-\nabla_w \log p(y|x, w) \|_2 < F(x, y)$ for all $x \in \Sigma^k$, $y \in \Sigma^l$, and $w \in W'$. 
Then it follows from \citet[\S 19.2]{Vaart_1998} that the family $\mathcal{F}$ is Glivenko-Cantelli, which, by definition, says that
\begin{equation}
    \sup_{w \in W'} \|\nabla_w L_m(w) - \nabla_w L(w) \|_2 \overset{P}{\longrightarrow} 0 \,, \text{\, as $m \to \infty$}.
\end{equation}
We further assume that $p(y|x, w)$ is bounded away from zero for all $x \in \Sigma^k$, $y \in \Sigma^l$, and $w \in W'$ by a fixed constant. Then it follows that the family
\[
    \mathcal{G} = \{-\log p(y|x, w) \, | \, w \in W'\}
\]
of functions on $x \in \Sigma^k$ and $y \in \Sigma^l$ is has a constant envelope function: there exists $G \in \mathbb{R}_{>0}$ such that $|-\log p(y|x, w)| < G$ for all $x \in \Sigma^k$, $y \in \Sigma^l$, and $w \in W'$. Then it follows from \citet[\S 19.2]{Vaart_1998} that the family $\mathcal{G}$ is Glivenko-Cantelli, which, by definition, says that
\begin{equation}
    \sup_{w \in W'}  | L_n(w) -  L(w) | \overset{P}{\longrightarrow} 0 \,, \text{\, as $n \to \infty$}.
\end{equation}

\begin{lemma}\label{lemma: empirical version of negligible modes} Suppose that the model is gradient-insensitive in $W'$ to the difference between $q$ and $q^{(\chi)}$ for some positive constant $A$. Then for any $\xi$, $\delta > 0$, there exists a large enough $m$ such that
\begin{equation}
    \mathbb{P}\bigg( \big\| \nabla_w L_m(w) - \nabla_w L_m^{(\chi)}(w) \big\|_2 < A + \xi \bigg) > 1 - \delta \, ,
\end{equation}
for all $w \in W'$.
\end{lemma}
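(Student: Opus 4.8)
The plan is to route through the \emph{population} gradients $\nabla_w L$ and $\nabla_w L^{(\chi)}$, which form the bridge between the two empirical gradients. First I would compute the population gradient difference directly: since $\nabla_w L(w) = -\sum_{x,y} q(x,y)\nabla_w\log p(y|x,w)$ and $\nabla_w L^{(\chi)}(w) = -\sum_{x,y} q^{(\chi)}(x,y)\nabla_w\log p(y|x,w)$, where I use that the outer weighting in $L^{(\chi)}$ is $q(x)$ (inherited from the inner product on $\mathscr{H}_{k,l}$, so that $q^{(\chi)}(x,y) = q^{(\chi)}(y|x)q(x)$), we obtain
\[
\nabla_w L(w) - \nabla_w L^{(\chi)}(w) = -\sum_{x \in \Sigma^k}\sum_{y \in \Sigma^l}\big(q(x,y) - q^{(\chi)}(x,y)\big)\nabla_w\log p(y|x,w)\,.
\]
This is, up to sign, exactly the vector whose $2$-norm appears in \Cref{defn:gradient_insensitive} with $q' = q^{(\chi)}$, so the gradient-insensitivity hypothesis gives $\|\nabla_w L(w) - \nabla_w L^{(\chi)}(w)\|_2 < A$ for every $w \in W'$.

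Next I would invoke a uniform law of large numbers for both empirical gradients. The excerpt already records that the class $\mathcal{F} = \{-\nabla_w\log p(y|x,w) \mid w \in W'\}$ is Glivenko--Cantelli, because it has the integrable envelope $F(x,y)$, yielding $\sup_{w\in W'}\|\nabla_w L_m(w) - \nabla_w L(w)\|_2 \overset{P}{\longrightarrow} 0$. Since $\mathcal{F}$ is the \emph{same} function class and $F$ remains integrable against $q^{(\chi)}$ (everything lives on the finite set $\Sigma^k\times\Sigma^l$), the identical argument applied to the i.i.d.\ draws from $q^{(\chi)}$ gives $\sup_{w\in W'}\|\nabla_w L_m^{(\chi)}(w) - \nabla_w L^{(\chi)}(w)\|_2 \overset{P}{\longrightarrow} 0$, using that $\mathbb{E}_{q^{(\chi)}}[-\nabla_w\log p(y|x,w)] = \nabla_w L^{(\chi)}(w)$.

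Finally I would combine these by the triangle inequality and a union bound. Given $\xi,\delta > 0$, choose $m$ large enough that each of the two uniform deviations is below $\xi/2$ with probability exceeding $1-\delta/2$; on the intersection of these two events, which has probability exceeding $1-\delta$, for every $w \in W'$,
\[
\big\|\nabla_w L_m(w) - \nabla_w L_m^{(\chi)}(w)\big\|_2 \le \tfrac{\xi}{2} + \big\|\nabla_w L(w) - \nabla_w L^{(\chi)}(w)\big\|_2 + \tfrac{\xi}{2} < A + \xi\,,
\]
which is the claim (in fact with the quantifier over $w$ inside the probability, which is slightly stronger than stated). There is no deep obstacle here; the points that need care are (i) the bookkeeping in the first step, namely that the outer weight in $L^{(\chi)}$ is $q(x)$ rather than $q^{(\chi)}(x)$, so that the population gradient difference matches the functional bounded in \Cref{defn:gradient_insensitive} exactly, and (ii) noting explicitly that Glivenko--Cantelli transfers to the $q^{(\chi)}$-sampled family because it is a property of the common function class together with an envelope integrable against the relevant measure.
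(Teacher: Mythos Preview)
Your proposal is correct and follows essentially the same approach as the paper: route through the population gradients via the triangle inequality, use the gradient-insensitivity hypothesis for the middle term, and Glivenko--Cantelli for the two empirical-to-population terms, then combine by a union bound. The only cosmetic differences are that the paper splits the failure probability as $3\times\delta/3$ (treating the deterministic middle event as one of three) whereas you use the tighter $2\times\delta/2$, and you are more explicit than the paper about why Glivenko--Cantelli transfers to the $q^{(\chi)}$-sampled family.
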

\begin{proof}
By triangle inequality, we have
\begin{align*}
    \big\| \nabla_w L_m(w) - \nabla_w L_m^{(\chi)}(w) \big\|_2 &\leq \big\| \nabla_w L_m(w) - \nabla_w L(w) \big\|_2 + \big\| \nabla_w L(w) - \nabla_w L^{(\chi)}(w) \big\|_2 \nonumber \\
    & \qquad+ \big\| \nabla_w L^{(\chi)}(w) - \nabla_w L_m^{(\chi)}(w) \big\|_2 \,.
\end{align*}
So
\begin{align*}
    & \quad\mathbb{P}\bigg( \big\| \nabla_w L_m(w) - \nabla_w L_m^{(\chi)}(w) \big\|_2 < A + \xi \bigg) \\
    &\geq \mathbb{P} \bigg(\big\| \nabla_w L_m(w) - \nabla_w L(w) \big\|_2 + \big\| \nabla_w L(w) - \nabla_w L^{(\chi)}(w) \big\|_2 
     \\
     &\qquad \qquad+ \big\| \nabla_w L^{(\chi)}(w) - \nabla_w L_m^{(\chi)}(w) \big\|_2 < A + \xi\bigg) \\
    &\ge \mathbb{P} \bigg(\big\| \nabla_w L_m(w) - \nabla_w L(w) \big\|_2 < \frac{\xi}{2}\bigg) + \mathbb{P} \bigg(\big\| \nabla_w L(w) - \nabla_w L^{(\chi)}(w) \big\|_2 < A \bigg) \\
    & \qquad+ \mathbb{P}\bigg( \big\| \nabla_w L^{(\chi)}(w) - \nabla_w L_m^{(\chi)}(w) \big\|_2 < \frac{\xi}{2}  \bigg) + 1 - 3\,.
\end{align*}

Since $\nabla_w L_m(w) \to L(w)$ and $\nabla_w L_m^{(\chi)}(w) \to L^{(\chi)}(w)$ uniformly as $m \to \infty$, for $\frac{\delta}{3}$ and $\frac{\xi}{2}$, there exists large enough $m$ such that 
\begin{equation*}
    \mathbb{P} \bigg(\big\| \nabla_w L_m(w) - \nabla_w L(w) \big\|_2 < \frac{\xi}{2}\bigg) > 1 - \frac{\delta}{3} \,,
\end{equation*}
and 
\begin{equation*}
    \mathbb{P} \bigg(\big\| \nabla_w L^{(\chi)}(w) - \nabla_w L_m^{(\chi)}(w) \big\|_2 < \frac{\xi}{2}\bigg) > 1 - \frac{\delta}{3} \,.
\end{equation*}

By the hypothesis of gradient-insensitivity we have that
\begin{align*}
    \big\| \nabla_w L(w) - \nabla_w L^{(\chi)}(w) \big\|_2 &= \bigg\| \sum_{x \in \Sigma^k} \sum_{y \in \Sigma} \big[q^{(\chi)}(x, y) - q(x, y) \big] \nabla_w \log p(y|x, w) \bigg\|_2
\end{align*}
is less than $A$. In particular, 
\begin{equation*}
    \mathbb{P}\bigg( \big\| \nabla_w L(w) - \nabla_w L^{(\chi)}(w) \big\|_2 < A\bigg) = 1 > 1 - \frac{\delta}{3} \, .
\end{equation*}
Therefore,
\begin{align*}
    \mathbb{P}\bigg( \big\| \nabla_w L_m(w) - \nabla_w L_m^{(\chi)}(w) \big\|_2 < A + \xi \bigg) &> 3\bigg( 1 - \frac{\delta}{3}\bigg) + 1 - 3 \\
    &= 1 - \delta
\end{align*}
as claimed.
\end{proof}

\begin{remark}
The need to potentially take $m$ large in the lemma is a weakness in our technical treatment. The idea of SGLD is that $m \ll n$ but since we average over many SGLD steps the effect of the mini-batch gradients approaches the effect of Langevin dynamics with $\nabla_w L_n$.
\end{remark}

As above $T$ is the length of the SGLD chains, $\epsilon_{\max} = \max\{\epsilon_t\}_{t=1}^T$, $\epsilon_{\min} = \min\{\epsilon_t\}_{t=1}^T$ are the maximum and minimum step-sizes, and $m, n, \beta, \gamma$ are the hyperparameters used for LLC estimation (see Table \ref{table:symbols} for a reminder). Recall that $w^*$ is the center of the localization term in the SGLD update; we assume $w^* \in W'$. Let $\{w_t\}_{t=1}^T$ be a SGLD chain contained in $W'$ with updates given in \Cref{eq: SGLD update}. Let $\{\tilde{w}_t\}_{t=1}^T$ be the same SGLD chain as $\{w_t\}_{t=1}^T$ except that $\nabla_w L_m$ is replaced by $\nabla_w L_m^{(\chi)}$. We assume the same random noise $\eta_t$ is used in both chains for simplicity.

\begin{theorem}\label{thm: SGLD theorem} Suppose that the model is gradient-insensitive in $W'$ to the difference between $q$ and $q^{(\chi)}$ for the constant $A$ and that $\nabla_w L_m(w)$ is Lipschitz continuous on $W'$ with Lipschitz constant $M$ for $m \gg 0$. Provided $Mn\beta \in \big(\gamma - \tfrac{2}{\epsilon_{\max}}, \gamma\big)$. Let $\xi$, $\delta > 0$, then for large enough $m$
\[
    \mathbb{P}\Big(\| w_t - \tilde{w}_t \|_2 < g(t, A) \Big) > 1 - f(t, \delta), \qquad \forall 1 \le t \le T
\]
where
\begin{enumerate}
    \item $g(t, A) =  \frac{\epsilon_{\max}}{\epsilon_{\min}}\frac{ (A + \xi)}{\frac{\gamma}{n\beta}-M}(1-\mu^{t-1})$, where $\mu = 1 + \frac{\epsilon_{\min}}{2}( M n \beta - \gamma) \in (0,1)$;
    \item $f(t, \delta) = t\delta$.
\end{enumerate}
\end{theorem}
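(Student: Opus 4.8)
The plan is to track the deviation $\Delta_t := w_t - \tilde{w}_t$ between the two coupled chains and show it obeys a contractive scalar recursion with a small forcing term. Since both chains share the same noise $\eta_t$ and the same localisation centre $w^*$, subtracting the two SGLD updates \eqref{eq: SGLD update} makes the noise and the $\gamma w^*$ term cancel, leaving
\[
\Delta_{t+1} = \Bigl(1 - \tfrac{\epsilon_t \gamma}{2}\Bigr)\Delta_t - \tfrac{\epsilon_t n\beta}{2}\bigl[\nabla_w L_m(w_t) - \nabla_w L_m^{(\chi)}(\tilde{w}_t)\bigr]\,.
\]
The two chains start from the same point, so $\Delta_1 = 0$, and since both chains stay in $W'$ every Lipschitz estimate below is available.

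Next I would split the gradient difference as
\[
\nabla_w L_m(w_t) - \nabla_w L_m^{(\chi)}(\tilde{w}_t) = \underbrace{\bigl[\nabla_w L_m(w_t) - \nabla_w L_m(\tilde{w}_t)\bigr]}_{\text{Lipschitz part}} + \underbrace{\bigl[\nabla_w L_m(\tilde{w}_t) - \nabla_w L_m^{(\chi)}(\tilde{w}_t)\bigr]}_{\text{insensitivity part}}\,.
\]
The Lipschitz part has norm at most $M\|\Delta_t\|_2$ by the hypothesis on $\nabla_w L_m$. For the insensitivity part, $\tilde{w}_t \in W'$ so \Cref{lemma: empirical version of negligible modes} applies: for $m$ large enough its norm is $< A + \xi$ with probability $> 1 - \delta$. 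Because a fresh minibatch is drawn at every SGLD step, this is a new event at each step, and a union bound over the $\le t$ steps entering $\Delta_t$ produces the failure probability $t\delta = f(t,\delta)$. On the good event, applying the triangle inequality and $\epsilon_{\min}\le\epsilon_t\le\epsilon_{\max}$ gives
\[
\|\Delta_{t+1}\|_2 \le \Bigl(\bigl|1-\tfrac{\epsilon_t\gamma}{2}\bigr| + \tfrac{\epsilon_t M n\beta}{2}\Bigr)\|\Delta_t\|_2 + \tfrac{\epsilon_{\max} n\beta}{2}(A+\xi)\,.
\]

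This is where the hypothesis $Mn\beta \in \bigl(\gamma - \tfrac{2}{\epsilon_{\max}}, \gamma\bigr)$ does its work: it forces the per-step coefficient to equal $1 - \tfrac{\epsilon_t}{2}(\gamma - Mn\beta)$, which lies in $(0,1)$ — the upper bound $Mn\beta < \gamma$ makes it a genuine contraction and the lower bound $Mn\beta > \gamma - \tfrac{2}{\epsilon_{\max}}$ keeps it positive and controls the $|1-\tfrac{\epsilon_t\gamma}{2}|$ sign — and its worst case over $t$ is $\mu = 1 + \tfrac{\epsilon_{\min}}{2}(Mn\beta - \gamma)$. I expect the careful bookkeeping of these sign conditions, ensuring the contraction constant is uniform in $t$ and correctly absorbs the Lipschitz term, to be the main technical nuisance rather than anything conceptually deep. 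Granting it, one unrolls $\|\Delta_{t+1}\|_2 \le \mu\|\Delta_t\|_2 + \tfrac{\epsilon_{\max}n\beta}{2}(A+\xi)$ from $\Delta_1 = 0$ and sums the geometric series,
\[
\|\Delta_t\|_2 \le \tfrac{\epsilon_{\max} n\beta}{2}(A+\xi)\sum_{s=0}^{t-2}\mu^s = \frac{\epsilon_{\max} n\beta}{2}(A+\xi)\cdot\frac{1 - \mu^{t-1}}{1-\mu}\,,
\]
and since $1 - \mu = \tfrac{\epsilon_{\min}}{2}(\gamma - Mn\beta)$ the right-hand side is exactly $g(t,A) = \tfrac{\epsilon_{\max}}{\epsilon_{\min}}\,\tfrac{A+\xi}{\gamma/(n\beta) - M}\,(1-\mu^{t-1})$, holding for each $1 \le t \le T$ on an event of probability at least $1 - f(t,\delta)$.
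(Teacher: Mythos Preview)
Your proposal is correct and follows essentially the same route as the paper: subtract the two coupled SGLD updates so the noise and $\gamma w^*$ terms cancel, split the gradient difference into a Lipschitz piece (bounded by $M\|\Delta_t\|_2$) and an insensitivity piece (bounded by $A+\xi$ via \Cref{lemma: empirical version of negligible modes}), obtain the scalar recursion $\|\Delta_{t+1}\|_2 \le \mu\|\Delta_t\|_2 + \tfrac{\epsilon_{\max}n\beta}{2}(A+\xi)$, unroll the geometric series from $\Delta_1=0$, and control the probability by a union bound over the $t$ applications of the lemma. Your handling of the absolute value $|1-\tfrac{\epsilon_t\gamma}{2}|$ is in fact slightly more careful than the paper, which silently drops it; otherwise the arguments coincide step for step.
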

\begin{proof}
Let $\{ a_t \}_{t \ge 1}, \{ b_t \}_{t \ge 1}$ be non-negative and $\mu \in [0,1)$. If for $t \ge 1$
\[
a_{t+1} \le b_t + \mu a_t
\]
then with $a_1$ given
\[
a_{t+1} \le b_t + \mu b_{t - 1} + \mu^2 b_{t - 2} + \cdots + \mu^{t-1} b_1 + \mu^{t} a_1
\]
and, in particular, when $b_t = b$ is a constant and $a_1 = 0$,
\begin{equation}\label{eq:gronwald}
a_{t+1} \le b (1 + \mu + \mu^2 + \cdots + \mu^{t-1}) + \mu^{t} a_1 = \frac{b}{1 - \mu} (1 - \mu^t).
\end{equation}
Set $\Delta_t = \| w_t - \tilde{w}_t \|_2$ and let $\epsilon_t$ be the step-size at time $t$. Let $A$ be as in Lemma \ref{lemma: empirical version of negligible modes} so that
\begin{equation}\label{eq:first_theorem_Abound}
\big\| \nabla_w L_m(w) - \nabla_w L_m^{(\chi)}(w) \big\|_2 < A + \xi
\end{equation}
with probability $\ge 1 - \delta$ for all $w \in W'$. From the update rule
\begin{align*}
\Delta_{t+1} &= \| w_{t+1} - \tilde{w}_{t+1} \|_2\\
&= \| w_{t+1} - w_t - \tilde{w}_{t+1} + \tilde{w}_t + (w_t - \tilde{w}_t) \|_2\\
&= \| \Delta w_t - \Delta \tilde{w}_t + (w_t - \tilde{w}_t) \|_2\\
&= \Big\| \frac{\epsilon_t}{2}\Big[ -\beta n( \nabla_w L_m(w_t) - \nabla_w L_m^{(\chi)}(\tilde{w}_t) ) + \gamma(-w_t + \tilde{w}_t) \Big] + (w_t - \tilde{w}_t) \Big\|_2\\
&\le \epsilon_t \beta \frac{n}{2} \Big\| \nabla_w L_m(w_t) - \nabla L_m^{(\chi)}(\tilde{w}_t) \Big\|_2 + \big(1 - \epsilon_t \frac{\gamma}{2}\big) \Delta_t\,.
\end{align*}
Now conditioned on an event with probability $\ge 1 - \delta$ and using the Lipschitz constant $M$ for $\nabla_w L_m$ we can apply \eqref{eq:first_theorem_Abound} with $w = \tilde{w}_t$
\begin{align*}
\big\| \nabla_w L_m(w_t) - \nabla L_m^{(\chi)}(\tilde{w}_t) \big\|_2 &\le \big\| \nabla_w L_m(\tilde{w}_t) - \nabla_w L^{(\chi)}_m(\tilde{w}_t)\big\|_2\\
& \qquad + \big\| \nabla_w L_m(w_t) - \nabla_w L_m(\tilde{w}_t)\big\|_2\\
&\le (A + \xi) + M \Delta_t\,.
\end{align*}
Hence under the same event
\begin{align*}
\Delta_{t+1} &\le \epsilon_t \beta \frac{n}{2}\Big( (A + \xi) + M \Delta_t \Big) + \big(1 - \epsilon_t \frac{\gamma}{2}\big) \Delta_t\\
&= \frac{n\beta\epsilon_t (A + \xi)}{2} + \Big( 1 + \frac{\epsilon_t}{2}( M n \beta - \gamma) \Big) \Delta_t\,.
\end{align*}
We have $1 + \frac{\epsilon_t}{2}( M n \beta - \gamma) \in (0,1)$ if and only if $Mn\beta \in \big(\gamma - \tfrac{2}{\epsilon_t}, \gamma\big)$. By our assumption, 
\[
    Mn\beta \in \big(\gamma - \tfrac{2}{\epsilon_{\max}}, \gamma\big) \subset (\gamma - \tfrac{2}{\epsilon_t}, \gamma\big) \,.
\]
Note that $Mn\beta - \gamma < 0$ so
\begin{align*}
     \Delta_{t+1} &\le  \frac{n\beta\epsilon_{\max} (A+\xi)}{2} + \big( 1 + \frac{\epsilon_{\min}}{2}( M n \beta - \gamma) \big) \Delta_t\,.
\end{align*}
Set
\[
    \mu = 1 + \frac{\epsilon_{\min}}{2}(Mn\beta - \gamma) \in (0, 1)\,.
\]
We apply \eqref{eq:gronwald}:
\begin{align}
\Delta_{t+1} &\le \frac{n\beta \epsilon_{\max} (A + \xi)}{2} \frac{1}{1-\mu}(1 - \mu^t)\\
&= \frac{n \beta \epsilon_{\max} (A+\xi)}{\epsilon_{\min}( \gamma - Mn\beta)} ( 1- \mu^t)\\
&= \frac{\epsilon_{\max}}{\epsilon_{\min}}\frac{ (A + \xi)}{\frac{\gamma}{n\beta} - M}(1-\mu^t) \label{eq:deltatplus1}
\end{align}
Hence set
\[
    g(t, A) =  \frac{\epsilon_{\max}}{\epsilon_{\min}}\frac{ A + \xi}{\frac{\gamma}{n\beta}-M}(1-\mu^{t-1})\,.
\]
Now let $E_t$ be the event
\[
E_t = \big\{ \big\| \nabla L_m(w_\tau) - \nabla L_m^{(\chi)}(w_\tau) \big\|_2 \le A+\xi \, \text{ for all } \tau = 1, \ldots, t \big\}
\]
By a union bound $\mathbb{P}(E_t) \ge 1 - t\delta$. On $E_t$ the inequality \eqref{eq:deltatplus1} holds for $\tau = 1,\ldots,t$, hence
\[
\mathbb{P}( \Delta_t \le g(t,A) ) \ge 1 - t\delta
\]
as claimed with $f(t,\delta) = t\delta$.
\end{proof}

Let $w_t$ be a SGLD chain of length $T$ with initial point $w_1$. Let $\tilde{w}_t$ be another SGLD chain of the same length $T$ with the same initial point $\tilde{w}_1 = w_1$ and the same injected noise:
\begin{align*}
    \Delta w_t &= \frac{\epsilon_t}{2}\Big[-\beta n \nabla_w L_m(w_t) + \gamma(w^* - w_t)\Big] + \eta_t,\\
    \Delta \tilde{w}_t &= \frac{\epsilon_t}{2}\Big[-\beta n \nabla_w L^{(\chi)}_m(\tilde{w}_t) + \gamma(w^* - \tilde{w}_t)\Big] + \eta_t\,.
\end{align*}
Let $\hat{\lambda}_n(\wstar)$ denote the estimated LLC (\ref{eq: SGLD based LLC estimator}) using $L_n(w)$ and samples $\{w_t\}_{t=1}^T$, and $\hat{\lambda}_n^{(\chi)}(\wstar)$ denote the estimated LLC using $L_n^{(\chi)}(w)$ and samples $\{\tilde{w}_t\}_{t=1}^T$.

\begin{table}[hptb]
\centering
\begin{tabular}{|c|p{7cm}|c|}
\hline
\textbf{Symbol} & \textbf{Description} & \textbf{Reference} \\
\hline
$A$ & Constant for gradient-insensitivity & Definition \ref{defn:gradient_insensitive} \\
$B$ & Constant for log-probability-insensitivity & Definition \ref{defn:insensitive} \\
\hline
$\beta$ & Inverse temperature in SGLD & Section \ref{section:llc_estimation} \\
$\epsilon_t$ & Step-size in SGLD & Section \ref{section:llc_estimation} 
\\
$\epsilon_{\max}$, $\epsilon_{\min}$ & Maximum and minimum step-size in SGLD & before Theorem \ref{thm: SGLD theorem}\\
$n$ & Sample size used to define $L_n$ & Section \ref{section:llc_estimation} \\
$m$ & Size of mini-batches in SGLD & Section \ref{section:llc_estimation} \\
$\gamma$ & Localisation strength in SGLD & Section \ref{section:llc_estimation} \\
$T$ & Length of SGLD chains & Section \ref{section:llc_estimation} \\
\hline
$M$ & Lipschitz constant for $\nabla_w L_m$ & Theorem \ref{theorem:main_llc} \\
$Q$ & Lipschitz constant for $L_n$ & Theorem \ref{theorem:main_llc} \\
\hline
\end{tabular}
\vspace{0.5cm}
\caption{Table of Notation.}
\label{table:symbols}
\end{table}

\begin{theorem}\label{theorem:main_llc}
Suppose that the model is insensitive in a compact set $W'$ to the difference between $q$ and $q^{(\chi)}$ in both senses:
\begin{itemize}
    \item It is gradient-insensitive for a constant $A$
    \item It is log-probability-insensitive for a constant $B$.
\end{itemize}
Assume that $\nabla_w L_m(w)$ and $L_n(w)$ are Lipschitz continuous on $W'$ with Lipschitz constant $M$ and $Q$, respectively, for $m \gg 0$ and $n \gg 0$ and that $Mn\beta \in \big(\gamma - \tfrac{2}{\epsilon_{\max}}, \gamma\big)$. Then for any $\delta$, $\xi$, $\kappa > 0$, there exists large enough $n$ such that
\begin{equation}
    \mathbb{P}\bigg(\big|\hat{\lambda}_n(\wstar) - \hat{\lambda}_n^{(\chi)}(\wstar)\big| < \frac{\epsilon_{\max}}{\epsilon_{\min}} \frac{ n\beta Q (A+\xi)}{\frac{\gamma}{n\beta}-M} + 2n\beta(B+ \kappa) \bigg) > 1  - \tfrac{1}{2} T (T+1) \delta - \kappa \,.
\end{equation}
\end{theorem}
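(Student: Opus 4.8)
The plan is to expand the difference of the two estimators directly from the definition \eqref{eq: SGLD based LLC estimator} and bound it term by term. Writing
\[
\hat{\lambda}_n(\wstar) - \hat{\lambda}_n^{(\chi)}(\wstar) = n\beta\Big[ \tfrac{1}{T}\textstyle\sum_{t=1}^T \big(L_n(w_t) - L_n^{(\chi)}(\tilde{w}_t)\big) - \big(L_n(\wstar) - L_n^{(\chi)}(\wstar)\big) \Big]\,,
\]
the triangle inequality reduces the problem to controlling, for each $t$, the quantity $\big|L_n(w_t) - L_n^{(\chi)}(\tilde{w}_t)\big|$ and, separately, $\big|L_n(\wstar) - L_n^{(\chi)}(\wstar)\big|$. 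For the first I would insert the intermediate point $L_n^{(\chi)}(w_t)$ and split into a \emph{same-parameter, different-distribution} piece $\big|L_n(w_t) - L_n^{(\chi)}(w_t)\big|$ and a \emph{same-distribution, different-parameter} piece $\big|L_n^{(\chi)}(w_t) - L_n^{(\chi)}(\tilde{w}_t)\big|$.

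For the same-parameter piece, the key observation is that $L(w) - L^{(\chi)}(w) = -\langle \Phi(w), q - q^{(\chi)} \rangle_{\mathscr{H}}$ after identifying $q^{(\chi)}(x,y) = q^{(\chi)}(y|x)q(x)$ and comparing with \eqref{eq:log_prob_insens_defn}, so log-probability-insensitivity gives $|L(w) - L^{(\chi)}(w)| < B$ uniformly on $W'$. Since $\mathcal{G}$ is Glivenko--Cantelli, and the analogous family built from the $q^{(\chi)}$-samples is too (it has the same constant envelope $G$ because that bound does not depend on the sampling distribution), for $n$ large both $\sup_{W'}|L_n - L|$ and $\sup_{W'}|L_n^{(\chi)} - L^{(\chi)}|$ are below $\kappa/2$ with probability at least $1-\kappa$; on that event $|L_n(w) - L_n^{(\chi)}(w)| < B + \kappa$ for \emph{all} $w \in W'$, in particular at each $w_t$ and at $\wstar$. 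For the different-parameter piece, note that $L_n^{(\chi)}$ is Lipschitz on $W'$ with the same constant $Q$ (both losses are averages of $-\log p(y|x,\cdot)$, which share a uniform Lipschitz bound on the compact set $W'$), so this piece is at most $Q\,\|w_t - \tilde{w}_t\|_2$; Theorem \ref{thm: SGLD theorem} then gives $\|w_t - \tilde{w}_t\|_2 < g(t,A)$ with probability at least $1 - t\delta$, and since $1 - \mu^{t-1} \le 1$ we have $g(t,A) \le \tfrac{\epsilon_{\max}}{\epsilon_{\min}}\tfrac{A+\xi}{\gamma/(n\beta) - M}$ uniformly in $t$.

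Putting the pieces together: on the intersection of the Glivenko--Cantelli event and the chain-closeness events, each term of the average is at most $(B+\kappa) + Q\,g(t,A)$ and the $\wstar$-term is at most $B+\kappa$, so after multiplying by $n\beta$ and bounding $g(t,A)$ as above we obtain exactly $\tfrac{\epsilon_{\max}}{\epsilon_{\min}}\tfrac{n\beta Q(A+\xi)}{\gamma/(n\beta) - M} + 2n\beta(B+\kappa)$. For the probability accounting, a union bound over $t = 1,\dots,T$ of the chain-closeness failures (each at most $t\delta$) contributes $\sum_{t=1}^T t\delta = \tfrac12 T(T+1)\delta$, and the Glivenko--Cantelli failure contributes at most $\kappa$, which yields the stated lower bound $1 - \tfrac12 T(T+1)\delta - \kappa$.

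I expect the main obstacle to be bookkeeping rather than any single estimate: one must keep straight that Theorem \ref{thm: SGLD theorem} controls the drift between the chains through the \emph{minibatch} losses $L_m, L_m^{(\chi)}$ (so we invoke the $m \gg 0$ hypothesis and Lemma \ref{lemma: empirical version of negligible modes} there), whereas the estimator is assembled from the \emph{full-sample} losses $L_n, L_n^{(\chi)}$ (so the Glivenko--Cantelli step forces $n \gg 0$); one should also check that $q^{(\chi)}$, being a genuine probability distribution on the finite set $\Sigma^{k+l}$, inherits the envelope and Lipschitz regularity used for $q$. Finally, it is worth verifying explicitly that the two insensitivity hypotheses enter in disjoint places --- gradient-insensitivity only via Theorem \ref{thm: SGLD theorem} (bounding how far the chains separate) and log-probability-insensitivity only via the same-parameter loss comparison --- so that the constants $A$ and $B$ appear in the final bound as they do in the statement.
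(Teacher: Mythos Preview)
Your proposal is essentially the paper's proof: same expansion of $\hat\lambda_n-\hat\lambda_n^{(\chi)}$ from \eqref{eq: SGLD based LLC estimator}, same use of Theorem~\ref{thm: SGLD theorem} for the chain separation, same use of log-probability-insensitivity plus Glivenko--Cantelli for the same-parameter loss comparison, and the same union-bound accounting producing $\tfrac12 T(T+1)\delta+\kappa$.

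The one place you diverge is the choice of intermediate point when splitting $\big|L_n(w_t)-L_n^{(\chi)}(\tilde w_t)\big|$. You insert $L_n^{(\chi)}(w_t)$, so the different-parameter piece is $\big|L_n^{(\chi)}(w_t)-L_n^{(\chi)}(\tilde w_t)\big|$, which then needs Lipschitz continuity of $L_n^{(\chi)}$ with constant $Q$ --- not literally among the theorem's hypotheses (only $L_n$ is assumed $Q$-Lipschitz). Your justification that ``both losses are averages of $-\log p(y|x,\cdot)$'' is plausible on a compact $W'$ with finite alphabet, but it is an extra step. The paper instead inserts $L_n(\tilde w_t)$: the different-parameter piece becomes $\big|L_n(w_t)-L_n(\tilde w_t)\big|\le Q\|w_t-\tilde w_t\|$ using the stated hypothesis directly, and the same-parameter piece $\big|L_n(\tilde w_t)-L_n^{(\chi)}(\tilde w_t)\big|$ is handled exactly as you describe. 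This is not a gap --- swapping your intermediate point for the paper's removes the extra burden and everything else in your outline goes through verbatim.
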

\begin{proof}
Applying the triangle inequality, we have 
\begin{align}
    \big|\hat{\lambda}_n(w^*) - \hat{\lambda}_n^{(\chi)}(w^*) \big| &= \bigg|n\beta \frac{1}{T} \sum_{t=1}^T \big(L_n(w_t) - L_n^{(\chi)}(\tilde{w}_t)\big) + n\beta\big(L_n^{(\chi)}(\wstar) - L_n(\wstar)\big) \bigg| \nonumber \\
    &\leq n\beta \frac{1}{T} \sum_{t=1}^T \big|L_n(w_t) - L_n^{(\chi)}(\tilde{w}_t)\big| + n\beta \big| L_n^{(\chi)}(\wstar) - L_n(\wstar) \big| \nonumber \\
    &\leq n\beta\frac{1}{T} \sum_{t=1}^T \big| L_n(w_t) - L_n(\tilde{w}_t) \big| + n\beta\frac{1}{T} \sum_{t=1}^T \big| L_n(\tilde{w}_t) - L_n^{(\chi)}(\tilde{w}_t)\big|  \nonumber \\
    &\qquad + n\beta\big| L_n(w^*) - L_n^{(\chi)}(w^*) \big| \,. \nonumber
\end{align}
Since $L_n(w)$ is Lipschitz continuous with Lipschitz constant $Q$, we have 
\begin{equation*}
    n\beta \frac{1}{T} \sum_{t=1}^T \big|L_n(w_t) - L_n(\tilde{w}_t)\big| \leq C \cdot \frac{1}{T} \sum_{t=1}^T \|w_t - \tilde{w}_t\| \,,
\end{equation*}
where $C = n\beta Q$. So
\begin{align}
    \big|\hat{\lambda}_n(w^*) - \hat{\lambda}_n^{(\chi)}(w^*) \big| &\leq C \cdot \frac{1}{T} \sum_{t=1}^T \| w_t - \tilde{w}_t \| + n\beta\frac{1}{T} \sum_{t=1}^T \big| L_n(\tilde{w}_t) - L_n^{(\chi)}(\tilde{w}_t)\big|  \nonumber \\
    &\qquad + n\beta\big| L_n(w^*) - L_n^{(\chi)}(w^*) \big| \,. \nonumber
\end{align}
Thus, for $g(t, A)$ defined in \Cref{thm: SGLD theorem}, we have by a union bound
\begin{align}
    & \quad \mathbb{P}\bigg(\big|\hat{\lambda}_n(\wstar) - \hat{\lambda}_n^{(\chi)}(\wstar)\big| < C \cdot \frac{1}{T} \sum_{t=1}^T g(t, A) + 2n\beta (B + \kappa) \bigg) \nonumber \\
    &\geq \mathbb{P} \bigg( C \cdot \frac{1}{T} \sum_{t=1}^T \| w_t - \tilde{w}_t \| + n\beta\frac{1}{T} \sum_{t=1}^T \big| L_n(\tilde{w}_t) - L_n^{(\chi)}(\tilde{w}_t)\big|  \nonumber \\
    &\qquad + n\beta\big| L_n(w^*) - L_n^{(\chi)}(w^*) \big| < C \cdot \frac{1}{T} \sum_{t=1}^T g(t, A) + 2n\beta (B + \kappa) \bigg)   \nonumber \\
    &\geq \mathbb{P}\bigg( C \cdot \frac{1}{T} \sum_{t=1}^T \| w_t - \tilde{w}_t \|  < C \cdot \frac{1}{T}\sum_{t=1}^T g(t, A) \bigg) \nonumber \\
    & \qquad + \mathbb{P}\bigg( n\beta \frac{1}{T}\sum_{t=1}^T \big| L_n(\tilde{w}_t) - L_n^{(\chi)}(\tilde{w_t}) \big| < n\beta(B + \kappa) \bigg) \nonumber \\
    & \qquad + \mathbb{P}\bigg( n\beta \big| L_n(w^*) - L_n^{(\chi)}(w^*) \big| < n\beta(B + \kappa) \bigg) + 1 - 3 \label{eq: step1}.
\end{align}
Hence
\begin{align}
    \mathbb{P}\bigg( C \cdot \frac{1}{T} \sum_{t=1}^T \|w_t - \tilde{w}_t\| < C \cdot \frac{1}{T} \sum_{t=1}^T g(t, A) \bigg)
    &= \mathbb{P}\bigg(\sum_{t=1}^T \|w_t - \tilde{w}_t\| < \sum_{t=1}^T g(t, A)\bigg) \nonumber \\
    &\geq \sum_{t=1}^T \mathbb{P}\big(\|w_t - \tilde{w}_t\| < g(t, A) \big) + 1 - T \,. \nonumber
\end{align}
Applying Theorem \ref{thm: SGLD theorem}, we have
\begin{align}
    \sum_{t=1}^T \mathbb{P}\big(\|w_t - \tilde{w}_t\| < g(t, A) \big) + 1 - T &> 1 - T + \sum_{t=1}^T \big(1 - f(t, \delta)\big) \nonumber \\
    &= 1 - \sum_{t=1}^T f(t, \delta) \,. \nonumber
\end{align}
Also
\begin{align*}
    &\quad \mathbb{P}\bigg(n\beta\frac{1}{T}\sum_{t=1}^T \big| L_n(\tilde{w}_t) - L_n^{(\chi)}(\tilde{w_t}) \big| < n\beta (B + \kappa) \bigg) \\
    &= \mathbb{P}\bigg( \sum_{t=1}^T \big| L_n(\tilde{w}_t) - L_n^{(\chi)}(\tilde{w_t}) \big| < T(B + \kappa) \bigg) \\
    &\geq \sum_{t=1}^T \mathbb{P}\big( \big| L_n(\tilde{w}_t) - L_n^{(\chi)}(\tilde{w}_t) \big| < B + \kappa \big) + 1 - T .
\end{align*}
We know by insensitivity that for all $w \in W'$, $\big| L(w) - L^{(\chi)}(w) \big| < B$. Since $L_n(w) \to L(w)$ and $L_n^{(\chi)}(w) \to L^{(\chi)}(w)$ for all $w \in W'$ uniformly as $n \to \infty$, there exists large enough $n$ such that 
\begin{equation*}
    \mathbb{P}\big( \big| L_n(\tilde{w}_t) - L_n^{(\chi)}(\tilde{w}_t) \big| < B+\kappa \big) > 1 - \frac{\kappa}{2T} \,.
\end{equation*}
Thus, 
\begin{align*}
    \mathbb{P}\bigg(n\beta\frac{1}{T}\sum_{t=1}^T \big| L_n(\tilde{w}_t) - L_n^{(\chi)}(\tilde{w_t}) \big| < n\beta (B + \kappa) \bigg)& > \sum_{t=1}^T \bigg(1 - \frac{\kappa}{2T} \bigg) + 1 - T \\
    &= 1 - \frac{\kappa}{2} \,.
\end{align*}
We also have 
\begin{align*}
    \mathbb{P}\bigg( n\beta \big| L_n(w^*) - L_n^{(\chi)}(w^*) \big| < n\beta(B + \kappa) \bigg) &= \mathbb{P}\big( \big| L_n(w^*) - L_n^{(\chi)}(w^*) \big| < B + \kappa \big).
\end{align*}
Similarly, there exists large enough $n$ such that
\begin{align*}
    \mathbb{P}\big( \big| L_n(w^*) - L_n^{(\chi)}(w^*) \big| < B + \kappa \big) > 1 - \frac{\kappa}{2} \,.
\end{align*}
It follows from \Cref{eq: step1} that for any $\kappa > 0$, there exists large enough $n$ such that
\begin{align*}
    &\quad \mathbb{P}\bigg(\big|\hat{\lambda}_n(\wstar) - \hat{\lambda}_n^{(\chi)}(\wstar)\big| < C \cdot \frac{1}{T} \sum_{t=1}^T g(t, A) + 2n\beta (B + \kappa) \bigg) \\
    &\geq \bigg(1 - \sum_{t=1}^T f(t, \delta) \bigg) + \bigg(1 - \frac{\kappa}{2}\bigg) + \bigg(1 - \frac{\kappa}{2}\bigg) + 1 -3 \\
    &= 1 - \sum_{t=1}^T f(t, \delta) - \kappa \,.
\end{align*}
Using the $g(t,A), f(t,\delta)$ from Theorem \ref{thm: SGLD theorem} we have
\begin{align*}
\frac{1}{T}\sum_{t=1}^T g(t,A) &= \frac{1}{T} \sum_{t=1}^T \frac{\epsilon_{\max}}{\epsilon_{\min}}  \frac{ A + \xi}{\frac{\gamma}{n\beta}-M}(1-\mu^{t-1}) = \frac{\epsilon_{\max}}{\epsilon_{\min}} \frac{ A + \xi}{\frac{\gamma}{n\beta}-M} \Big( 1 - \frac{1}{T} \sum_{t=1}^T \mu^{t-1} \Big)\\
&= \frac{\epsilon_{\max}}{\epsilon_{\min}}  \frac{ A + \xi}{\frac{\gamma}{n\beta}-M}\Big( 1 - \frac{1}{T} \frac{1-\mu^{T}}{1-\mu} \Big) \le \frac{\epsilon_{\max}}{\epsilon_{\min}} 
 \frac{ A + \xi}{\frac{\gamma}{n\beta}-M}
\end{align*}
except possibly for very small $T$. And
\begin{align*}
\sum_{t=1}^T f(t,\delta) = \sum_{t=1}^T t\delta = \tfrac{1}{2}T(T+1) \delta
\end{align*}
Hence
\begin{equation*}
    \mathbb{P}\bigg(\big|\hat{\lambda}_n(\wstar) - \hat{\lambda}_n^{(\chi)}(\wstar)\big| < \frac{\epsilon_{\max}}{\epsilon_{\min}} \frac{ n\beta Q (A+\xi)}{\frac{\gamma}{n\beta}-M} + 2n\beta(B+ \kappa) \bigg) > 1  - \tfrac{1}{2} T (T+1) \delta - \kappa \,.
\end{equation*}
\end{proof}

\begin{remark}\label{remark:practical_bound} 
Let us now study the hypotheses and the bound (we set $\xi = \kappa = 0$)
\begin{equation}\label{eq:prac_bound}
\frac{\epsilon_{\max}}{\epsilon_{\min}} \frac{ n\beta Q A}{\frac{\gamma}{n\beta}-M} + 2n\beta B
\end{equation}
for hyperparameters used in practice for LLC estimation. For recent experiments on Pythia-6.9B \citep{biderman2023pythia} we use
\[
\epsilon_{\text{min}} = \epsilon_{\text{max}} = \epsilon \in \{10^{-4}, 10^{-5}\}\,, n \beta = 10\,, \gamma = 300\,, T = 100\,.
\]
The first question is how realistic is it to ask $Mn\beta \in \big(\gamma - \tfrac{2}{\epsilon_{\max}}, \gamma\big)$. With the above hyperparameters this condition is $M < 30$. Any bound on the spectral norm $\Vert \nabla^2 L_m \Vert_2$ of the Hessian of $L_m$ in $W'$ gives a Lipschitz constant $M$. This would follow from a bound on the largest Hessian eigenvalue inside $W'$. In \citep{yao2020pyhessianneuralnetworkslens} a trained ResNet (270k parameters) at the end of training had estimated maximum Hessian eigenvalue $150$. We were unable to find empirical estimates of Hessian spectra for large transformers. It would be feasible to run LLC estimates with $n\beta$ decreased by some small factor and $\gamma$ increased by a small factor (say 2X in both cases, and likely with modified step-sizes to accommodate the decrease in $n\beta$) which would put the bound at the same order of magnitude as these estimates of Hessian eigenvalues for ResNets; however at present it would not be realistic if the Lipschitz constant were orders of magnitude larger.

Next we discuss the Lipschitz constant $Q$. This can be bounded by the norm of the gradient of $L_n$. In a recent training run of 7B and 70B LLMs, \citet{fan2024hlat} found gradient norms $< 10$ except for rare spikes. So $Q < 10$ seems reasonable.
\end{remark}

\subsection{Applications}\label{section:mixing_lengths}

The training objective of a transformer language model is for $p(x|w)$ to model the distribution of sequences $x \in \Sigma^K$ where $k$ is the context length; see \eqref{eq:training_loss_transformer_normal}. However given a \emph{decomposition} of $K$ in the following sense
\begin{itemize}
    \item  $k_i, l_i \in \mathbb{Z}_{>0}$ for $i = 1, \dots, m$;
    \item $k_1 + l_1 = K$;
    \item $k_i = k_{i+1} + l_{i+1}$ for $i = 1, \dots, m-1$
\end{itemize}
we can think of this as a sequence of models (writing $x_{a:b} = x_a \cdots x_{b}$ and $x_{>b} = x_{b+1} \cdots x_K$)
\begin{align*}
p(x|w) &= p(x_{> k_1}|x_{1:k_1},w) p(x_{1:k_1}|w)\\
&= p(x_{> k_1}|x_{1:k_1},w) p(x_{k_2+1:k_1}|x_{1:k_2},w) p(x_{1:k_2}|w)\\
&\vdots
\end{align*}
That is, we can think of a transformer parameter $w \in W$ as jointly parametrising models $p(y|x,w)$ for $x \in \Sigma^k$, $y \in \Sigma^l$ for $(k,l) = (k_i, l_i)$ and decompose the overall population loss $L(w)$ as a sum of losses $L_{k_i,l_i}(w)$. We can choose mode cutoffs $\chi_i$ for each $1 \le i \le m - 1$ and impose log-probability-insensitivity and gradient-insensitivity conditions for each $i$. Then by adapting the proofs of the theorems above we obtain a bound on the difference of the LLC of $L$ and $L^{(\overline{\chi})}$ where $\overline{\chi} = (\chi^1,\ldots,\chi^{m-1})$ is obtained by truncating each of the distributions (see Appendix \ref{section:mult_seq_lengths}).

Further, we may use higher-order versions of SVD to decompose the fundamental tensors involved in determining these sets of modes (see Appendix \ref{section:hod_tensor}).

\section{Discussion}\label{section:discussion}

In the previous section we showed, under some technical assumptions, that the geometry of the functions $L, L^{(\chi)}$ cannot be distinguished by LLC estimation (Theorem \ref{theorem:main_llc}). In this sense $L^{(\chi)}$ acts as an \emph{effective potential}. In this section we make some general remarks on what this means for the practice of studying language models like transformers using LLC estimates.

\paragraph{Local Minima.} The technical conditions for LLC estimation \citep{quantifdegen} include that $w^*$ is a local minima of the population loss $L$. However do not expect this to be typically the case for network parameters $w^*$ that we wish to study in deep learning; in particular, it is unlikely to be true for language models like those studied in \citet{hoogland2024developmental}, \citet{wang2024differentiationspecializationattentionheads}. As observed in \citep[\S A.6]{hoogland2024developmental} this makes it quite surprising that LLC estimation works at all; it was speculated there that ``the LLC estimation procedure is effectively giving us the LLC of a different potential to the population loss -- a potential for which the current parameter actually is at a critical point.'' We do not completely resolve this point in the present paper, but if $L^{(\chi)}$ represents the loss on all the modes that the model has learned, it seems certainly \emph{more} reasonable that a parameter $w^*$ found during training may be a local minima for this potential rather than $L$.

\paragraph{Inverse Temperature versus Mode Cutoff.} Suppose we have chosen values for all these hyperparameters, including $A,B$ such that the hypotheses of Theorem \ref{theorem:main_llc} are satisfied and give a non-trivial bound on the difference between $\hat{\lambda}_n(\wstar)$ and $\hat{\lambda}_n^{(\chi)}(\wstar)$. Suppose that this bound is small enough that it means ``in practice'' we can't distinguish the LLC of $q$ and $q^{(\chi)}$ with the resolution of our probing of the posterior via SGLD. Note that the inverse temperature $\beta$ is a common factor in the bound \eqref{eq:prac_bound}. This means that if we increase $\beta$ but decrease $A,B$ to match, we can maintain the same level of in-distinguishability. Decreasing $A,B$ means intuitively that we need to shift the mode cutoff $\chi$ higher, so that the insensitivity conditions become easier to satisfy (as $q - q^{(\chi)}$ is smaller). Similarly if we decrease $\beta$ then $A,B$ can be allowed to increase, which relaxes the constraints and allows the cutoff to shift down (that is, our estimates capture coarser modes).

In practice what this means is that the inverse temperature at which we attempt to sample from the posterior with SGLD plays an important role in controlling the level of resolution in the geometry that we measure; here $L^{(\chi)}$ is thought of as a higher resolution approximation of $L$ as $\chi$ increases. For example, the LLC estimate will count a contribution to the complexity of the model of rarer bigrams as $\beta$ is increased since their singular value is the square root of their probability.

In the empirical literature on LLC estimation for transformer neural networks \citep{hoogland2024developmental,wang2024differentiationspecializationattentionheads,urdshals2025,carroll2025dynamicstransientstructureincontext} the effect of variations in $\beta$ has not yet been well-studied, beyond hyperparameter selection. The potential importance of variations of inverse temperature for neural network interpretability have been emphasised by \citet{vaintrob}. More generally, we can think of variations in $\beta$ as one of a number of perturbations we can make to the (tempered) posterior distribution viewed as an object of statistical mechanics. We can also perform variations in the data distribution itself. We read the main theorem of this paper as suggesting that such variations affect the LLC estimate \emph{through the effect of these variations on individual modes}. The study of changes in LLC estimates in response to such variations as a foundation for interpretability is studied under the name \emph{susceptibilities} in upcoming work by \citet{suscep}.

\section{Related Works}

\paragraph{Distributional structure and deep learning.} The study of distributional structure in language has a rich history in linguistics and computational linguistics. \citet{harris1954distributional} pioneered the idea that the meaning of words could be inferred from their distribution in text, laying the groundwork for modern approaches to semantic analysis. This distributional hypothesis was further developed by \citet{firth1957synopsis}. In recent years, the advent of neural network-based approaches has revolutionised how we capture and utilize distributional information in language. \citet{bengio2003neural} introduced neural probabilistic language models, which implicitly learn distributional representations of words. This work paved the way for more sophisticated models that explicitly aim to capture distributional semantics \citep{mikolov2013distributed,pennington2014glove}.

\paragraph{Tensor decomposition in NLP.} In the context of natural language, tensor decompositions have been widely used both theoretically and for practical applications. For our purposes the canonical references are \citet{anandkumar2014tensor} and \citet{hsu2012spectral}.

\paragraph{Dimensionality Reduction for Markov Models.} Traditional approaches to approximating or coarse-graining Markov chains have utilized spectral methods to identify essential dynamics \citep{deuflhard2005robust,deuflhard2000identification,fritzsche2007svd}. \citet{geiger2014optimal} frames state aggregation as an information bottleneck problem, seeking to compress the current state while preserving information about future trajectories. There is an extensive literature on spectral methods for ``compressing'' Markov processes using tensor decompositions \citep{zhang2019spectral,ohta2021realization,navarro2024low} with the underlying hypothesis being that complex systems can often be lower rank than they appear \citep{thibeault2024low}. The relation of these ideas to renormalisation group theory has also been recognised \citep{orioli2016dimensional}.

\paragraph{Tensor Renormalisation Group.} Our approach to mode truncation in sequence distributions shares conceptual foundations with tensor renormalisation group (TRG) methods in statistical physics \citep{levin2007tensor}. TRG provides a framework for coarse-graining tensor networks by iteratively replacing tensors with their truncated SVDs. The truncation procedure defined in Definition \ref{defn: truncated distribution}, where we construct $q^{(\chi)}(y|x)$ by retaining only modes $\alpha \leq \chi$, directly corresponds to the truncated SVD operation central to TRG. In both cases, truncation serves to control computational complexity while preserving dominant patterns. Notably, research has shown that the ``brute force'' truncation of singular values in the original TRG approach, while effective, is not optimal for capturing the relevant physics near critical points \citep{evenbly2015tensor}. Some ideas from the TRG literature (such as hierarchical truncation) may inspire useful approaches for improved LLC estimation techniques.

\paragraph{ Effective Actions in MCMC.} In lattice field theory, it is well-established that Langevin dynamics with finite step size $\varepsilon$ samples from a modified distribution rather than the target distribution \citep{batrouni1985langevin}. Specifically, when attempting to sample from a distribution $P[\phi] \propto e^{-S[\phi]}$ using the update rule:
\[
\Delta \phi(x) = -\varepsilon \frac{\delta S[\phi]}{\delta \phi(x)} + \sqrt{\varepsilon} \eta(x)
\]
the algorithm actually samples from a distribution with an \textit{effective action} $S_{\text{eff}}[\phi]$ which is analagous to our effective potential. The techniques used there suggest trying to approach the effective distribution $q^{(\chi)}$ directly through analysis of the Fokker-Planck equation associated with SGLD. Techniques from physics for managing effective action corrections might enable more accurate LLC estimation across a broader spectrum of modes.

This connection also reframes our view of the ``effective theory'' presented in this paper. Rather than seeing mode truncation as an approximation, we can understand it as identifying the appropriate level of description given the computational procedure used for LLC estimation. This is analogous to how renormalisation group methods identify effective field theories valid at specific length scales \citep{polchinski1984renormalization,efrati2014real}.

\section{Conclusion}

We have shown that the geometry probed using SGLD for LLC estimation in sequence models is, in practice, the geometry of a \emph{coarse-grained} or \emph{effective} distribution obtained by truncating higher‐order modes of the fundamental tensors. More precisely, using a mode decomposition based on singular-value analysis we proved that, under conditions of gradient- and log-probability-insensitivity, SGLD-based LLC estimates cannot distinguish the true potential $L$ from the truncated potential $L^{(\chi)}$.

This perspective may help clarify longstanding puzzles about why LLC estimates appear stable even when the measured parameter is not a strict minimum of the population loss: the estimator may be effectively sampling a potential where the learned modes are stationary. The perspective of modes introduced here, and the relation of modes to LLC estimation, is also fundamental in our approach to neural network interpretability based on SLT \citep{suscep}.

Future work should explore alternative tensor factorizations (such as CP, Tucker and tensor-train) to capture more complex hierarchies of modes, tighten quantitative bounds on the insensitivity constants for modern transformer architectures and develop a more sophisticated treatment of the main theorems that allows keeping $m$ small, which is more consistent with how SGLD is used in practice.

\bibliography{references}
\bibliographystyle{icml2024}

\appendix

\section{Multiple Sequence Lengths}\label{section:mult_seq_lengths}

For any $1 \leq k \leq K$, define
\[
    L_k(w) = -\sum_{X_1 \cdots X_k} q(X_1 \cdots X_k) \log p(X_1 \cdots X_k, w)\,.
\]
For $k$, $l$ with $k + l \le K$ define 
\begin{align}
    L_{k, l}(w) 
    &= -\sum_{X_1 \cdots X_k X_1 \cdots X_l} q(X_1 \cdots X_k X_1 \cdots X_l) \log p(X_1 \cdots X_l | X_1 \cdots X_k, w) \,.
\end{align}
A \emph{decomposition of $K$ of length $m$} is defined as  a sequence of pairs $(k_1, l_1), (k_2, l_2), \dots, (k_m, l_m)$ such that
\begin{itemize}
    \item  $k_i, l_i \in \mathbb{Z}_{>0}$ for $i = 1, \dots, m$;
    \item $k_1 + l_1 = K$;
    \item $k_i = k_{i+1} + l_{i+1}$ for $i = 1, \dots, m-1$.
\end{itemize}
Given a decomposition $(k_1, l_1), (k_2, l_2), \dots, (k_m, l_m)$ of $K$, we have
\begin{align}
    L(w) &= -\sum_{X_1 \cdots X_K} q(X_1 \cdots X_K) \log p(X_1 \cdots X_K, w)  \nonumber \\
    &= L_{k_1, l_1}(w) + L_{k_2, l_1}(w) + \cdots + L_{k_m, l_m}(w) + L_{k_m}(w) \,.
\end{align}
For each $i = 1, \dots, m$, let $\chi_i$ be a mode for $(k_i, l_i)$. Let $q^{(\chi_i)}(y|x)$ be the corresponding truncated distribution for $x \in \Sigma^{k_i}$ and $y \in \Sigma^{l_i}$. Consider the distribution
\[
    q^{(\bar\chi)} = q^{(\chi_1)}(X_1 \cdots X_{l_1}|X_1 \cdots X_{k_1}) \cdots q^{(\chi_m)}(X_1 \cdots X_{l_m}|X_1 \cdots X_{k_m}) q(X_1 \cdots X_{k_m}) \,.
\]
Define
\[
    L^{(\bar\chi)}(w) = - \sum_{X_1 \cdots X_K} q^{(\bar\chi)}(X_1 \cdots X_K) \log p(X_1 \cdots X_K, w).
\]
Then 
\[
    L^{(\bar\chi)}(w) = L^{(\bar\chi)}_{k_1, l_1}(w) +  L^{(\bar\chi)}_{k_2, l_2}(w) + \cdots +  L^{(\bar\chi)}_{k_m, l_m}(w) + L_{k_m}(w),
\]
where for each $i =1, \dots, m$,
\[
    L^{(\bar\chi)}_{k_i, l_i}(w) = \sum_{X_1 \cdots X_{k_i}X_1 \cdots X_{l_{i}}} q^{(\chi_i)}(X_1 \cdots X_{k_i }X_1 \cdots X_{l_i}) \log p(X_1 \cdots X_{l_i} | X_1 \cdots X_{k_i}) \,.
\]
Thus 
\begin{align*}
    \big| L(w) - L^{(\bar\chi)}(w) \big| \le \sum_{i=1}^m \big| L_{k_i, l_i}(w) - L^{(\bar\chi)}_{k_i, l_i}(w) \big| \,,
\end{align*}
and 
\begin{align*}
    \big\| \nabla_w L(w) - \nabla_w L^{(\bar\chi)}(w) \big\|_2 \le \sum_{i=1}^m \big\| \nabla_w L_{k_i, l_i}(w) - \nabla_w L^{(\bar\chi)}_{k_i, l_i}(w) \big\|_2 \,.
\end{align*}
If $p(y|x, w)$, for $y \in \Sigma^{l_i}$ and $x \in \Sigma^{k_i}$, is log-probability-insensitive (\emph{resp.} gradient-insensitive) for the constant $\epsilon_i$, then we have
\begin{align*}
    \big| L(w) - L^{(\bar\chi)}(w) \big| \le \sum_{i=1}^m \big| L_{k_i, l_i}(w) - L^{(\bar\chi)}_{k_i, l_i}(w) \big| < \sum_{i=1}^m \epsilon_i \,,
\end{align*}
and
\begin{align*}
    \big\| \nabla_w L(w) - \nabla_w L^{(\bar\chi)}(w) \big\|_2 \le \sum_{i=1}^m \big\| \nabla_w L_{k_i, l_i}(w) - \nabla_w L^{(\bar\chi)}_{k_i, l_i}(w) \big\|_2 < \sum_{i=1}^m \epsilon_i \,,
\end{align*}
respectively.

\section{Methodology for Empirical Modes}\label{section:token_svd}

This appendix describes the methodology used to analyse SVD of conditional probability matrices.

We used the Pile dataset \citep{gao2020pile}. For tokenisation, we employed the EleutherAI/pythia-70m tokenizer, which has a vocabulary size of approximately 50,000 tokens. For a given value of preceding sequence length $k$ and following sequence length $l$, we processed text documents to extract and count token sequences:

\begin{enumerate}
    \item We sampled $N$ documents from the dataset, where $N$ varies based on sequence length complexity:
    \begin{itemize}
        \item For $k \leq 1, l \leq 1$: $N = 20,000$ documents
        \item For $k \leq 2, l \leq 2$: $N = 15,000$ documents
        \item For $k \geq 3, l \geq 3$: $N = 25,000$ documents
    \end{itemize}
    
    \item For each document, we tokenised the text and extracted:
    \begin{itemize}
        \item $X$ sequences: consecutive token sequences of length $k$
        \item $Y$ sequences: consecutive token sequences of length $l$ that immediately follow an X sequence
        \item $XY$ sequences: concatenated X and Y sequences representing co-occurrences
    \end{itemize}
    
    \item We filtered $X$ sequences to include only those occurring at least $min\_count$ times:
    \begin{itemize}
        \item For $k \leq 1, l \leq 1$: $min\_count = 5$
        \item For $k \leq 2, l \leq 2$: $min\_count = 10$
        \item For $k \geq 3, l \geq 3$: $min\_count = 20$
    \end{itemize}
    
    \item Similarly, we filtered $Y$ sequences to include only those occurring at least $min\_y\_count$ times following a filtered $X$ sequence, using the same thresholds as above.
\end{enumerate}
Next we construct a sparse matrix $M$ where:
\begin{itemize}
    \item Rows correspond to $Y$ sequences ($i$ indexes $Y$ sequences)
    \item Columns correspond to $X$ sequences ($j$ indexes $X$ sequences)
    \item Each cell $M_{i,j}$ represents the conditional probability $P(Y_i | X_j)$
\end{itemize}

The conditional probability with Laplace smoothing is calculated as:
\begin{equation}
P(Y_i | X_j) = \frac{count(X_j, Y_i) + \lambda}{count(X_j) + \lambda \cdot |Y|}
\end{equation}

where:
\begin{itemize}
    \item $count(X_j, Y_i)$ is the number of times sequence $Y_i$ follows sequence $X_j$
    \item $count(X_j)$ is the total number of occurrences of sequence $X_j$
    \item $|Y|$ is the total number of distinct $Y$ sequences after filtering
    \item $\lambda = 10^{-5}$ is the smoothing parameter
\end{itemize}
We applied SVD to the conditional probability matrix
\begin{equation}
M = U \Sigma V^T
\end{equation}
where:
\begin{itemize}
    \item $U$ is the matrix of left singular vectors (size $|Y| \times r$)
    \item $\Sigma$ is the diagonal matrix of singular values (size $r \times r$)
    \item $V^T$ is the transpose of the matrix of right singular vectors (size $r \times |X|$)
    \item $r$ = 100 is the number of components we compute
\end{itemize}
For large matrices (min dimension > 5000), we used randomized SVD from scikit-learn's TruncatedSVD for computational efficiency. For smaller matrices, we used scipy's direct svds method.

To illustrate the patterns identified by SVD, we extracted contextual examples from the corpus that showcase the token relationships captured by the top singular components:

\begin{enumerate}
    \item For each singular component $i$:
    \begin{itemize}
        \item We identified the $Y$ sequence with the highest magnitude loading in the left singular vector $U_{\cdot,i}$
        \item We identified $X$ sequences with high magnitude loadings in the right singular vector $V^T_{i,\cdot}$
    \end{itemize}
    
    \item We searched the corpus for occurrences where:
    \begin{itemize}
        \item The $Y$ sequence exactly matches the top $Y$ sequence for the component
        \item The $X$ sequence matches one of the high-loading $X$ sequences
    \end{itemize}
    
    \item We used an adaptive threshold approach, starting with $X$ sequences within 10\% of the maximum loading magnitude and gradually relaxing to 50\% if needed to find examples.
    
    \item For each match, we extracted context surrounding the sequences (50 tokens before and after) to provide a readable example.
\end{enumerate}

This approach allowed us to identify and visualise the contextual patterns captured by each SVD component.

\section{More Examples of Modes}\label{appendix:more_modes}

In Figure \ref{fig:k3_l3_example1} and Figure \ref{fig:k3_l3_example2} we give some additional examples of empirical modes in the Pile.

\begin{figure}[tbp]
    \centering
    \includegraphics[width=\textwidth]{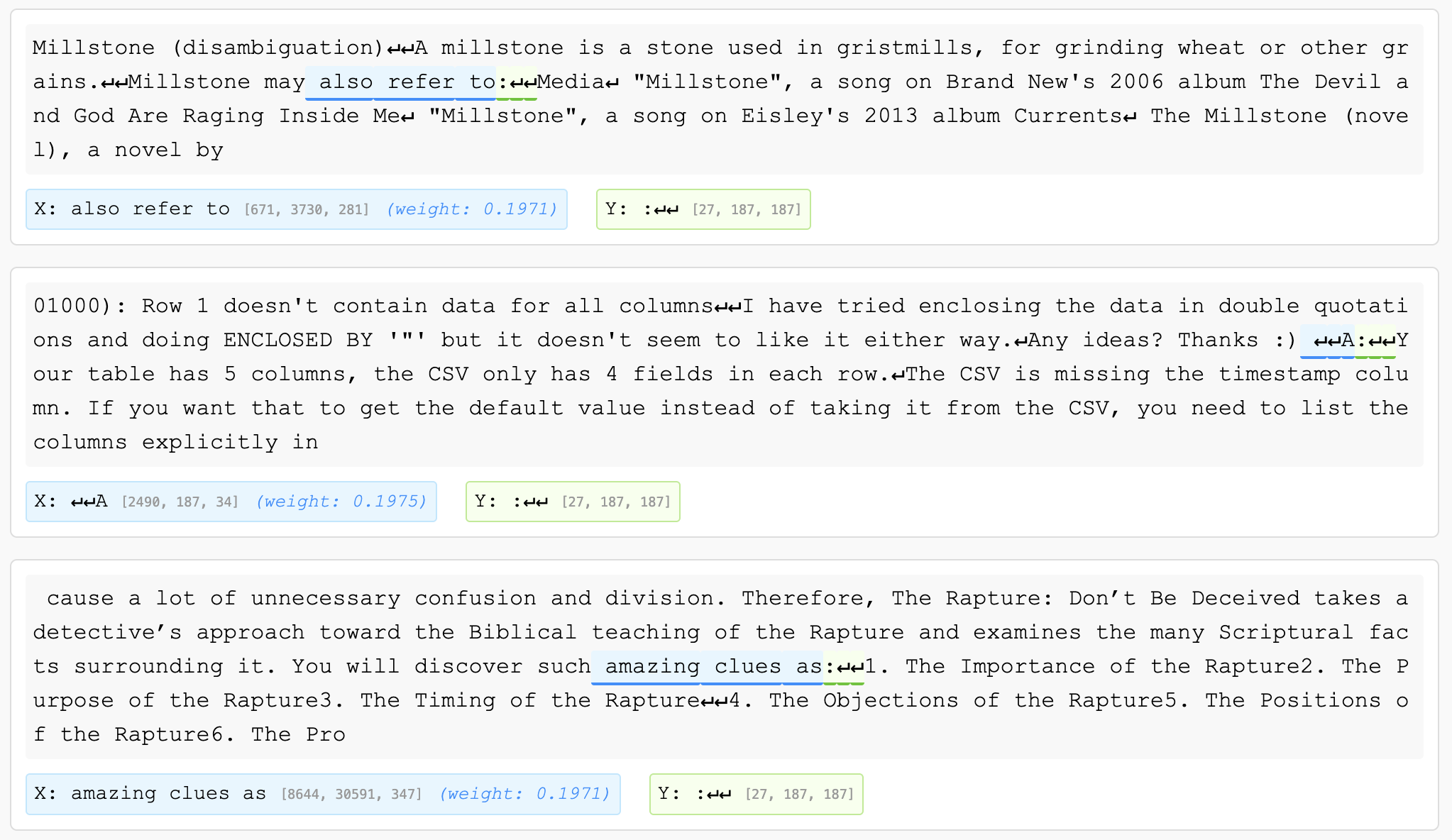}
    \caption{\textbf{Empirical modes.} We show an example of the $x,y$ pair which are heavily loaded in the first empirical mode for $k = 3$, $l = 3$ in our experiments on the Pile. Shown are three text samples. Next to each $X, Y$ token sequence we show the indices in the tokeniser.}
    \label{fig:k3_l3_example1}
\end{figure}

\begin{figure}[tbp]
    \centering
    \includegraphics[width=\textwidth]{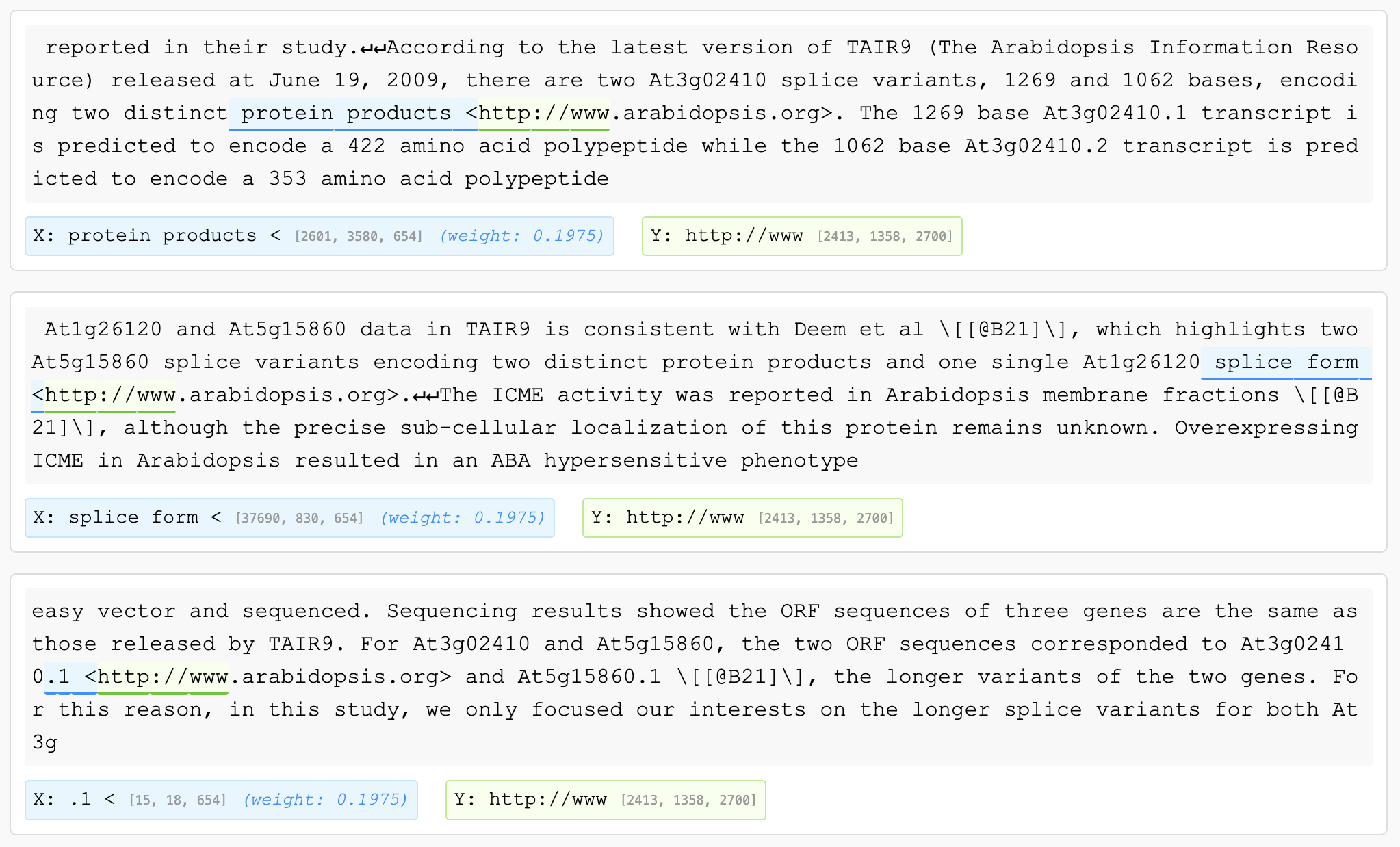}
    \caption{\textbf{Empirical modes.} We show an example of the $x,y$ pair which are heavily loaded in the first empirical mode for $k = 3$, $l = 3$ in our experiments on the Pile. Shown are three text samples. Next to each $X, Y$ token sequence we show the indices in the tokeniser.}
    \label{fig:k3_l3_example2}
\end{figure}

\section{Higher-Order Tensors}\label{section:hod_tensor}

In the main text, we used SVD to provide a factorization of the fundamental tensor. For $k > 2$, there is no canonical factorization of $A_k$ and the choice of factorisation becomes an additional consideration. Natural candidates include the CANDECOMP/PARAFAC (CP) decomposition \cite{harshman1970foundations, carroll1970analysis}, the Tucker decomposition \cite{tucker1966some}, and matrix-product states (MPS) or tensor-trains \cite{oseledetsTensorTrainDecomposition2011}. For more background see \citep{kolda2009tensor}.

Even once a particular kind of decomposition is chosen, there are still additional choices that can be made, which affect the ultimate basis of function space that we end up with. We focus on the Tucker decomposition of $A_k \in (\mathbb{R}^\Sigma)^{\otimes k}$. Set $V = \mathbb{R}^\Sigma$ and write $V_i$ for labelled copies of $V$ so
\[
(\mathbb{R}^\Sigma)^{\otimes k} = V_1 \otimes \cdots \otimes V_k\,.
\]
We now allow for rearrangement and grouping of these tensor factors. More precisely we assume chosen a partition $P_1, \ldots, P_r$ of $\{ 1, \ldots, k \}$ and set $V'_j = \bigotimes_{i \in P_j} V_i$ so there is a canonical isomorphism
\begin{equation}\label{eq:grouping_tucker}
(\mathbb{R}^\Sigma)^{\otimes k} \cong V'_1 \otimes \cdots \otimes V'_r\,.
\end{equation}
Under this isomorphism we view $A_k$ as a tensor in $\bigotimes_j V'_j$. Using the word basis to identify vector spaces with their duals, we consider for each $1 \le j \le r$ the isomorphism
\[
V'_1 \otimes \cdots \otimes V'_r \cong \Big[ \bigotimes_{k \neq j} V'_k \Big]^* \otimes V'_j\,.
\]
As an element of the right hand side $A_k$ determines a linear transformation with codomain $V'_j$ and we let $\{ v^{(j)}_\alpha \}_{\alpha \in \Lambda_j}$ be an orthonormal basis for $V'_j$ consisting of the left singular vectors of an SVD decomposition of this transformation. The tensor products of these bases over $j$ give an orthonormal basis of $V'_1 \otimes \cdots \otimes V'_r$ with respect to which we can write, as elements of $\bigotimes_j V'_j$,
\begin{equation}\label{eq:tucker_decomp_ak}
A_k = \sum_{\alpha \in \Lambda} \lambda_{\alpha} v^{(1)}_{\alpha_1} \otimes \cdots \otimes v^{(r)}_{\alpha_r} 
\end{equation}
where $\Lambda = \Lambda_1 \times \cdots \times \Lambda_r$ and $\lambda_{\alpha} \in \mathbb{R}$. This is called the \emph{Tucker decomposition} with respect to the grouping \eqref{eq:grouping_tucker}. Usually the Tucker decomposition is presented using the \emph{core tensor}
\begin{equation}\label{eq:tucker_decomp_truncate}
G = \sum_{\alpha \in \Lambda} \lambda_\alpha \alpha_1 \otimes \cdots \otimes \alpha_r \in \mathbb{R}^{\Lambda_1} \otimes \cdots \otimes \mathbb{R}^{\Lambda_r} \cong \mathbb{R}^\Lambda
\end{equation}
which can be contracted against tensors $U^{(j)} = \sum_{\alpha_j} \alpha_j^* \otimes v^{(j)}_{\alpha_j}$ to produce $A_k$. The idea is that if the core tensor $G$ is much lower rank than $A_k$, then the Tucker decomposition has succeeded in representing that tensor in a compressed form; moreover, since the sets $\Lambda_j$ can be ordered by the corresponding singular values, by truncating the sum in \eqref{eq:tucker_decomp_truncate} we obtain a natural way of discarding ``higher order'' information in $A_k$.

These more general tensor decompositions provide a ``drop in'' replacement for the orthonormal bases produced by SVD in the main text. Additionally, we can use \emph{multiple} different tensor decompositions for $A_k$ and use the union of the orthonormal bases thus obtained; in this way we get an overcomplete basis rather than a standard basis. Ultimately this may be the more appropriate way of thinking about modes in models like transformers, which are likely to learn modes for multiple distinct factorisations of the fundamental tensor.

\end{document}